\documentclass{article}
\usepackage[margin=1.5in]{geometry}


\usepackage[square,numbers]{natbib}
\usepackage{amssymb}

\usepackage[utf8]{inputenc} 
\usepackage{url}            
\usepackage{booktabs}       
\usepackage{amsfonts}       
\usepackage{nicefrac}       
\usepackage{microtype}      
\usepackage{hyperref}       
\usepackage{xcolor}         



\usepackage{wrapfig}
\usepackage{amsthm}
\usepackage{amsmath}

\usepackage{subfigure} 
\usepackage{enumitem}
\usepackage{natbib}

\usepackage{algorithm}
\usepackage{algorithmic}

\usepackage{thmtools, thm-restate} 

\newtheorem{theorem}{Theorem}
\newtheorem{lemma}{Lemma}
\newtheorem{corollary}{Corollary}
\newtheorem{assumption}{Assumption}

\theoremstyle{remark}
\newtheorem{remark}{Remark}

\usepackage{mathtools}

\author{Liran Szlak, Ohad Shamir}

\title{Convergence Results For Q-Learning With Experience Replay}

%

\begin{document}

\maketitle

\begin{abstract}
A commonly used heuristic in RL is experience replay (e.g.~\citet{lin1993reinforcement, mnih2015human}), in which a learner stores and re-uses past trajectories as if they were sampled online. In this work, we initiate a rigorous study of this heuristic in the setting of tabular Q-learning. We provide a convergence rate guarantee, and discuss how it compares to the convergence of Q-learning depending on important parameters such as the frequency and number of replay iterations. We also provide theoretical evidence showing when we might expect this heuristic to strictly improve performance, by introducing and analyzing a simple class of MDPs. Finally, we provide some experiments to support our theoretical findings. 
\end{abstract}

\section{Introduction}
Q-learning is a well-known and commonly used algorithm for reinforcement learning. In recent years, a technique referred to as experience replay \citep{lin1993reinforcement, mnih2015human} has been suggested as a mechanism to improve Q-learning by allowing the learner to access previous experiences, and use them offline as if they were examples currently sampled from the world. It has been suggested that using past experiences in such a way might allow Q-learning to better converge to the optimal Q values, by breaking the time and space correlation structure of experiences as they are sampled from the real world, allowing for policy updates not dependent on the current time and location in the markov decision process. Moreover, using experience replay improves the efficiency of data usage, since every experience is used for learning more than once. This can be useful in situations where data acquirement is costly or difficult.
Experience replay draws inspiration from a biological phenomenon- in animals and humans, replay events occur following a learning session. \citet{momennejad2017offline} demonstrate in an fMRI study that experience replay occurs in short breaks during learning, and that replay correlates to performance in the task. Thus, establishing a theoretical understanding of this phenomenon is of interest to both the machine learning and the neuroscience communities.

Experience replay can be incorporated into various RL algorithms, and various methods for sampling from the memory buffer have been suggested and experimentally explored in different tasks. \citet{schaul2015prioritized} and \citet{wang2016sample} use prioritized experience replay, prioritizing experiences which are important (in some sense) over others, to get remarkable results on the well-known Atari benchmark. 
\citet{liu2017effects} use uniformly sampled experiences and show that the memory buffer size is a critical hyper-parameter, where a buffer too small doesn't improve the convergence rate of Q-learning, and a buffer too large causes over-fitting and deteriorates performance. 
\citet{zhang2017deeper} experimentally demonstrate for both tabular and linear function approximation Q-learning that combining online experience with replayed experience in every iteration allows for faster convergence and stable learning under different memory buffer sizes.
\citet{wang2019boosting} propose a replay scheme where recent experiences have a higher probability to be chosen, emphasizing recent experiences over older ones.
Unfortunately, not much work has been done on theoretically analyzing the heuristic of experience learning. An interesting theoretical work by \citet{vanseijen2015deeper} shows the relation between learning and planning, and formulates experience replay as a planning step as in the famous Dyna algorithm.
However, a thorough investigation of the theoretical origins for its advantages and disadvantages in various settings is still in need. Here, we attempt to make a first step in this direction.

In order to do so, in this work we focus on tabular Q-learning and uniform sampling from a replay buffer. We show that Q-learning with uniform experience replay converges to the optimal Q values, and analyze its convergence rate. Our results indicate that the convergence rate of Q-learning with experience replay is comparable to that of vanilla Q-learning, suggesting that experiences sampled from the memory buffer can replace learning iterations in the real world, although not necessarily speeding up convergence in terms of the total amount of iterations needed for convergence. Our bound also indicates that using experience replay too extensively may result in a significantly higher total number of iterations compared to the number of iterations required without replay. This is explained by the error accumulated by overfitting to the experiences in the buffer, and the need to continually correct it with online experiences. However, maybe surprisingly, we get that as long as we continue to sample from the real world at a sufficient rate, even extensive experience replay will not cause divergence. Our theoretical findings correspond to recent empirical findings showing how the replay ratio (number of replay updates per environment step) affects performance \citep{fu2019diagnosing, fedus2020revisiting}.

We then continue to present a specific scenario in which experience replay strictly and provably improves performance compared to standard Q-learning. Specifically, we define and discuss a class of MDPs, where some critical transitions in the environment are rare with respect to the time horizon. In those environments, which draw inspiration from biological phenomenon observed in humans \citep{momennejad2017offline}, we prove that Q-learning will not converge to the optimal policy, while running experience replay after the end of the online learning phase will allow convergence to the optimal Q values.
Finally, we provide experiments complementing our theoretical findings and discusses implications of this work.  
\section{Algorithm and setting}
\label{sec:algorithmSetting}
\paragraph{Setting and notations}
A reinforcement learning problem can be formulated as a Markov decision process (MDP), $(S,A,P,R)$, where $S$ is a set of states, $A$ is a set of actions, $P$ is a transition probability: $\forall s,s',a : p_{s,s'}(a) := p(s'|s,a)$. For all state-action pairs, $\{r_t(s,a)\}_{t = 1}^T$ is a bounded reward random variable with $E[r_t(s,a)] = R(s,a)$, $\forall s,a: |r_t(s,a)| \leq R_{max}$, and rewards are sampled i.i.d over all rounds per state-action pair.
At every time step $t = 0,1,2,...$, the learning agent perceives the current state of the environment $s_t$, and chooses an action $a_t$ from the set of possible actions at that state. The environment then stochastically moves to a new state $s_{t+1}$ according to the transition probability $P$, and sends a numerical reward signal $r_t(s_t, a_t)$.
The learner's objective is to learn a policy $\pi$, that maps states $s \in S$ to actions $a \in A$, such that it maximizes the expected discounted future reward from each state $s$, i.e. maximize $V^{\pi} \left( s \right) = \mathbb{E} \left[ \sum_{i=0}^\infty \gamma^i \cdot r_i | s_0,  \pi \right]$, where $\gamma \in (0,1)$ is the discount factor which discounts the value of rewards in the future.
\paragraph{Algorithm}
We consider the well-known Q-learning algorithm, where we add $M$ iterations of experience replay every $K$ iterations. We focus here on the tabular setting and uniform sampling from the memory buffer. For clear reasons, tabular Q-learning is unsuitable for large or continuous environments, however, even in the tabular setting, it is not a-priori clear how experience replay would affect convergence. On the one hand, replay allows for faster propagation of information throughout the Q-values table, as well as for increased data efficiency and thus should improve convergence rate. On the other hand, sampling from a replay buffer introduces additional error due to sampling from an empirical distribution of experiences rather than from the real distribution. Here, we analyze the convergence rate of Q-learning with experience replay to understand how these factors affect convergence, compared to regular Q-learning. We note that for simplicity, we do not bound the size of the memory buffer. However, this is a necessary first step in performing a rigorous analysis of the effects of replay on the convergence rate, and may be alleviated by ensuring a sufficient representation of the state-action space in the buffer or by controlling the transition distribution in the buffer over time. The algorithm proceeds as follows: at every iteration $t$, the learner perceives $s_t$, chooses an action $a_t$, receives a reward $r_t(s_t,a_t)$ and transitions to a new state $s_{t+1}$. Then $Q(s_t,a_t)$ is updated: $Q_t(s_t,a_t) = (1-\alpha_t(s_t,a_t)) \cdot Q_{t-1}(s_t,a_t) + \alpha_t(s_t,a_t) \cdot \left( r_t + \gamma \cdot max_{a'} Q_{t-1}(s_{t+1},a') \right)$, and the transition $(s_t,a_t,s_{t+1},r_t)$ is stored in the memory buffer. Every $K$ iterations, $M$ iterations of replay are performed: a uniformly sampled memory $(s_1,a,s_2,r)$ is sampled from the buffer, and $Q(s_1,a)$ is updated using the regular update of Q-learning. An explicit algorithm is specified in appendix~\ref{app:alg}.
\section{Convergence rate of Q-Learning with experience replay}
\label{sec:convergenceRateProofSec}
We now turn to analyze the rate of convergence of the algorithm.
Here, we are showing a finite-time convergence result. We also give an asymptotic convergence result with a simpler proof in appendix~\ref{app:asymptoticProof}. 
We give results for two settings, a simplified 'synchronous' setting which allows us to analyse the convergence rate in a simple, albeit non realistic setting, and the standard 'asynchronous' setting, for which we derive results as a corollary from the simple synchronous case. In the 'synchronous Q-learning' setting~\citep{even2003learning}, all state-action pairs are being updated at every iteration, simultaneously. In this setting, at every iteration we sample for each $(s,a)$ pair a reward and a transition to a new state, and update the Q-values of all pairs simultaneously, using the Q values of the previous iteration. Sampling from the replay buffer is done by uniformly sampling an experience of each pair (s,a) from all of its memories stored in the buffer. The simplicity of the synchronous setting allows us to analyze the convergence rate of Q-learning with experience replay and easily derive from it an analysis for the asynchronous setting. The explicit algorithms of synchronous and asynchronous Q-learning with replay are found in appendix~\ref{app:synchronousQlearning}, \ref{app:alg}.
\paragraph{Convergence rate results}
In what follows we consider the learning rate $\alpha_t(s,a) = \frac{1}{n(s,a)}$ where $n(s,a)$ is the number of updates done on $Q(s,a)$ until time $t$ (including $t$), for all state-action pairs that are updated at iteration $t$, and $\alpha_t(s,a) = 0$ for the rest. 
We first give results for the simplified synchronous setting (Theorem~\ref{theorem:convergenceRate}), and then for the asynchronous setting (corollary~\ref{corollary:convergenceSingleUpdatePerStep}) by slight modifications to the former analysis. 
In the following statements and throughout the paper we use the notations: $V_{max} := \sum_{i=0}^\infty \gamma^i \cdot R_{max} = \frac{R_{max}}{1-\gamma}$, and $D_0 := V_{max} + max(\|Q_0\|_\infty,V_{max})$, where $Q_0$ are the initial Q-values.
We assume the following assumption in our analysis:
\begin{assumption}
\label{assum:ConditionsForConvergence}
There exists some constant $c \in (0,1)$ s.t. from any initial state $s$, within $\frac{|S||A|}{c}$ iterations, all state-action pairs are visited.
\end{assumption}
Note that assumption~\ref{assum:ConditionsForConvergence} requires the algorithm to have enough exploration in order to have enough samples of each state-action pair in the memory buffer. However, it does not require the algorithm to have a stationary distribution of the exploration strategy. Through assumption 1, we get that at every time $t$, for all $s \in S, a \in A$, the number of samples in the memory buffer of transitions from state $s$ with action $a$ is at least $\frac{c \cdot t}{|S| \cdot |A|}$. Importantly, $c$ captures the effect of experience replay on the bound, since $c$ itself is bounded by the parameters of the replay we perform. Specifically, $c \leq \frac{K}{M+K}$ (proof in appendix~\ref{app:bound_on_c}). Thus the effect of replay on convergence rate is captured in $c$.
\begin{remark}
\label{remark-assump}
Assumptions of this type are generally essential for the theoretical analysis of Q-learning. 
An assumption equivalent to assumption~\ref{assum:ConditionsForConvergence}, of a 'covering time', a constant time window in which all state-action pairs are visited, was also assumed in proving the convergence rate of Q-learning \citep{even2003learning}. Similar assumptions on a minimum state-action occupancy probability, mixing time, or covering time were assumed in \citep{li2020sample, beck2012error, qu2020finite}. \citet{liu2017effects} show that under-representation of state-action pairs in the buffer can lead to sub-optimal results when using experience replay. 
Hence, this assumption is critical and standard for achieving stable convergence to the optimal policy. We note that such an assumption is satisfied if we have a generative model from which we can choose a sampling distribution, or in highly mixing domains where $c$ is related to the notion of hitting time.
\end{remark}
Despite remark~\ref{remark-assump}, we note that assumption~\ref{assum:ConditionsForConvergence} is strict and difficult to prove that happens with high probability in arbitrary MDPs, as this depends on the markov chain underlying the decision process as well as the Q-values themselves during learning. Thus, we provide a probabilistic relaxation of this assumption (at the cost of a more involved bound): we show that if assumption~\ref{assum:ConditionsForConvergence} holds with probability $\frac{1}{2}$, then for any $\delta \in (0,1)$ with probability $\geq 1- \delta$, for all $t \in \{1,...T\}$ within $\frac{|S||A|}{c} \log_2(\frac{T}{\delta})$ iterations all state-action pairs are visited.  We use this new probabilistic assumption to bound the convergence rate in the synchronous setting, similarly to Theorem~\ref{theorem:convergenceRate}. Unfortunately, in the asynchronous setting, the stricter assumption is required (at least in the analysis we bring here). Notably, this is also the case for Q-learning without experience replay \citep{even2003learning}. The full details of the analysis and convergence rate under the relaxed assumption are found in appendix~\ref{app:assumption_1_relaxation}. 
\begin{theorem}
\label{theorem:convergenceRate}
Let $Q_T$ be the Q-values of synchronous Q-learning with experience replay after $T$ updates using a learning rate $\alpha_t(s,a) = \frac{1}{n(s,a)}$, where $n(s,a)$ is the number of updates done on $Q(s,a)$ until iteration $t$. Let $\epsilon_1 > 0$, $\delta \in (0,1)$, $\gamma \in (0,1)$.
For all $s \in S, a \in A, t$: let $|r_t(s,a)| \leq R_{max} < \infty$.
Then, under assumption~\ref{assum:ConditionsForConvergence}, with probability at least $1 - \delta$, $\| Q_t - Q^\ast\|_{\infty} \leq \epsilon_1$ for all $t \geq T$ with: 
$T = \tilde{\Omega} \left( 3^{\frac{2}{1-\gamma} log(\frac{D_0}{\epsilon_1})} \cdot \frac{|S| |A| \cdot R_{max}^2 \cdot \log(\frac{|S||A|}{\delta})  }{c \cdot (1-\gamma)^4 \epsilon_1^2 } \right)$
\end{theorem}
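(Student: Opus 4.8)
The plan is to adapt the iterated-contraction argument of \citet{even2003learning} so as to absorb the extra bias that replay introduces by drawing successor states from the empirical buffer distribution rather than from $P$. Write $\Delta_t := Q_t - Q^\ast$. Since $\alpha_t(s,a) = 1/n(s,a)$, the update makes each $Q_t(s,a)$ the running average of the $n=n(s,a)$ targets applied to $(s,a)$ (online and replayed alike), so
$$\Delta_t(s,a) = \frac{1}{n}\sum_{i=1}^{n}\Big( r_i - R(s,a) + \gamma \max_{a'} Q_{\tau_i}(s_i',a') - \gamma \!\sum_{s'} p_{s,s'}(a)\max_{a'}Q^\ast(s',a')\Big),$$
where $\tau_i$ is the iteration of the $i$-th update and $s_i'$ the successor used. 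I would split each summand into (i) a reward-noise term $r_i - R(s,a)$; (ii) a transition-noise term $\gamma\big(\max_{a'}Q_{\tau_i}(s_i',a') - \mathbb{E}_{s'\sim\mu_i}[\max_{a'}Q_{\tau_i}(s',a')]\big)$, where $\mu_i$ is the law the successor was drawn from ($P$ for online, the empirical $\hat p$ for replayed); (iii) a bias term $\gamma\,\mathbb{E}_{s'\sim\mu_i}[\max_{a'}Q_{\tau_i}] - \gamma\,\mathbb{E}_{s'\sim P}[\max_{a'}Q_{\tau_i}]$, which vanishes identically for online samples; and (iv) the contraction term $\gamma\,\mathbb{E}_{s'\sim P}[\max_{a'}Q_{\tau_i}(s',a') - \max_{a'}Q^\ast(s',a')]$, bounded in absolute value by $\gamma\max_i\|\Delta_{\tau_i}\|_\infty$.

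Next I would control the stochastic terms. Because the averaging weights are exactly $1/n$, the partial sums of (i) and (ii) form a bounded martingale with increments of order $V_{max}$, so Azuma--Hoeffding gives that their average is $O\!\big(V_{max}\sqrt{\log(1/\delta')/n}\big)$ with probability $1-\delta'$. For the bias (iii), Assumption~\ref{assum:ConditionsForConvergence} guarantees the buffer holds at least $ct/(|S||A|)$ samples of every pair, so concentration of the empirical reward and the empirical transition vector $\hat p$ against $P$ yields $|(iii)| = O\big(V_{max}\,\|\hat p - P\|_1\big)$ shrinking at the same $1/\sqrt{\text{(buffer count)}}$ rate; this is the step that makes replay harmless, since fresh online samples keep the buffer growing and force $\hat p \to P$. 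Folding (i)--(iii) into a single effective-noise bound $b(n) = O\big(V_{max}\sqrt{\log(|S||A|/\delta)/n}\big)$ after a union bound over all pairs, I obtain the one-step estimate $\|\Delta_t\|_\infty \le \gamma\max_i\|\Delta_{\tau_i}\|_\infty + b(n)$.

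The core is then the iterated contraction. I would set a decreasing envelope $D_{k+1} = \rho\,D_k$ with $\rho := \tfrac{1+\gamma}{2}\in(\gamma,1)$, starting from $\|\Delta_0\|_\infty \le D_0$, and show inductively that once every pair has accumulated enough updates for $b(n)$ to drop below $\tfrac{1-\gamma}{2}D_k$ and the pre-level-$k$ history has been averaged down, the error falls from $D_k$ to $D_{k+1}$. Under the $1/n$ rate, each level requires the per-pair count to grow by a constant factor (roughly $3$) to suppress the contribution of the earlier, larger errors, so after $\ell$ levels one needs $\sim 3^{\ell}$ times the base count. Since $\log(1/\rho)=-\log\!\big(1-\tfrac{1-\gamma}{2}\big)\ge \tfrac{1-\gamma}{2}$, reaching accuracy $\epsilon_1$ costs $\ell = \lceil\log_{1/\rho}(D_0/\epsilon_1)\rceil \le \tfrac{2}{1-\gamma}\log(D_0/\epsilon_1)$ levels, producing exactly the factor $3^{\frac{2}{1-\gamma}\log(D_0/\epsilon_1)}$. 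Inverting $b(n)\lesssim (1-\gamma)\epsilon_1$ gives the base count $\frac{R_{max}^2\log(|S||A|/\delta)}{(1-\gamma)^4\epsilon_1^2}$ (two powers of $1/(1-\gamma)$ from $V_{max}^2$, two more from needing the noise below a $(1-\gamma)$-fraction of $\epsilon_1$), and Assumption~\ref{assum:ConditionsForConvergence}'s lower bound $ct/(|S||A|)$ on the per-pair buffer count converts ``samples per pair'' into ``iterations $t$'' at the cost of the $|S||A|/c$ factor, assembling the stated $T$.

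I expect the main obstacle to be step (iii): not merely bounding the replay bias, but showing it is dominated by the contraction so it can ride inside the same envelope recursion as the genuine noise, and tracking cleanly how $c$ enters. The difficulty is that the empirical $\hat p$ used at iteration $t$ is itself built from the agent's non-i.i.d., policy-dependent trajectory, so the concentration of $\hat p$ to $P$ and the martingale structure of terms (i)--(ii) must be established jointly over a single high-probability event; and the level-dependent bookkeeping that yields the constant factor $3$ per level must be checked to ensure the replay updates, which share the counter $n(s,a)$ with the online ones, do not inflate it.
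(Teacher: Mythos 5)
Your proposal follows essentially the same route as the paper's proof: the same decomposition of the error into a contraction/deterministic part and a noise part, the same envelope recursion $D_{k+1}=\frac{1+\gamma}{2}D_k$ with a factor-of-$3$ growth in the per-pair update count at each level (the paper's Lemma~\ref{lemma:deterministicPartConvergenceTime}), Hoeffding/Azuma for the zero-mean online and martingale parts, and Assumption~\ref{assum:ConditionsForConvergence}'s lower bound of $ct/(|S||A|)$ buffer samples per pair to control the replay bias (the paper's Lemma~\ref{lemma:meanOfUnknownSampleSize}), assembled by a union bound over levels and state-action pairs. The obstacle you flag — that the replay bias has nonzero conditional mean and must be shown to shrink inside the same recursion — is exactly the step the paper identifies as the heart of its argument and handles the same way.
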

Here, the $\tilde{\Omega}$ notation hides constants and factors logarithmic in $\gamma$ and double logarithmic in $\epsilon_1, D_0$.\\
Theorem~\ref{theorem:convergenceRate} gives results for the synchronous setting, and we get the results for the asynchronous case in the following corollary by slight modifications to the proof of Theorem~\ref{theorem:convergenceRate}.
\begin{corollary}
\label{corollary:convergenceSingleUpdatePerStep}
Let $Q_T$ be the value of Q-learning with experience replay after $T$ iterations, using a learning rate $\alpha_t(s,a) = \frac{1}{n(s,a)}$ where $n(s,a)$ is the number of updates done on $Q(s,a)$ until time $t$, and $\alpha_t(s,a) = 0$ for state-action pairs that are not updated at iteration $t$. Let $\epsilon_1 > 0$, $\delta \in (0,1)$, $\gamma \in (0,1)$.
Then, under assumption~\ref{assum:ConditionsForConvergence}, for 
\begin{align*}
& T = \tilde{\Omega} \left( \left(\frac{3 |S||A|}{c} \right)^{\frac{2}{1-\gamma} log(\frac{D_0}{\epsilon_1})} \cdot \left( \frac{ |S| |A|  R_{max}^2 \log(\frac{|S||A|}{\delta}) }{c \cdot (1-\gamma)^4 \epsilon_1^2 } + log(\frac{|S||A|}{c^3}) \right) \right)
\end{align*}
$ \forall t \geq T: \| Q_t - Q^\ast\|_{\infty} \leq \epsilon_1 $ with probability at least $1 - \delta$.
\end{corollary}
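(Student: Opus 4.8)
The plan is to derive Corollary~\ref{corollary:convergenceSingleUpdatePerStep} by re-running the phase-based contraction argument behind Theorem~\ref{theorem:convergenceRate}, but replacing the synchronous ``one update per pair per iteration'' accounting with the covering-time guarantee of Assumption~\ref{assum:ConditionsForConvergence}. The single fact I would carry over from the assumption is that after $t$ asynchronous iterations every pair $(s,a)$ has been updated at least $n(s,a) \geq \frac{c\,t}{|S||A|}$ times, with $n(s,a) \leq t$ trivially. Everything in the synchronous proof that is phrased in terms of the number of updates $n(s,a)$ --- in particular the martingale/Hoeffding concentration of the stochastic part of each update and the $\log(|S||A|/\delta)$ union bound over pairs --- transfers verbatim; only the translation from ``number of updates'' to ``number of iterations'' changes.

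Concretely, I would first recall that the synchronous proof drives $\|Q_t - Q^\ast\|_\infty$ below $\epsilon_1$ through $L = \Theta\!\left(\frac{1}{1-\gamma}\log\frac{D_0}{\epsilon_1}\right)$ phases, where at the end of phase $k$ the error is bounded by some geometrically shrinking $D_k$, and passing from $D_k$ to $D_{k+1}$ requires each pair to accumulate roughly a constant multiple (the factor $3$ appearing in the base) more updates, so that with the $\frac{1}{n(s,a)}$ rate the weight carried by the pre-phase-$k$ updates is contracted below the phase threshold. In the synchronous case a phase is completed in as many iterations as the required per-pair update count, and the $\tilde{\Omega}$ bound is then the per-pair sample budget $\frac{|S||A| R_{max}^2 \log(|S||A|/\delta)}{c(1-\gamma)^4\epsilon_1^2}$ inflated by the cumulative phase factor $3^{2L}$.

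The asynchronous modification is to re-define the phase boundaries in terms of covering windows. A contraction step for a pair $(s,a)$ can only be certified once the successor values $\max_{a'}Q(s',a')$ feeding its updates already sit at the previous phase's level for \emph{every} reachable pair; guaranteeing this for all pairs simultaneously costs one covering window of $\frac{|S||A|}{c}$ iterations, during which each pair is updated only a $\frac{c}{|S||A|}$ fraction of the time. Hence each phase-to-phase growth factor becomes $3\cdot\frac{|S||A|}{c}$ rather than $3$, and propagating this through all $L$ phases replaces the synchronous base $3$ by $\frac{3|S||A|}{c}$, which is exactly the exponential prefactor in the corollary. The additive $\log(|S||A|/c^3)$ term is lower-order bookkeeping from the initial covering window needed to start the recursion, and the polynomial sampling factor is otherwise unchanged because the worst-case per-pair update count is still $\Theta(c\,t/|S||A|)$.

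The main obstacle I anticipate is controlling the \emph{staleness and staggering} that the synchronous argument sweeps away: in the asynchronous setting different pairs have different, non-synchronized update counts $n(s,a)$, so an update of $Q(s,a)$ may read a successor value that is at a less advanced phase. The delicate work is to fix the phase thresholds so that, at the start of each phase, all pairs provably satisfy the prior bound at once --- this is where the covering window (and the fact that $c \leq \frac{K}{M+K}$ still governs it) is essential --- and then to re-verify that the geometric contraction survives with the now pair-dependent learning rate $\frac{1}{n(s,a)}$ lower-bounded via $n(s,a)\geq \frac{c\,t}{|S||A|}$. Re-deriving the martingale concentration against this non-uniform, data-dependent update schedule is the other place where care is required, though it should follow the same Azuma/Hoeffding route as the synchronous case once the stopping times defining the phases are pinned down.
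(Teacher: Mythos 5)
Your proposal follows essentially the same route as the paper's own proof: the paper likewise reuses the synchronous phase decomposition into $Y_t$ and $W_t$, replaces the per-iteration update accounting with the covering window of $\frac{|S||A|}{c}$ iterations from Assumption~\ref{assum:ConditionsForConvergence} (so that $t_{k+1} = 3\cdot\frac{|S||A|}{c}\cdot t_k$, giving the base $\frac{3|S||A|}{c}$), and re-runs the Hoeffding/Azuma concentration with the enlarged phase intervals. The only minor discrepancy is the provenance of the additive $\log(\frac{|S||A|}{c^3})$ term, which in the paper arises from the union bound over the now $\Theta\bigl(\frac{(|S||A|)^2}{c^2}t_k\bigr)$-length phase interval when solving the implicit inequality for $t_k$, rather than from an initial covering window, but this does not affect the correctness of your outline.
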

It is important to emphasize the role of $c$ in the bounds we get in Theorem~\ref{theorem:convergenceRate} and corollary~\ref{corollary:convergenceSingleUpdatePerStep}. As mentioned, $c \leq \frac{K}{M+K}$ meaning that the more replay iterations we do (i.e $M \rightarrow \infty$) or the more frequently we perform replay (i.e $K \rightarrow 1$), the higher will be the number of iterations required for convergence. Thus, $c$ is a critical parameter in the bound, encapsulating the dependence between performance and the replay parameters. This stems from the fact that although replay saves us real-world iterations drawn from the MDP, it also introduces an additional error due to sampling from an empirical distribution rather than from the actual transition distribution. Thus, any replay scheme has to compensate for this error by having a sufficient sampling rate from the MDP.

The convergence rate we get is comparable to the convergence rate of Q-learning with a linear learning rate as given in \cite{even2003learning}, where the bound they give for synchronous Q-learning is $\Omega ( (2+ \Psi)^{\frac{2}{1-\gamma}\cdot log(\frac{V_{max}}{\epsilon_1})} \cdot \frac{R_{max}^2 log(\frac{2|S||A| R_{max}}{\delta (1-\gamma)^2 \Psi \epsilon_1})}{\Psi^2 (1-\gamma)^4 \epsilon_1^2})$ where $\Psi$ is some positive constant. For the asynchronous case, they give a bound of $\Omega ( (L+\Psi L+1)^{\frac{2}{1-\gamma}\cdot log(\frac{R_{max}}{\epsilon_1})} \cdot \frac{V_{max}^2 log(\frac{2|S||A| R_{max}}{\delta (1-\gamma)^2 \Psi \epsilon_1})}{\Psi^2 (1-\gamma)^4 \epsilon_1^2} )$, where $L$ indicates the 'covering time' constant - a time interval in which all state-action pairs are visited. 
The bounds we get are comparable to the ones in \cite{even2003learning} in the sense that the dependence on the parameters $\delta, \epsilon_1, \gamma, V_{max}$ are similar, but furthermore, our parameter $\frac{1}{c}$ can be viewed as an equivalence to the scaled covering time $\frac{L}{|S||A|}$ - $L$ represents the time it takes to cover all state and actions, whereas $c$ represents the fraction of transitions seen per state and action. When we perform replay, we reduce $c$, causing the number of iterations needed for convergence to grow, similarly to the effect caused by a larger covering time $L$. Indeed, the dependence on $c$ is much stronger, because we are not only accounting for the time it takes to update all state and action pairs, but also for the noise accumulation due to the finite number of samples in our memory buffer.   
The similarities in the bounds suggest that experience replay iterations can in some cases replace 'real-life' updates, and thus save us iterations in the real world. When we have enough memories, replaying is almost equivalent to sampling from the actual MDP. However, employing experience replay too extensively can lead to the opposite effect and highly increase the number of iterations needed for convergence, as reflected by the dependence on $c$ which in turn depends on the replay parameters $M,K$. As we perform M iterations of replay every K iterations in the real world, the requirement to have enough real samples of each state and action in the memory buffer at every iteration (assumption~\ref{assum:ConditionsForConvergence}) makes sure that one cannot simply run as many replay iterations as wanted, and that there must be an ongoing balance between real-world and replay iterations to ensure convergence, at least in the proof we provide here. These findings give theoretical support to experimental findings \citep{fu2019diagnosing, fedus2020revisiting} showing that algorithm performance is sensitive to the replay ratio - the number of replay updates per real world steps.

In this work, we use a linearly decreasing learning rate and thus compare our bounds to those of \citep{even2003learning} who use the same learning rate. However, it has been shown that a constant or a re-scaled linear learning rate lead to better convergence rate guarantees for Q-learning \citep{li2020sample, li2021q}. We conjecture that the effect of experience replay on the convergence rate of Q-learning with other learning rates will be similar to the effect we find here, however, a detailed analysis is deferred to future work.
\subsection{Proof sketches}
\label{sec:proofsSecInMain}
We now give proof sketches for our theorems. Full proofs are found in appendices~\ref{app:convergenceRateProof}, ~\ref{sec:corollaryProof}.
\paragraph{Synchronous setting}
We refer from this point on to the synchronous setting, i.e. every state-action pair is updated at every iteration, and thus $\alpha_t(s,a) = \frac{1}{t}$. We use the framework of iterative stochastic algorithms \citep{bertsekas1996neuro}, showing how Q-learning with experience replay can be written as an iterative stochastic algorithm (proof in Appendix~\ref{app:IterativeStocahsticAlg}), similar to the formulation in \cite{bertsekas1996neuro}.
We start by showing that the difference between the Q values and the optimal Q values, $Q(s,a)-Q^\ast(s,a)$, can be decomposed to two terms, $Y_t(s,a)$, the 'deterministic' term, which describes the process of convergence that would take place if all updates were done using the expected value of all random variables, and $W_t(s,a)$, the 'error' term, that holds the error accumulated by the stochastic sampling of the random variables. Using this decomposition, we show that we can bound the distance between the Q values and the optimal Q values by increasingly smaller constants, which leads to convergence. This is a common technique in iterative stochastic algorithms. However, since we are sampling both from the MDP and from a memory buffer, in our case the error accumulated by sampling of memories does not have an expected value of $0$, as opposed to the error accumulated by sampling from the MDP. This is the main challenge in proving the convergence of Q-learning with experience replay. 

Using this decomposition, we recursively bound the difference $Q_t(s,a)-Q^\ast(s,a)$ by a sequence of distance constants: $D_{k+1} = (\gamma + \epsilon)\cdot D_k$, for $\epsilon = \frac{1 - \gamma}{2}$. Clearly, $D_k \rightarrow 0$ as $k \rightarrow \infty$, thus this results in convergence to the optimal Q values. We prove that for each $k$, there exists a time $t_k$ such that for all iterations following $t_k$ our Q values are not further than $D_k$ from the optimal ones.
\begin{restatable}{lemma}{convergenceInLimit}
\label{lemma:convergenceInLimit}
Let $\gamma \in (0,1)$. For each $k$, $\exists t_k$ s.t. for all $t \geq t_k$, $\| Q_{t}(s,a) - Q^\ast(s,a) \| \leq D_k$.
\end{restatable}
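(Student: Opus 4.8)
The plan is to prove Lemma~\ref{lemma:convergenceInLimit} by induction on $k$, following the standard contraction argument for iterative stochastic algorithms adapted to the experience replay setting. The base case $k=0$ is immediate: by the definitions of $V_{max}$ and $D_0$, the initial Q-values satisfy $\|Q_0 - Q^\ast\|_\infty \leq D_0$, and we take $t_0 = 0$. For the inductive step, I would assume that there exists a time $t_k$ such that $\|Q_t - Q^\ast\|_\infty \leq D_k$ for all $t \geq t_k$, and show the existence of a later time $t_{k+1}$ such that $\|Q_t - Q^\ast\|_\infty \leq D_{k+1} = (\gamma + \epsilon) D_k$ for all $t \geq t_{k+1}$, where $\epsilon = \frac{1-\gamma}{2}$ so that $\gamma + \epsilon = \frac{1+\gamma}{2} < 1$.

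The heart of the argument uses the decomposition $Q_t(s,a) - Q^\ast(s,a) = Y_t(s,a) + W_t(s,a)$ mentioned in the proof sketch, where $Y_t$ tracks the deterministic contraction driven by the Bellman operator and $W_t$ accumulates the stochastic sampling error. First I would control the deterministic term: since the Bellman operator is a $\gamma$-contraction in the $\ell_\infty$ norm, starting from the bound $\|Q_{t_k} - Q^\ast\|_\infty \leq D_k$, the term $Y_t$ should decay geometrically, so after sufficiently many iterations it is bounded by, say, $(\gamma + \tfrac{\epsilon}{2}) D_k$. This establishes the existence of an intermediate time after which the deterministic part is safely below the target. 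Second, and this is where the replay setting departs from vanilla Q-learning, I would bound the error term $W_t$. The key obstacle is that $W_t$ does not have zero conditional expectation, because sampling from the memory buffer draws from an empirical transition distribution rather than the true distribution $P$. I would therefore split $W_t$ into a genuinely zero-mean martingale part (from the MDP samples and the reward noise) and a bias part (from the empirical-distribution mismatch in the replayed transitions). The martingale part can be driven to zero with high probability by the learning-rate schedule $\alpha_t = 1/n(s,a)$ together with a concentration bound (Azuma--Hoeffding or the martingale machinery of~\citep{bertsekas1996neuro}), exactly as in the replay-free case. The bias part is controlled by Assumption~\ref{assum:ConditionsForConvergence}: because the buffer contains at least $\tfrac{c \cdot t}{|S||A|}$ genuine samples per state-action pair, the empirical transition distribution concentrates around the true one, and the resulting bias shrinks as $t$ grows, again controllable by a concentration inequality over the empirical distribution.

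Combining the two pieces, for $t$ beyond some $t_{k+1} \geq t_k$ both $|Y_t(s,a)|$ and $|W_t(s,a)|$ are each at most $(\gamma + \tfrac{\epsilon}{2}) D_k$ and $\tfrac{\epsilon}{2} D_k$ respectively (uniformly over all $s,a$, via a union bound over the finitely many state-action pairs), so by the triangle inequality $\|Q_t - Q^\ast\|_\infty \leq (\gamma + \epsilon) D_k = D_{k+1}$, completing the induction. The main obstacle, and the genuinely novel part relative to~\citep{even2003learning}, is the treatment of the nonzero-mean error term $W_t$: one must carefully isolate the replay-induced bias from the true stochastic noise and show that Assumption~\ref{assum:ConditionsForConvergence} provides enough effective samples in the buffer to make that bias vanish at a rate compatible with the geometric contraction of the $D_k$ sequence. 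Since Lemma~\ref{lemma:convergenceInLimit} is an asymptotic (existence-of-$t_k$) statement rather than a rate statement, I would defer the explicit quantification of $t_{k+1}$ in terms of $\epsilon_1, \delta, c$ to the proof of Theorem~\ref{theorem:convergenceRate}, and here only argue that a finite $t_{k+1}$ exists with the required high-probability guarantee.
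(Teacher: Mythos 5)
Your proposal matches the paper's proof in all essentials: induction on $k$ with base case from the boundedness lemma, the decomposition into a deterministic term $Y_t$ driven to $\gamma D_k$ by the Bellman contraction and a noise term $W_t$ split into a zero-mean part plus a replay-induced bias that vanishes because the buffer accumulates enough genuine samples of each pair. The only cosmetic difference is that the paper establishes the decomposition as a two-sided inequality $-Y_t - W_{t:t_k} \leq Q_t - Q^\ast \leq Y_t + W_{t:t_k}$ (proved by an inner induction on $t$) rather than an exact equality, and handles the noise via an almost-sure limit (Lemma~\ref{lemma:noiseTerm}) rather than a high-probability bound, deferring all quantification to Theorem~\ref{theorem:convergenceRate} exactly as you propose.
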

Next, we compute what are the points $t_k$, which gives us the rate of convergence. 
In the proof of lemma~\ref{lemma:convergenceInLimit} we show that for every $s,a$ and $t \geq t_k$: 
$| Q_{t}(s,a) - Q^\ast(s,a) | \leq | Y_t(s,a) + W_{t:t_k}(s,a)|$
where $Y_t(s,a)$ holds the deterministic part of the error and $W_{t:t_k}(s,a)$ holds the stochastic part of the error, accumulated from iteration $t_k$ to $t$. 
Using this decomposition, we bound the number of iteration for $Y_t(s,a)$ to become smaller than $(\gamma + \frac{2 \epsilon}{3}) D_k$, and the number of iterations until $W_t(s,a)$ is smaller than $\frac{\epsilon}{3} D_k$. 
Together, this gives us the number of iteration to have $Y_t(s,a) + W_t(s,a) < D_{k+1}$.
\begin{restatable}{lemma}{deterministicPartConvergenceTime}
\label{lemma:deterministicPartConvergenceTime}
Let $\gamma \in (0,1)$, $\epsilon = \frac{1 - \gamma}{2}$, and assume at time $t_k$ we have $Y_{t_k}(s,a) = D_k$. Then, for all $t > 3 \cdot t_k$ we have $Y_t(s,a) < (\gamma + \frac{2\epsilon}{3})D_k$
\end{restatable}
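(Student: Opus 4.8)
The plan is to work directly with the recursion governing the deterministic term $Y_t(s,a)$ supplied by the iterative-stochastic-algorithm decomposition established earlier. In the synchronous setting every pair is updated each round, so $\alpha_t = \frac{1}{t}$, and the deterministic part of the update replaces the sampled target by its expectation, i.e.\ by the Bellman optimality operator $H$ applied to $Q_{t-1}$. Since $H$ is a $\gamma$-contraction in $\|\cdot\|_\infty$ and, by the inductive setup underlying Lemma~\ref{lemma:convergenceInLimit}, $\|Q_{t-1} - Q^\ast\|_\infty \le D_k$ for $t-1 \ge t_k$, the contracted target satisfies $|(HQ_{t-1})(s,a) - Q^\ast(s,a)| \le \gamma D_k$. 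Consequently the deterministic term obeys the linear recursion
\begin{equation*}
Y_t(s,a) = \left(1 - \tfrac{1}{t}\right) Y_{t-1}(s,a) + \tfrac{1}{t}\,\gamma D_k, \qquad Y_{t_k}(s,a) = D_k .
\end{equation*}
I would emphasize that this is precisely the recursion one obtains without replay: the empirical-distribution bias introduced by sampling memories carries no deterministic-mean contribution to $Y_t$ and is instead absorbed entirely into the stochastic term $W_t$, so this lemma is untouched by the replay parameters.

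Next I would solve the recursion in closed form. Its fixed point is $\gamma D_k$, so setting $u_t := Y_t - \gamma D_k$ eliminates the inhomogeneous term and gives $u_t = (1 - \frac{1}{t})\,u_{t-1}$. Iterating from $t_k$ produces the telescoping product
\begin{equation*}
u_t = u_{t_k} \prod_{\tau = t_k+1}^{t} \left(1 - \tfrac{1}{\tau}\right) = u_{t_k} \prod_{\tau = t_k+1}^{t} \tfrac{\tau-1}{\tau} = u_{t_k}\,\tfrac{t_k}{t},
\end{equation*}
and since $u_{t_k} = D_k - \gamma D_k = (1-\gamma)D_k$, I recover the explicit form $Y_t(s,a) = \gamma D_k + (1-\gamma) D_k \cdot \frac{t_k}{t}$.

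Finally I would substitute $1 - \gamma = 2\epsilon$ to write $Y_t(s,a) = \gamma D_k + 2\epsilon D_k \cdot \frac{t_k}{t}$, and note that the target bound $Y_t < (\gamma + \frac{2\epsilon}{3}) D_k$ is equivalent to $2\epsilon\,\frac{t_k}{t} < \frac{2\epsilon}{3}$, i.e.\ to $\frac{t_k}{t} < \frac{1}{3}$, which holds exactly when $t > 3 t_k$. I do not expect a genuine obstacle here: once the decomposition is in hand the computation is essentially mechanical, and the harmonic learning rate $\frac{1}{t}$ is what makes the homogeneous factor collapse to the clean ratio $\frac{t_k}{t}$. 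The only step requiring real care is the first one, namely verifying that the deterministic target is the Bellman image of $Q_{t-1}$ (hence contracts to within $\gamma D_k$) and that the replay-induced bias does not leak into $Y_t$; this separation is exactly what the iterative-stochastic-algorithm formulation of Appendix~\ref{app:IterativeStocahsticAlg} is designed to guarantee.
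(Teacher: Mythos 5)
Your proposal is correct and follows essentially the same route as the paper's proof: unroll the linear recursion $Y_{t+1} = (1-\alpha_t)Y_t + \alpha_t \gamma D_k$ to the closed form $Y_t = \gamma D_k + (1-\gamma)D_k\prod(1-\tfrac{1}{i})$, telescope the product under $\alpha_t = \tfrac{1}{t}$ to a ratio of the form $\tfrac{t_k}{t}$, and read off $t > 3t_k$ from the requirement that this ratio drop below $\tfrac{1}{3}$. The only (immaterial) difference is an off-by-one in which index the learning rate carries, which shifts the telescoped ratio between $\tfrac{t_k}{t}$ and $\tfrac{t_k-1}{t-1}$ but leaves the conclusion unchanged.
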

The above lemma tells us what is the number of updates that needs to be done on a Q-value of a state-action pair if updates were done using the expected reward and the expected next state Q value. 
Next we compute at what time point do we have $W_t \leq \frac{\epsilon}{3} D_k$. We show that by setting an appropriate $t_0$, the noise accumulated from $t_k$ until $t$, for all $t \in [t_k, t_{k+1}]$, is bounded by $\frac{\epsilon}{3} D_k$, recursively, for all $k$. Combining this with lemma~\ref{lemma:deterministicPartConvergenceTime} gives us the required convergence rate.
In order to show our noise term is bounded, we separate online and replay iterations, and bound the noise contribution of each separately.
We emphasize the difference in the noise term between online and replay iterations- for online iterations (samples from the MDP) the expected value of the noise term in every iteration, $w$, is $0$. On the other hand, for replay iterations, the expected value is not zero, since in these iterations the expectation of $r_t(s,a)$ and $s_{t+1}$ are not equal to the true expectations as in the MDP, but to the empirical means of the experiences collected in the buffer. We bound the probability that for a time point $m \in [t_{k+1}, t_{k+2}]$, $W_m(s,a)$ is large, i.e.: $\Pr(|W_m(s,a)| \geq \frac{\epsilon}{3}D_k)$.
First we note that since we are using $\alpha_t = \frac{1}{t}$, $W_m$ is in fact an average of samples $w_i$ for $i\in \{1,\ldots,m\}$, where some are online samples from the MDP, and some are samples from the memory buffer.
We denote online samples as $q_i$, and replay samples as $g_i$. 
Since we are taking $M$ replay iterations every $K$ online iterations, we have that out of the $m$ samples in $W_m$, $\frac{mM}{M+K}$ of then are replay samples, and $\frac{mK}{M+K}$ of them are samples from the MDP. 
We omit in the following analysis the state-action pair, and refer to a single and constant pair.  
We write $W_m$ as two sums, each summing over samples of one kind:
\begin{align*}
\Pr\left(|W_m| \geq \frac{\epsilon}{3}D_k\right) &\leq \Pr \left( \left| \frac{1}{t_k+m} \sum_{i=1}^{\frac{mM}{M+K}}g_i \right| \geq \frac{\epsilon}{6}D_k \right) + \Pr \left( \left| \frac{1}{t_k+m} \sum_{i=1}^{\frac{mK}{M+K}}q_i \right| \geq \frac{\epsilon}{6}D_k \right)
\end{align*}
We bound each part separately. For the second term, relating to samples from the MDP, we can simply use Hoeffding's inequality. For the first term, relating to the samples from the replay buffer, we use assumption~\ref{assum:ConditionsForConvergence} and Azuma's inequality. Here, importantly, we use the fact that the number of samples of $(s,a)$ in the memory at time $t$ is at least $\frac{c \cdot t}{|S| \cdot |A|}$, to show that with exponentially high probability, the empirical mean in the buffer is close to the true mean, allowing us to bound the additional error.

To conclude the proof, we compute for what $k$ do we have that $D_k \leq \epsilon_1$, which is the desired precision, and get $k \geq \frac{2}{1-\gamma} \log(\frac{D_0}{\epsilon_1})$. Finally, we use a union bound argument to show that by setting a sufficient $t_0$, namely $t_0 = \Omega ( \frac{|S| |A| \cdot \max(M,K) (4R_{max} + 4 \gamma V_{max})^2 \cdot \log(\frac{|S||A| N}{\delta})}{c \cdot \epsilon^2 \epsilon_1^2 (M+K)} )$, w.h.p the noise term is bounded: $|W_m| < \frac{\epsilon}{3}D_k$,  $\forall k, m \in [t_{k+1}, t_{k+2}]$. Taking this $t_0$ and $t_k = 3 \cdot t_{k-1}$ gives the result.
\paragraph{Asynchronous setting}
We specify the exact changes required in the proof in appendix~\ref{sec:corollaryProof}. The main idea here, is that we have a longer 'cycle' length until all state-action pairs are updated, compared to a single iteration as in the synchronous case. We use again assumption~\ref{assum:ConditionsForConvergence} in the following way - we notice that the 'cycle' time until all pairs are updated is $\frac{1}{c}|S||A|$ iterations.
This allows us to use a modification of the previous proof and gives corollary~\ref{corollary:convergenceSingleUpdatePerStep}. 
\section{Experience replay in environments with rare experiences}
\label{sec:rareExperiences}
The results in the previous section provide a convergence guarantee for Q-learning with experience replay, which is generally comparable to standard Q-learning. However, they do not imply that experience replay actually improve performance. In this section, we illustrate how experience replay is \emph{provably} beneficial in a simple class of MDPs. In particular, we consider MDPs in which some transitions are rare, and show that at least for certain finite time horizons, Q-learning may not reach close to the optimal Q values, whereas sufficient experience replay following the online learning phase will result in convergence to the exact optimal Q values. This is due to the nature of the sequential updating of Q-learning and the correlation in the sequence of updates in terms of the states visited - outcomes of rare experiences will take a long time to propagate through the Q values table and this will lead to Q values that are far from optimal. This theoretical analysis illuminates a long-suggested benefit of experience replay - its ability to break the correlation structure of the Q-value updates, that emerges from the time and space correlation of samples drawn from the MDP. This analysis suggests that replay can be especially beneficial in propagating the outcomes of rare experiences throughout the Q-values, and raises the idea of choosing a replay sampling scheme that emphasizes this. A related empirical work by \citep{lee2018sample} showed that replaying entire episodes that end with significant outcomes in reverse order improves performance, in line with our theoretical findings here.
\begin{figure}[H]
\begin{center}
\centerline{\includegraphics[scale=0.25]{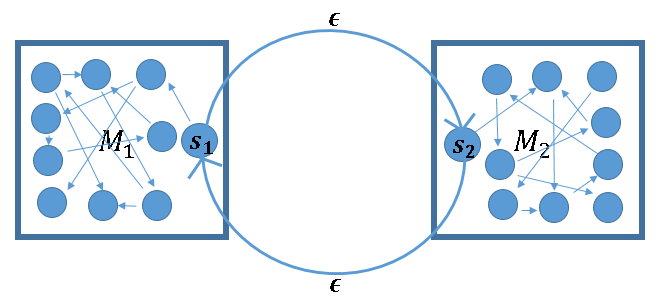}}
\caption{MDP class}
\label{fig:MDPrare}
\end{center}
\end{figure}
We start by defining the class of MDPs that will be discussed. This environment is inspired by works in the field of Neuroscience. Specifically, we look at environments similar to those used in \citet{momennejad2017offline}, where humans learned to perform an RL task while recordings of functional activations in the brain were taken via fMRI. \citet{momennejad2017offline} show that experience replay was crucial for learning a simple MDP where some transitions were witnessed very rarely. Our theoretical results provide some formal insight to these experimental observations and to the usefulness of experience replay, even in the tabular case. Figure~\ref{fig:MDPrare} illustrates the MDP we consider.
\paragraph{Class of MDPs} 
We consider the following setting - Let $M_3$ be an MDP with a state space $S$ and action set $A$. $M_3$ can be decomposed to 2 distinct MDPs $M_1, M_2$ with state spaces $S_1, S_2$ such that $S_1 \bigcap S_2 = \emptyset, S_1 \bigcup S_2 = S$, with the following property: there exists a single state (which we term $s_1$) in $S_1$ and a single state (termed $s_2$) in $S_2$ for which the transition probability from one to the other, i.e. $P(s_1|s_2), P(s_2|s_1)$, is non-zero. This means that $M_3$ actually consists of 2 separate environments that are sparsely connected to one another. For simplicity we further assume that there exists one possible action in states $s_1,s_2$ (we do not assume this on other states), although this assumption can be easily lifted by a slight modification to the proof.
In order to consider environments with rare experiences, we consider transition matrices in which with probability $\epsilon$ we transition between $s_1$ and $s_2$ and vice versa, where $\epsilon$ ensures that this transition is in fact rare with respect to the time horizon, and is specified in Theorem~\ref{theorem:rareExperiences}. Otherwise, with probability $1-\epsilon$, the following state is some specific state in the same part of the MDP (i.e., $M_1$ for $s_1$ and $M_2$ for $s_2$). Finally, we consider 'easy' MDPs that are almost deterministic - the reward $r(s,a)$ is a deterministic function and the transition matrix for all states except $s_1,s_2$ (which are stochastic states of transition from one part of the environment to another) is deterministic, i.e. for states $s \in S \setminus \{s_1,s_2\}$, we have a deterministic transition matrix.

We consider a setting in which we run Q-learning for some fixed number $T'$ of iterations (specified in Theorem~\ref{theorem:rareExperiences}) starting from some state in $M_1$, and study the resulting Q values, over the randomness of the algorithm and the environment.
We begin with some assumptions.
\begin{assumption}
\label{assum:assumptionsRareExperiences}
We impose the following assumptions on $M_3$:
\begin{itemize}[leftmargin=*]
\item There exists a time horizon $T$ such that if Q-learning was run on $M_1$ or on $M_2$ separately (i.e. when $\epsilon = 0$) for $T$ iterations, then w.h.p over the randomness of the algorithm all state-action pairs (in $M_1$ or $M_2$) are visited at least once.
\item Denote by $Q_i^\ast$ the optimal Q values for $M_i$ alone, for $i=1,2$, when the two environments are unconnected, i.e. when $\epsilon = 0$. The optimal Q values of $M_i$ and $M_3$ are far from one another: $\forall s,a : |Q_i^\ast(s,a) - Q_3^\ast(s,a)| \geq D_0$, where $D_0$ is some constant.
\item The convergence rate of running Q-learning on $M_3$ is captured by a function $g$ such that after $t$ iterations, $\|Q_{3,t} - Q^\ast_3 \|_\infty \leq g(t)$
\end{itemize} 
\end{assumption}
The assumption above is similar to the assumption used in previous sections as well as in previous works \citep{even2003learning}. First, we assume a time horizon $T$ that ensures we see all state-action pairs. In an ergodic MDP, for any Q-learning algorithm to converge such a $T$ must exist. 
The second assumption makes sure that in order to understand the full environment of $M_3$, one needs to combine knowledge from $M_1$ and $M_2$. This means that we want $M_{1}, M_2$ to be different enough from $M_3$, giving rise to a large distance between the optimal Q values in $M_1, M_2$ and the optimal values in $M_3$.
Finally, the last assumption is simply a notation for the convergence rate of running Q-learning on $M_3$.
\begin{figure*}
\centering
\begin{subfigure}
\centering
\includegraphics[scale=0.41]{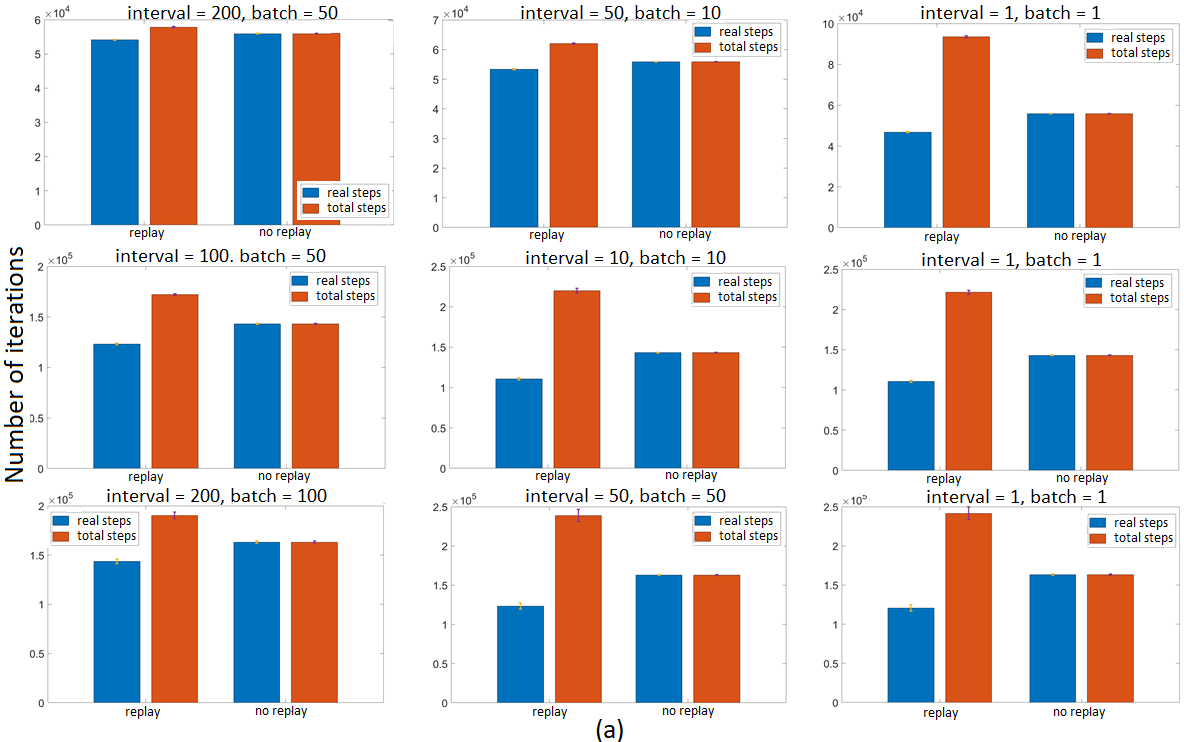}
\label{fig:exp1}
\end{subfigure} 
\hfill
\begin{subfigure}
\centering
\includegraphics[scale=0.25]{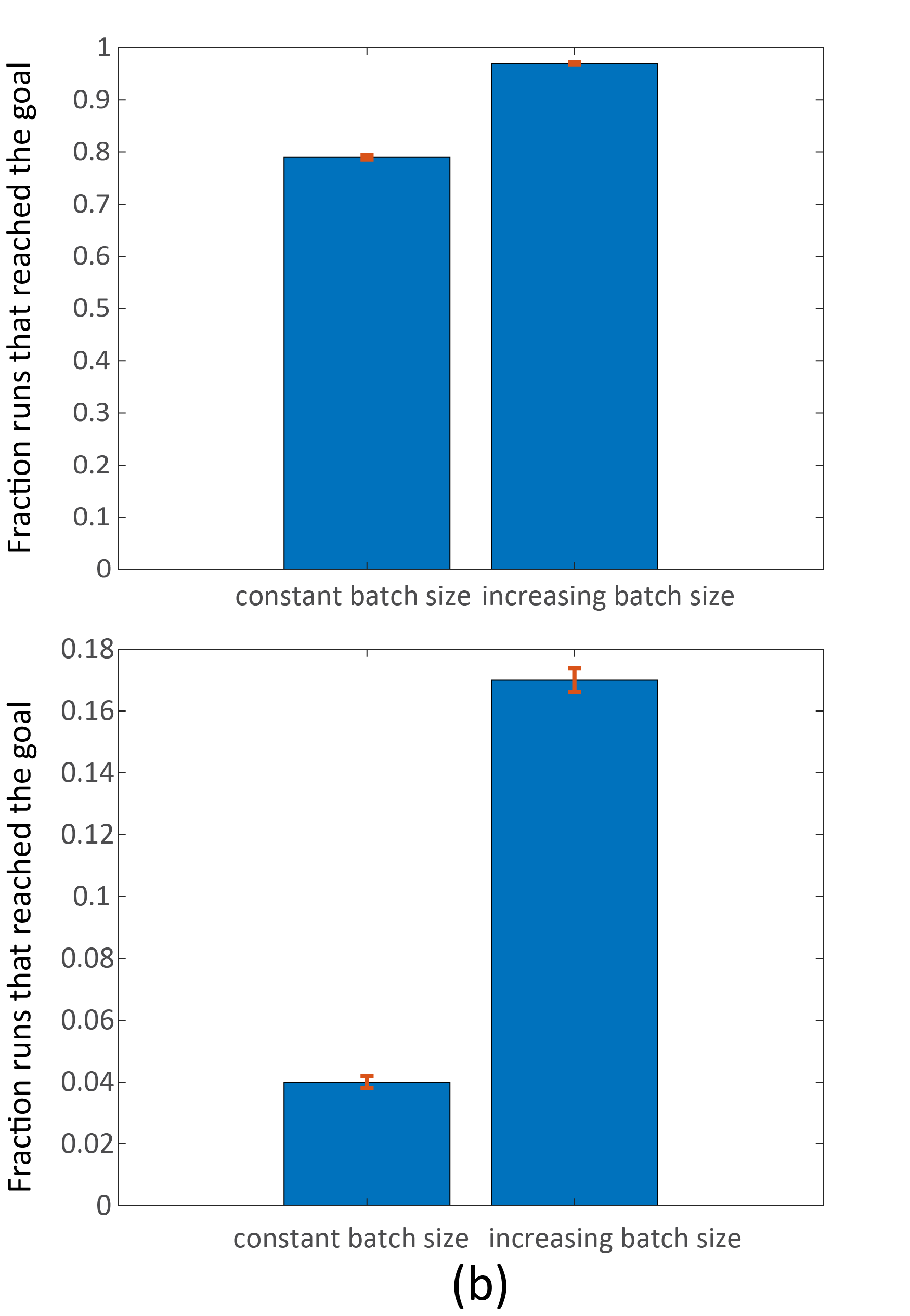}
\label{fig:exp2}
\end{subfigure}
\caption{(a) Convergence of Q-learning with and without replay, averaged over 100 repetitions. First and second row- convergence to score threshold, third row- convergence of Q values. Top row- Medium grid world. Second and third row- Hard grid world. (b) Fraction of runs that reached the goal in the Medium (top) and hard (bottom) grid world, over 100 repetitions. Error bars are SEM.}
\label{fig:scoreResults}
\end{figure*}
\begin{theorem}\label{theorem:rareExperiences}
Let $M_3$ be an MDP from the class described above, under assumption~\ref{assum:assumptionsRareExperiences}. 
Define $T' = \max(20000, 100 T)$ to be the time horizon. Let $\psi \in (0,1)$. Let $p > 0.00009$ denote the expected frequency of visiting $s_1$ and $s_2$, where expectation is taken over multiple runs of length $T'$ of the algorithm and the randomness of the algorithm and the environment, and suppose that $\epsilon = \frac{1.73}{T' \cdot p}$. 
Then, with probability $\geq \frac{1}{2}$, 
$\exists s,a : | Q_{3,T'}
(s,a) - Q_3^\ast(s,a) | \geq \frac{D_0}{2}$
Furthermore, with probability $\geq \frac{1}{2}$, running Q-learning for $T'$ iterations followed by $g^{-1}(\psi)$ iterations of experience replay in a frequency similar to that of the MDP $M_3$ converges to $\psi$ proximity of $Q^\ast_3$.
\end{theorem}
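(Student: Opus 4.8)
The plan is to prove the two halves separately, both governed by the random number $N$ of rare $s_1\leftrightarrow s_2$ crossings during the $T'$ online steps. Since $\{s_1,s_2\}$ is visited with frequency $p$ and each visit crosses with probability $\epsilon=1.73/(T'p)$, we have $\mathbb{E}[N]=T'p\epsilon=1.73$ (the hypothesis $p>0.00009$ only ensures $\epsilon<1$). I would first prove a Binomial/Poisson tail bound showing $\Pr(N\ge 2)\ge\tfrac12$ and $\Pr(N\le 2)\ge\tfrac12$; the worst case is the many-rare-visits regime where $N$ is approximately Poisson, and the constant $1.73$ sits just above the threshold ($\approx 1.68$) at which $\Pr(N\ge 2)=\tfrac12$, which is exactly what makes both bounds hold. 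Because crossings alternate (forward $s_1\to s_2$, return $s_2\to s_1$, forward, $\dots$), $\{N\ge 2\}$ guarantees at least one crossing of each direction, whereas $\{N\le 2\}$ rules out the three crossings that online propagation will be shown to need.

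For the first claim I would exhibit a witness pair pinned to a disconnected optimum. Before the first crossing the walk stays in $M_1$ for about $T'/1.73$ steps, which exceeds $T$ since $T'\ge 100T$, so by the first bullet of Assumption~\ref{assum:assumptionsRareExperiences} the $M_1$ block of the table is learned to $\approx Q_1^\ast$; symmetrically a post-crossing sojourn learns the $M_2$ block to $\approx Q_2^\ast$. To reproduce $Q_3^\ast$ online, the value of $M_2$ must reach the $M_1$ pairs through the bottleneck, which requires observing a forward crossing \emph{after} $Q(s_2,\cdot)$ is accurate and then carrying it upstream---forcing a forward, a return, and a second forward crossing, i.e. $N\ge 3$. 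On $\{N\le 2\}$ this chain never completes, so some pair retains its disconnected value within $D_0/2$ of $Q_1^\ast$ (or $Q_2^\ast$); combined with $|Q_i^\ast-Q_3^\ast|\ge D_0$ (second bullet), the triangle inequality yields $|Q_{3,T'}(s,a)-Q_3^\ast(s,a)|\ge D_0/2$ with probability $\ge\tfrac12$.

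For the second claim the crucial facts are that the buffer retains the rare transitions however late or few, and that replay decouples update order from sampling order. On $\{N\ge 2\}$ the buffer records at least one forward and one return crossing, and the long sojourns (length $\gg T$) guarantee via the first bullet that every state-action pair of both $M_1$ and $M_2$ is also stored. Replaying ``at a frequency similar to $M_3$'' then reproduces, in distribution, Q-learning run on $M_3$ itself, except that the stored crossings can be resampled arbitrarily often once $Q(s_1,\cdot)$ and $Q(s_2,\cdot)$ are accurate---precisely the propagation online could not achieve. Feeding this into the assumed convergence rate $g$ (third bullet), after $g^{-1}(\psi)$ replay steps the values are within $\psi$ of $Q_3^\ast$, again with probability $\ge\tfrac12$.

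The main obstacle is the first claim, because it asks for a \emph{lower} bound on the error rather than the upper bounds of the previous section. The delicate step is proving that on $\{N\le 2\}$ the at most one stale forward-crossing update---applied with the already small rate $\alpha=1/n$ and then diffused through $M_1$---moves the witness value by less than $D_0/2$, and that no incidental sequence of updates accidentally brings any pair within $D_0/2$ of $Q_3^\ast$. This needs a careful coupling of the random crossing times with the learning-rate schedule and the deterministic intra-block dynamics; by contrast, the buffer-representativeness required for the second claim is routine given the long sojourn lengths.
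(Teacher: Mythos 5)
Your overall strategy matches the paper's: decompose on the number $N$ of rare crossings, use $\Pr(N\geq 2)\geq \frac12$ for the replay claim (buffer contains all pairs of both parts plus the crossings, then invoke $g$), and for the lower bound exhibit a witness pair whose update sequence is indistinguishable from the disconnected process, so it lands within $D_0/2$ of $Q_1^\ast$ or $Q_2^\ast$ and hence, by the triangle inequality with $|Q_i^\ast - Q_3^\ast|\geq D_0$, at least $D_0/2$ from $Q_3^\ast$. The probability calculation for $N\geq 2$ with $\epsilon = 1.73/(T'p)$ is also the same.

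The genuine gap is in how you handle the first claim. You propose to work on the single event $\{N\leq 2\}$ and use $\Pr(N\leq 2)\geq\frac12$, asserting that on this event ``some pair retains its disconnected value within $D_0/2$ of $Q_1^\ast$ (or $Q_2^\ast$).'' That assertion is exactly what cannot be guaranteed uniformly on $\{N\leq 2\}$: the witness must be a pair whose update sequence is disconnected-indistinguishable \emph{and} which has received at least $T/(|S||A|)$ updates so that $f(\cdot)\leq\frac12$ applies. On $\{N=2\}$ the only disconnected-indistinguishable pairs are those in $S_2\setminus\{s_2\}$ (pairs in $M_1$ are updated again after the return crossing and can absorb $M_2$ information through $Q(s_1)$), and it is possible that the $M_2$ sojourn concentrates its updates on $s_2$ so that no pair in $S_2\setminus\{s_2\}$ is updated often enough. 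The paper does not claim otherwise: it conditions separately on $N=0,1,2$, bounds the conditional probability of finding a sufficiently-updated witness by $1$, $\approx 0.99$, and only $\approx 0.074$ respectively, and checks that $\Pr(N=0)\cdot 1+\Pr(N=1)\cdot 0.99+\Pr(N=2)\cdot 0.074\cdot 0.97\approx 0.500$ just clears $\frac12$. Your route would need near-certain conditional success in all three cases, which is not available; and your proposed fix (a coupling showing no pair ``accidentally'' approaches $Q_3^\ast$, motivated by a three-crossing propagation argument) is both harder than and different from what is actually needed, which is a lower bound on the probability that some under-$s_2$ pair gets enough updates. A second, smaller gap: for the replay claim you take the long sojourns $\tau_1,\tau_2\geq T$ for granted on $\{N\geq 2\}$, but the crossing times are random; the paper proves $\Pr(\tau_1\geq T,\ \tau_2\geq T\mid N=2)\geq 1-\delta$ via a uniformity argument over the ordered crossing positions (its Lemma~\ref{lemma:enoughIterations} and Corollary~\ref{corr:enoughIterations_For_N_At_least_2}), using $T'=100T$ to make the excluded configurations a $3/K$ fraction.
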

The proof follows the following intuitive idea: any Q-learning algorithm, which only uses online experience, will have to transition between the two parts of the environment sufficiently many times in order to have all state-action pairs account for the expected returns of the two parts of the environment.
However, since we are considering a finite time horizon and transitions that are rare with respect to that time horizon, it is likely that only few transitions between the two parts of the environment occur during this finite time. We show that with probability $\geq \frac{1}{2}$, we have at most one transition between $s_1$ and $s_2$, and one transition back. We then show that if we have at most two transitions - then some Q-values converge to the optimal Q values for one part of the environment alone, $Q^\ast_1$ or $Q^\ast_2$, and not to $Q^\ast_3$. This is because the sequence of experiences we observe is indistinguishable from a sequence that we would see if the two parts of the environment were unconnected (i.e. $\epsilon = 0$). However, we show that with probability at least $\frac{1}{2}$ we have spent at least $T$ iterations in each part of the environment, and thus have seen all states and actions in $M_1, M_2$. Thus, when applying experience replay after the online learning phase we will be able to converge to the exact optimal Q values, since we have a complete representation of the environment in the memory.
The full proof is found in appendix~\ref{sec:rareExperienceProof}.
\section{Experiments}
\label{sec:experiments}
In the previous section, we showed that for a particular simple class of MDPs, experience replay is provably beneficial. We now complement this by showing empirically how experience replay can be beneficial, on more complicated MDPs, while supporting the results of theorem~\ref{theorem:convergenceRate} empirically.
Here, we show that as discussed before, when we replay extensively, the total number of iterations needed for convergence increases, and this effect is monotonically demonstrated with lower $K$ (interval between replay events) or higher $M$ (number of replay iterations). However, a possible benefit of experience replay is also demonstrated - while the number of total iterations for convergence might increase due to replay, the number of real-world samples might decrease. This means that replay iterations might save us real-world iterations, that could be costly or hard to achieve.\\
In a second experiment, we demonstrate how our theoretical results translate into practical suggestions for improving the replay usage. In this experiment, we show that as suggested by our theoretical results, the error accumulation due to replay stems from the difference in distributions in the buffer and the MDP. Driven by this finding, we compare Q-learning with replay under two different schedules - one in which the replay batches are uniform throughout the duration of learning, and one in which they are increasing as time passes and more experiences are collected to the memory buffer. We show that as expected, replaying in an increasing schedule has superior performance. \\ 
We use the medium and hard grid world environments from \cite{zhang2017deeper}. 
Each grid is a 20-by-20 grid with walls in some of the cells, in which the task is to reach from a starting position to a goal position (see figure~\ref{fig:experiemntGrids} in appendix~\ref{app:gridWorld}). There are 4 possible actions- up, down, left, right. Each step incurs a reward of $-1$, and the goal position incurs a reward of $0$. When reaching the goal a transition back to the starting position takes place. We call each trajectory that starts in the starting position and ends in the goal position an episode.
We run Q-learning with and without experience replay in both environments, with various replay intervals and number of replay iterations ($K$ and $M$). We test for convergence in two ways- first, we look at the score of every trajectory from the starting position to the goal position, and look for when does this score passes a threshold (-50 for the medium grid, -70 for the hard grid) that indicates that a trajectory close to optimal has been learned. Second, we look at the Q values themselves and check when the Q values have converged to a threshold of $0.0001$, over all state-action pairs. We look at the number of online steps vs the total number of steps taken until convergence, averaged over $100$ repetitions. 
Figure~\ref{fig:scoreResults}a displays the results. The first and second row show convergence to the score threshold. The third row shows the convergence of the Q values. Our results clearly indicate that at least in these environments, employing experience replay saves real-world iterations, and increases the number of total iterations until convergence, and this effect is monotonically increasing with the number of replay iterations.\\
In a second experiment, we compare two schedules of replay- one in which $50$ replay iterations take place every $100$ iterations, and another in which we use an increasing batch size in intervals of $100$ iterations, starting from very small batches that increase in time. In both schedules we learn for $750000$ total iterations, of which $250000$ are replay iterations. We then test the performance of the resulting policies. Figure~\ref{fig:scoreResults}b shows that an increasing schedule outperforms a constant batch size, as expected. This is as implied by our theoretical analysis, and is an important practical finding.
\section{Discussion}
\label{sec:discussion}
Experience replay is an important tool in the toolkit of RL. It is also a biological phenomenon, which takes places naturally in human and animal learning. Thus, it is valuable to understand the advantages and disadvantages of this heuristic, and theoretically investigate its properties. To the best of our knowledge, this is one of the first works probing these interesting questions. We start by proving, for the first time, the convergence rate of Q-learning with experience replay, a widely used algorithm. Perhaps surprisingly, we find that Q-learning with experience replay generally has a comparable learning rate to that of Q-learning, but in extensive usage of experience replay the convergence rate might increase. Our analysis also points to the origin of this increase- sampling from the memory buffer incurs the additional error of sampling from an empirical distribution instead of the actual one. These results are also demonstrated empirically, suggesting that this phenomenon is also relevant in practice. These results give intuition as to how experience replay should be used - when sampling from the replay buffer, one should take into account the difference between the empirical distribution and the actual one. This raises the idea of increasing the frequency of replay as more experiences are obtained and a more accurate approximation of the true transition distribution is available, as we empirically demonstrate to be beneficial. A question that arises is the way in which controlling the size of the memory buffer might effect the distance between the transition distribution in the memory buffer compared to the actual one, and consequently the convergence rate. \citet{liu2017effects} showed empirically that a memory buffer too small deteriorates performance, and our results suggest a possible explanation for this empirical observation. 
In a complementary analysis, we further investigate the benefits of experience replay. We show that replay is provably beneficial in environments with rare experiences, where replay allows to quickly propagate information between distant and scarcely connected parts of the environment, by breaking the time and space correlation structure of the Q-value updates. This brings up an interesting question of the interplay between experience replay and multi-step methods, that somewhat mitigate the effect of learning from local information as done in single-step methods. Our results suggest that it is possible that experience replay can, to some extent, substitute the role of multi-step updates in that sense. Another interesting idea that arises following our results is that of applying replay in reverse order of a trajectory leading to a rare or highly unexpected experience, and by doing so swiftly propagating these rare experience backwards to the rest of the Q values. 
In summary, we believe that this paper is an important step in the theoretical investigation of experience replay, a highly useful tool in reinforcement learning. 
\newpage
\bibliography{biblio_memory_replay}

\begin{thebibliography}{19}
\providecommand{\natexlab}[1]{#1}
\providecommand{\url}[1]{\texttt{#1}}
\expandafter\ifx\csname urlstyle\endcsname\relax
  \providecommand{\doi}[1]{doi: #1}\else
  \providecommand{\doi}{doi: \begingroup \urlstyle{rm}\Url}\fi

\bibitem[Beck and Srikant(2012)]{beck2012error}
Carolyn~L Beck and Rayadurgam Srikant.
\newblock Error bounds for constant step-size q-learning.
\newblock \emph{Systems \& control letters}, 61\penalty0 (12):\penalty0
  1203--1208, 2012.

\bibitem[Bertsekas and Tsitsiklis(1996)]{bertsekas1996neuro}
Dimitri~P Bertsekas and John~N Tsitsiklis.
\newblock Neuro-dynamic programming.
\newblock 1996.

\bibitem[Even-Dar and Mansour(2003)]{even2003learning}
Eyal Even-Dar and Yishay Mansour.
\newblock Learning rates for q-learning.
\newblock \emph{Journal of Machine Learning Research}, 5\penalty0
  (Dec):\penalty0 1--25, 2003.

\bibitem[Fedus et~al.(2020)Fedus, Ramachandran, Agarwal, Bengio, Larochelle,
  Rowland, and Dabney]{fedus2020revisiting}
William Fedus, Prajit Ramachandran, Rishabh Agarwal, Yoshua Bengio, Hugo
  Larochelle, Mark Rowland, and Will Dabney.
\newblock Revisiting fundamentals of experience replay.
\newblock In \emph{International Conference on Machine Learning}, pages
  3061--3071. PMLR, 2020.

\bibitem[Fu et~al.(2019)Fu, Kumar, Soh, and Levine]{fu2019diagnosing}
Justin Fu, Aviral Kumar, Matthew Soh, and Sergey Levine.
\newblock Diagnosing bottlenecks in deep q-learning algorithms. arxiv e-prints,
  art.
\newblock \emph{arXiv preprint arXiv:1902.10250}, 2019.

\bibitem[Lee et~al.(2018)Lee, Choi, and Chung]{lee2018sample}
Su~Young Lee, Sungik Choi, and Sae-Young Chung.
\newblock Sample-efficient deep reinforcement learning via episodic backward
  update.
\newblock \emph{arXiv preprint arXiv:1805.12375}, 2018.

\bibitem[Li et~al.(2020)Li, Wei, Chi, Gu, and Chen]{li2020sample}
Gen Li, Yuting Wei, Yuejie Chi, Yuantao Gu, and Yuxin Chen.
\newblock Sample complexity of asynchronous q-learning: Sharper analysis and
  variance reduction.
\newblock \emph{arXiv preprint arXiv:2006.03041}, 2020.

\bibitem[Li et~al.(2021)Li, Cai, Chen, Gu, Wei, and Chi]{li2021q}
Gen Li, Changxiao Cai, Yuxin Chen, Yuantao Gu, Yuting Wei, and Yuejie Chi.
\newblock Is q-learning minimax optimal? a tight sample complexity analysis.
\newblock \emph{arXiv preprint arXiv:2102.06548}, 2021.

\bibitem[Lin(1993)]{lin1993reinforcement}
Long-Ji Lin.
\newblock Reinforcement learning for robots using neural networks.
\newblock Technical report, Carnegie-Mellon Univ Pittsburgh PA School of
  Computer Science, 1993.

\bibitem[Liu and Zou(2017)]{liu2017effects}
Ruishan Liu and James Zou.
\newblock The effects of memory replay in reinforcement learning.
\newblock \emph{arXiv preprint arXiv:1710.06574}, 2017.

\bibitem[Mnih et~al.(2015)Mnih, Kavukcuoglu, Silver, Rusu, Veness, Bellemare,
  Graves, Riedmiller, Fidjeland, Ostrovski, et~al.]{mnih2015human}
Volodymyr Mnih, Koray Kavukcuoglu, David Silver, Andrei~A Rusu, Joel Veness,
  Marc~G Bellemare, Alex Graves, Martin Riedmiller, Andreas~K Fidjeland, Georg
  Ostrovski, et~al.
\newblock Human-level control through deep reinforcement learning.
\newblock \emph{Nature}, 518\penalty0 (7540):\penalty0 529, 2015.

\bibitem[Momennejad et~al.(2017)Momennejad, Otto, Daw, and
  Norman]{momennejad2017offline}
Ida Momennejad, A~Ross Otto, Nathaniel~D Daw, and Kenneth~Andrew Norman.
\newblock Offline replay supports planning: fmri evidence from reward
  revaluation.
\newblock \emph{bioRxiv}, page 196758, 2017.

\bibitem[Qu and Wierman(2020)]{qu2020finite}
Guannan Qu and Adam Wierman.
\newblock Finite-time analysis of asynchronous stochastic approximation and $ q
  $-learning.
\newblock In \emph{Conference on Learning Theory}, pages 3185--3205. PMLR,
  2020.

\bibitem[Schaul et~al.(2015)Schaul, Quan, Antonoglou, and
  Silver]{schaul2015prioritized}
Tom Schaul, John Quan, Ioannis Antonoglou, and David Silver.
\newblock Prioritized experience replay.
\newblock \emph{arXiv preprint arXiv:1511.05952}, 2015.

\bibitem[Singh et~al.(2000)Singh, Jaakkola, Littman, and
  Szepesv{\'a}ri]{singh2000convergence}
Satinder Singh, Tommi Jaakkola, Michael~L Littman, and Csaba Szepesv{\'a}ri.
\newblock Convergence results for single-step on-policy reinforcement-learning
  algorithms.
\newblock \emph{Machine learning}, 38\penalty0 (3):\penalty0 287--308, 2000.

\bibitem[Vanseijen and Sutton(2015)]{vanseijen2015deeper}
Harm Vanseijen and Rich Sutton.
\newblock A deeper look at planning as learning from replay.
\newblock In \emph{International conference on machine learning}, pages
  2314--2322, 2015.

\bibitem[Wang and Ross(2019)]{wang2019boosting}
Che Wang and Keith Ross.
\newblock Boosting soft actor-critic: Emphasizing recent experience without
  forgetting the past.
\newblock \emph{arXiv preprint arXiv:1906.04009}, 2019.

\bibitem[Wang et~al.(2016)Wang, Bapst, Heess, Mnih, Munos, Kavukcuoglu, and
  de~Freitas]{wang2016sample}
Ziyu Wang, Victor Bapst, Nicolas Heess, Volodymyr Mnih, Remi Munos, Koray
  Kavukcuoglu, and Nando de~Freitas.
\newblock Sample efficient actor-critic with experience replay.
\newblock \emph{arXiv preprint arXiv:1611.01224}, 2016.

\bibitem[Zhang and Sutton(2017)]{zhang2017deeper}
Shangtong Zhang and Richard~S Sutton.
\newblock A deeper look at experience replay.
\newblock \emph{arXiv preprint arXiv:1712.01275}, 2017.

\end{thebibliography}
\bibliographystyle{plainnat}

\newpage
\onecolumn
\section{Appendix}
\label{sec:proofs}
\subsection{Q-learning with experience replay algorithm}
\label{app:alg}
The algorithm for Q-learning with experience replay:
\begin{algorithm}[h]
\caption{Q-learning with uniform experience replay}
\label{alg:Q-learning with replay}
\begin{algorithmic}
\STATE input: $M=$ number of replay iterations, $K=$ interval length between replay events, $\gamma\in (0,1) =$ decay parameter, $\alpha_t(\cdot, \cdot)$ = learning rate function
\STATE init: $\forall a \in A, s \in S: Q_0(s,a) = c_0, t=0$
\FOR{$i = 1,2,...$}{
\STATE $t = t+1$
\STATE Choose action $a_t = argmax_a (Q_{t-1}(s_t,a))$ (or explore)
\STATE Receive $r_t$, transition to $s_{t+1}$, and update:\\
$Q_t(s_t,a_t) = (1-\alpha_t(s_t,a_t)) \cdot Q_{t-1}(s_t,a_t) + \alpha_t(s_t,a_t) \cdot \left( r_t + \gamma \cdot max_{a'} Q_{t-1}(s_{t+1},a') \right)$
\STATE Store transition in memory: $s_t, a_t, r_t, s_{t+1}$
\IF {$i_{\mbox{mod K}} == 0$}  {
\FOR {$j=1,...,M$}{
\STATE $t = t+1$
\STATE Sample transition from memory uniformly at random: $s_1, a, r, s_2$
\STATE Update: $Q_t(s_1,a) = (1-\alpha_t(s_1,a)) \cdot Q_{t-1}(s_1,a) + \alpha_t(s_1,a) \cdot \left( r + \gamma \cdot max_{a'} Q_{t-1}(s_2,a') \right)$
}
\ENDFOR
}
\ENDIF 
}
\ENDFOR
\end{algorithmic}
\end{algorithm}

\subsection{Synchronous Q-learning with experience replay}
\label{app:synchronousQlearning}
The synchronous setting of Q-learning with experience replay is described in the following algorithm:
\begin{algorithm}[H]
\caption{Synchronous Q-learning with uniform experience replay}
\label{alg:synchronous Q-learning with replay}
\begin{algorithmic}
\STATE input: $M=$ number of replay iterations, $K=$ interval length between replay events, $\gamma\in (0,1) =$ decay parameter.
\STATE init: $\forall a \in A, s \in S: Q_0(s,a) = c_0, t=0$.
\FOR{$i = 1,2,...$}{
\STATE $t = t+1$
\FOR{all $s \in S, a\in A$}{
\STATE Sample from the MDP: $r_t(s,a)$, $s_{t+1}(s,a)$
\STATE Let $\alpha_t(s,a) = \frac{1}{n(s,a)}$ where $n(s,a)$ = number of updates done on $Q(s,a)$ so far 
\STATE Update:\\
$Q_t(s,a) = (1-\alpha_t(s,a)) \cdot Q_{t-1}(s,a) + \alpha_t(s,a) \cdot \left( r_t(s,a) + \gamma \cdot max_{a'} Q_{t-1}(s_{t+1},a') \right)$
\STATE Store transition in memory: $s, a, r_t, s_{t+1}$
}
\ENDFOR
\IF {$_{\mbox{mod K}} == 0$}  {
\FOR {$j=1,...,M$}{
\STATE $t = t+1$
\FOR{all $s \in S, a\in A$}{
\STATE Sample uniformly at random a transition of the pair (s,a) from all samples of it in the memory: $s, a, r, s_2$
\STATE Let $\alpha_j = \frac{1}{n(s,a)}$ where $n(s,a)$ = number of updates done on $Q(s,a)$ so far
\STATE Update: $Q_t(s_1,a) = (1-\alpha_j) \cdot Q_{t-1}(s_1,a) + \alpha_j \cdot \left( r + \gamma \cdot max_{a'} Q_{t-1}(s_2,a') \right)$
}
\ENDFOR
}
\ENDFOR
}
\ENDIF 
}
\ENDFOR
\end{algorithmic}
\end{algorithm}

\subsection{Grid world scheme}
\label{app:gridWorld}
The medium and hard grid worlds are described in fig~\ref{fig:experiemntGrids}. The yellow cells represent walls that cannot be crosses, the blue sell indicates the starting position, and the orange cell indicates the goal position.
\begin{figure*}[h!]
\centering
\includegraphics[width=12cm]{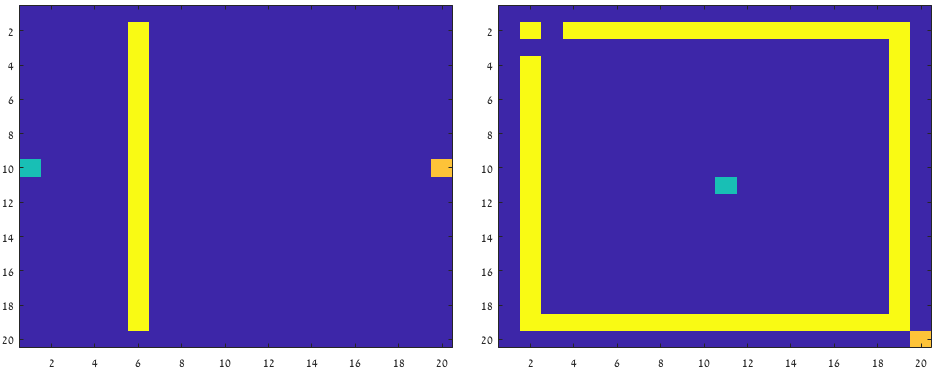}
\caption{Grid world environments of experiments \citep{zhang2017deeper}. Left: medium grid world, right: hard grid world. Starting position is the light blue cell, goal position is the orange cell, walls are marked in yellow.}
\label{fig:experiemntGrids}
\end{figure*}

\subsection{Proof of Theorem~\ref{theorem:convergenceRate}}
\label{app:convergenceRateProof}
We discuss the proof of Theorem~\ref{theorem:convergenceRate}. Since we are in the synchronous setting, we assume from this point on that every state-action pair is being updated at every iteration, and thus $\alpha_t(s,a) = \frac{1}{t}$. The explicit algorithm for the synchronous setting is specified in appendix~\ref{app:synchronousQlearning}.

We use the framework of iterative stochastic algorithms. For completeness, we start by showing how Q-learning with experience replay can be written as an iterative stochastic algorithm. This is similar to the formulation in \cite{bertsekas1996neuro} and \cite{even2003learning}.
An iterative stochastic algorithm is of the form: \\
$X_{t+1}(i) = (1-\alpha_t(i)) \cdot X_t(i) + \alpha_t(i) \cdot \left( H_tX_t(i) + w_t(i) \right)$ \\
where $H_t$ is a contraction (or pseudo-contraction) mapping, selected from a family of mappings $\mathcal{H}$ as a function of information contained in the past history $\mathcal{F}_t$.
A function $H: \mathbb{R}^n \rightarrow \mathbb{R}^n$ is a maximum norm contraction if there exists some constant $\beta \in [0,1)$ such that
$\forall x,\bar{x}: \|Hx - H\bar{x} \|_\infty \leq \beta \|x-\bar{x} \|_\infty$

Q-learning with experience replay can be written as an iterative stochastic algorithm (proof in Appendix~\ref{app:IterativeStocahsticAlg}) using the contraction mapping $H$ over Q functions, defined as: 
\begin{align*}
HQ(s,a) = \sum_{s' \in S} p_{s,s'}(a) \left( R(s,a) + \gamma \cdot max_{a'} Q(s',a') \right)
\end{align*} 
and $w_t$ defined as: 
\begin{align*}
w_t(s_t,a_t) = r(s_t,a_t) + \gamma \cdot max_{a'}Q_t(s_{t+1},a') - HQ(s_t,a_t)
\end{align*}
Combining the two we get:\\
$Q_{t+1}(s_t,a_t) = (1-\alpha_t(s_t,a_t)) \cdot Q_t(s_t,a_t) + \alpha_t(s_t,a_t) \cdot \left( HQ_t(s_t,a_t) + w_t(s_t,a_t) \right)$ \\
which is exactly the update rule in Q-learning.

The proof idea is as follows - we start by showing that the Q values can be decomposed to 2 parts, one which describes the deterministic process of convergence that would take place if all updates were done using the expected value of all random variables, and a second part that holds the error accumulated by the stochastic sampling of the random variables in question. Using this decomposition, we show that we can bound the distance between the Q values and the optimal Q values by smaller and smaller constants, which leads to convergence. This is a common technique used to prove convergence in iterative stochastic algorithms. However, since we are sampling both from the MDP and from a memory buffer, in our case the error accumulated by the sampling of memories does not have an expected value of $0$, as opposed to the error accumulated by online sampling from the MDP. This is the main challenge in proving the convergence of Q-learning with experience replay, and is the heart of our proof. 

We denote the history (all transitions, rewards and actions before iteration $t$) as $F_t$, i.e. $F_t = \{ s_1,a_1,r_1,s_2,a_2,r_2,...,s_{t-1},a_{t-1},r_{t-1} \}$. 
We first study the effect of the noise term $w_t$ on the convergence of the algorithm. For iterations of online Q-learning (where the samples are drawn from the MDP) the expected value of the noise term, $w$, is $0$.
\begin{align*}
\mathbb{E}\left[ w_t(s,a) | F_t \right] &= \mathbb{E} [ r(s,a) + \gamma \cdot max_{a'} Q_t(s_{t+1},a') \\
&- \sum_{s' \in S} p_{s,s'}(a) \cdot \left( R(s,a) + \gamma \cdot max_{a'} Q_t(s',a') \right) | F_t ] \\
&= \mathbb{E} \left[ r(s,a) \right] - R(s,a) \\
&+ \gamma \cdot \sum_{s' \in S} p_{s,s'}(a) \cdot max_{a'}Q_t(s',a') - \gamma \cdot \sum_{s' \in S} p_{s,s'}(a) \cdot max_{a'} Q_t(s',a') \\
&= 0
\end{align*}
On the other hand, for iterations where we sample from the memory buffer, we do not have that $\mathbb{E} \left[ w_t(s,a) | F_t \right] = 0$, since in these iterations the expectation of $r(s,a)$ and $s_{t+1}$ are not equal to the true expectations as in the MDP, but to the empirical means of the experiences collected in the memory buffer. 
Thus, in iterations where we sample from the memory, we have that:
\begin{align*}
\mathbb{E} \left[ w_t(s,a) | F_t \right] &= \mathbb{E} \left[ r(s,a) + \gamma \cdot max_{a'} Q_t(s_{t+1},a') | F_t \right] \\
&- R(s,a) - \gamma \cdot \sum_{s'=0}^n p_{s,s'}(a) \cdot max_{a'} Q_t(s',a') \\
&=\widehat{r(s,a)} - R(s,a) + \gamma \cdot \widehat{max_{a'}Q_t(s_{t+1},a')} - \gamma \cdot \sum_{s'=0}^n p_{s,s'}(a) \cdot max_{a'} Q_t(s',a')
\end{align*}
where $\widehat{r(s,a)}$ and $\widehat{max_{a'}Q_t(s_{t+1},a')}$ are the empirical means of the reward in state-action pair $(s,a)$ and the empirical maximal $Q$ value for the next state $s_{t+1}$ (averaged over the experiences in the memory buffer).

Explicitly, let $L$ be the set of all transitions $(s_1,a,s_2,r)$ in the memory buffer with state $s_1=s$ and action $a$. Then we can write these empirical means in the following manner:
\begin{align*}
&\widehat{r(s,a)} = \frac{1}{|L|} \sum_{r \in L} r \\
&\widehat{max_{a'}Q_t(s_{t+1},a')} = \frac{1}{|L|} \cdot \sum_{s_2 \in L} max_{a'}Q_t(s_2,a')
\end{align*}
In order to prove the above convergence rate, we first show the following two claims (proofs are found in appendix~\ref{app:boundedSeqProof} and~\ref{app:proofLemmaNoiseTermAsymptotic}).

\begin{restatable}{lemma}{boundedSeq}
\label{lemma:boundedSeq}
There exists a constant $D_0 \leq V_{max} + max(\| Q_0 \|_\infty,V_{max})$ such that $\forall t: \| Q_{t} - Q^\ast \|_\infty \leq D_0$  
\end{restatable}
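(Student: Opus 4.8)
The plan is to establish the uniform bound $\|Q_t - Q^\ast\|_\infty \leq D_0$ by induction on $t$, exploiting the convex-combination structure of the Q-learning update together with the fact that both the reward and the optimal Q-values are themselves bounded by known quantities. First I would record the elementary bound on the optimal values: since rewards satisfy $|r_t(s,a)| \leq R_{max}$, the optimal value function satisfies $\|Q^\ast\|_\infty \leq V_{max} = \frac{R_{max}}{1-\gamma}$, because $Q^\ast(s,a) = R(s,a) + \gamma \sum_{s'} p_{s,s'}(a)\max_{a'}Q^\ast(s',a')$ and iterating this contraction gives the geometric series defining $V_{max}$. This gives the ``$V_{max}$'' summand in the definition of $D_0$ a clear meaning: it controls how far $Q^\ast$ itself can be from the origin.

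Next I would set up the induction. The base case is $t=0$: we have $\|Q_0 - Q^\ast\|_\infty \leq \|Q_0\|_\infty + \|Q^\ast\|_\infty \leq \|Q_0\|_\infty + V_{max}$, and since $\|Q_0\|_\infty \leq \max(\|Q_0\|_\infty, V_{max})$, this is at most $V_{max} + \max(\|Q_0\|_\infty, V_{max}) = D_0$. For the inductive step, I would show it suffices to prove the slightly stronger invariant $\|Q_t\|_\infty \leq \max(\|Q_0\|_\infty, V_{max})$ for all $t$, from which the desired distance bound follows immediately by the triangle inequality $\|Q_t - Q^\ast\|_\infty \leq \|Q_t\|_\infty + \|Q^\ast\|_\infty \leq \max(\|Q_0\|_\infty, V_{max}) + V_{max} = D_0$. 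The key observation for the invariant is that each update writes $Q_{t}(s,a)$ as a convex combination $(1-\alpha_t)Q_{t-1}(s,a) + \alpha_t(r_t + \gamma\max_{a'}Q_{t-1}(s',a'))$ with $\alpha_t \in [0,1]$. The first term is bounded in absolute value by $\max(\|Q_0\|_\infty, V_{max})$ by the inductive hypothesis. For the target term, I would bound $|r_t + \gamma\max_{a'}Q_{t-1}(s',a')| \leq R_{max} + \gamma\max(\|Q_0\|_\infty, V_{max})$; since $R_{max} = (1-\gamma)V_{max}$, this equals $(1-\gamma)V_{max} + \gamma\max(\|Q_0\|_\infty, V_{max}) \leq \max(\|Q_0\|_\infty, V_{max})$, where the last inequality uses $V_{max} \leq \max(\|Q_0\|_\infty, V_{max})$. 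A convex combination of two quantities each bounded by $B := \max(\|Q_0\|_\infty, V_{max})$ is itself bounded by $B$, closing the induction.

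The main thing to check carefully is that this argument is insensitive to whether the update at iteration $t$ is an online step or a replay step, and this is where I would be most careful. In a replay step the transition $(s_1, a, s_2, r)$ is drawn from the buffer, but the reward $r$ stored there still satisfies $|r| \leq R_{max}$ (it is an actual observed reward), and the next-state term $\gamma\max_{a'}Q_{t-1}(s_2,a')$ is controlled by the same inductive hypothesis on $\|Q_{t-1}\|_\infty$. Hence the per-step bound $|r + \gamma\max_{a'}Q_{t-1}(s_2,a')| \leq R_{max} + \gamma B$ holds identically, and the convex-combination structure is identical. Thus the invariant propagates through both kinds of iterations without modification. In the synchronous setting the same reasoning applies simultaneously to every pair $(s,a)$, since each is updated by its own convex combination of quantities bounded by $B$, so the sup-norm bound is preserved. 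The upshot is that the only genuinely nontrivial input is the uniform reward bound $R_{max}$ propagating through the discounted recursion, and the convexity of the update doing all the rest of the work; no probabilistic or noise-decomposition machinery is needed for this lemma.
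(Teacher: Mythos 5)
Your proposal is correct and follows essentially the same route as the paper: an induction showing the invariant $\|Q_t\|_\infty \leq \max(\|Q_0\|_\infty, V_{max})$ is preserved by the convex-combination update (for both online and replay steps, since buffered rewards are still bounded by $R_{max}$), followed by the triangle inequality with $\|Q^\ast\|_\infty \leq V_{max}$. The only cosmetic difference is that you unify into a single inequality $R_{max} + \gamma B \leq B$ what the paper handles via two separate cases ($\|Q_0\|_\infty \leq V_{max}$ and $\|Q_0\|_\infty > V_{max}$).
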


\begin{restatable}{lemma}{noiseTerm}
\label{lemma:noiseTerm}
Let $W_0(s,a) = 0$ and $W_{t+1}(s,a) = (1-\alpha_t(s,a)) \cdot W_t(s,a) + \alpha_t(s,a) \cdot w_t(s,a)$. Then,
$lim_{t \rightarrow  \infty} W_t(s,a) = 0$ almost surely.
\end{restatable}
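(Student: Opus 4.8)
The plan is to exploit the fact that in the synchronous setting $\alpha_t(s,a)=\frac{1}{t}$, so unrolling the recursion for $W$ collapses it into a running average: $W_m(s,a)=\frac{1}{m}\sum_{i=1}^{m} w_i(s,a)$, exactly as noted in the proof sketch. I would then split the index set $\{1,\dots,m\}$ into the online steps $\mathcal{O}$ (where the sample is drawn from the MDP) and the replay steps $\mathcal{R}$ (where the sample is drawn from the buffer), and decompose each term as $w_i = \xi_i + b_i$, where $b_i := \mathbb{E}[w_i(s,a)\mid F_i]$ is the conditional bias and $\xi_i := w_i(s,a)-b_i$ is the conditionally mean-zero deviation. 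By the computations preceding the lemma, $b_i = 0$ for every online step, while for a replay step $b_i$ equals the gap between the empirical means in the buffer and the true means, i.e. $b_i = \big(\widehat{r(s,a)}-R(s,a)\big) + \gamma\big(\widehat{\max_{a'}Q_i(s_{i+1},a')} - \sum_{s'}p_{s,s'}(a)\max_{a'}Q_i(s',a')\big)$. The goal then reduces to showing that both $\frac{1}{m}\sum_i \xi_i$ and $\frac{1}{m}\sum_i b_i$ tend to $0$ almost surely.

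For the deviation part, the sequence $\{\xi_i\}$ is a martingale-difference sequence with respect to the filtration $\{F_i\}$, and it is uniformly bounded: Lemma~\ref{lemma:boundedSeq} controls $\|Q_i-Q^\ast\|_\infty\le D_0$ and the rewards are bounded by $R_{max}$, so each $|\xi_i|$ is bounded by a constant depending only on $R_{max}$, $\gamma$, and $V_{max}$. I would then invoke a strong law of large numbers for bounded martingale differences: since $\sum_i \mathbb{E}[\xi_i^2\mid F_i]/i^2 \le \sum_i C/i^2 < \infty$, the martingale $\sum_{i=1}^m \xi_i/i$ converges almost surely, and Kronecker's lemma yields $\frac{1}{m}\sum_{i=1}^m \xi_i \to 0$ almost surely. (Azuma's inequality together with Borel--Cantelli along a dyadic subsequence gives the same conclusion and is closer to the finite-time machinery used elsewhere in the paper.)

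The bias part is where the real work lies and is the main obstacle, precisely because $\mathbb{E}[w_i\mid F_i]\ne 0$ on replay steps. The delicate point is that the empirical next-state estimate is taken against the \emph{current}, still-changing iterate $Q_i$. The key idea I would use to decouple this is to bound the replay bias through an $\ell_1$ distance: writing $\widehat{p}_{s,\cdot}(a)$ for the empirical successor distribution of $(s,a)$ in the buffer, one has $|b_i| \le \big|\widehat{r(s,a)}-R(s,a)\big| + \gamma\,\|\widehat{p}_{s,\cdot}(a)-p_{s,\cdot}(a)\|_1 \cdot \max_{s'}\big|\max_{a'}Q_i(s',a')\big|$, and by Lemma~\ref{lemma:boundedSeq} the last factor is bounded uniformly in $i$ by $V_{max}+D_0$. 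This removes the dependence on the moving target $Q_i$ and leaves only the deviation of the buffer's empirical reward and transition estimates from the truth. Now Assumption~\ref{assum:ConditionsForConvergence} guarantees that the number of buffered samples of $(s,a)$ at step $i$ is at least $\tfrac{c\,i}{|S||A|}\to\infty$; since the rewards are i.i.d.\ per state-action pair and the buffered successors of a fixed $(s,a)$ are i.i.d.\ draws from $p_{s,\cdot}(a)$, the ordinary SLLN (using finiteness of $S$ for the $\ell_1$ term) gives $\widehat{r(s,a)}\to R(s,a)$ and $\|\widehat{p}_{s,\cdot}(a)-p_{s,\cdot}(a)\|_1\to 0$ almost surely, hence $b_i\to 0$ almost surely.

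Finally, since $b_i=0$ on online steps and $b_i\to 0$ a.s.\ on replay steps, all the $b_i$ tend to $0$ almost surely, and the Ces\`aro/Toeplitz lemma gives $\frac{1}{m}\sum_{i=1}^m b_i \to 0$ almost surely. Combining the two pieces, $W_m(s,a)=\frac{1}{m}\sum_i\xi_i + \frac{1}{m}\sum_i b_i \to 0$ almost surely, which is the claim. I expect the step controlling the replay bias --- and in particular the uniform-in-$i$ factoring of that bias through $\|\widehat{p}-p\|_1$ so that Assumption~\ref{assum:ConditionsForConvergence} can force it to vanish --- to be the crux of the argument, while the martingale and Ces\`aro steps are routine.
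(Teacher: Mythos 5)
Your proposal is correct, but it follows a genuinely different route from the paper's proof in Appendix~\ref{app:proofLemmaNoiseTermAsymptotic}. The paper centers the \emph{accumulated} noise at its unconditional mean, writing $V_t = W_t - \mathbb{E}[W_t]$ and $g_t = w_t - \mathbb{E}[w_t]$; it then disposes of $V_t$ by citing the standard stochastic-approximation convergence result of Bertsekas--Tsitsiklis, and handles $\mathbb{E}[W_t]\to 0$ with a bespoke contradiction argument (infinitely many $t$ with $|\mathbb{E}[W_t]|<\epsilon$, forced by $\sum_t\alpha_t=\infty$, followed by an invariance step showing the bound persists). You instead center each \emph{increment} at its conditional mean, $w_i=\xi_i+b_i$ with $b_i=\mathbb{E}[w_i\mid F_i]$, kill the martingale part with the $L^2$-martingale SLLN plus Kronecker, and kill the bias part with Ces\`aro averaging once $b_i\to 0$ a.s. Your route buys two things: it is self-contained (no appeal to general SA theory), and the conditional centering is the technically cleaner choice --- the paper's $g_t$ satisfies only $\mathbb{E}[g_t]=0$, not $\mathbb{E}[g_t\mid F_t]=0$, which is what the cited convergence result actually needs --- and your $\ell_1$ factoring $|b_i|\le|\widehat{r}-R|+\gamma\|\widehat{p}_{s,\cdot}(a)-p_{s,\cdot}(a)\|_1\cdot\sup_i\|Q_i\|_\infty$ handles the moving target $Q_i$ more explicitly than the paper's appeal to the SLLN for $\widehat{\max_{a'}Q_t(\cdot,a')}$. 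What the paper's route buys is generality: the lemma is reused in the asymptotic result (Theorem~\ref{theorm:Convergence in the limit}) under arbitrary Robbins--Monro step sizes, whereas your collapse of $W_m$ into an unweighted average is specific to $\alpha_t=1/t$; to cover the general case you would need to replace the plain average by the weights $\alpha_i\prod_{j>i}(1-\alpha_j)$ and invoke a Toeplitz-type lemma for the bias term and $\sum_t\alpha_t^2<\infty$ for the martingale term, which is routine but worth stating.
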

Lemma~\ref{lemma:boundedSeq} says that for all iterations, the distance between our Q values and the optimal ones, is bounded by some constant $D_0$.\\
Lemma~\ref{lemma:noiseTerm} ensures that even though we do not always sample from the MDP, we still have that the accumulated noise, $W_{t+1}(s,a)$, converges to $0$ as $t \rightarrow \infty$. 

We now write $Q(s,a)$ as a sum of the two components discussed above: $Y_t(s,a)$, the 'deterministic' error term, representing the convergence had we used expectations of random variables rather than stochastic samples, and $W_t(s,a)$, the 'stochastic' error, holding the error due to sampling from either the MDP or the memory buffer.
Using this decomposition, we recursively bound the difference $Q_t(s,a)-Q^\ast(s,a)$ by a sequence of distance constants defined as follows: $D_{k+1} = (\gamma + \epsilon)\cdot D_k$, for $\epsilon = \frac{1 - \gamma}{2}$. Clearly, $D_k \rightarrow 0$ as $k \rightarrow \infty$, thus this results in convergence to the optimal Q values. We prove that for each $k$, there exists a time $t_k$ such that for all iterations following $t_k$ our Q values are not further than $D_k$ from the optimal ones.

\convergenceInLimit*

The full proof of lemma~\ref{lemma:convergenceInLimit} is found in appendix~\ref{app:convergenceInLimitProof}. The proof follows by induction. 
First we notice that since $\gamma < 1$, $lim_{k \rightarrow \infty} D_k = 0$. 
We assume that there exists some $t_k$ such that $\forall t \geq t_k : \| Q_{t}(s,a) - Q^\ast(s,a) \| \leq D_k$, and show that there exists some $t_{k+1}$ for which
$\forall t \geq t_{k+1} : \| Q_{t}(s,a) - Q^\ast(s,a) \| \leq D_{k+1}$. \\
For $t_0=0$, we have that $\forall t > t_0 : \| Q_{t}(s,a) - Q^\ast(s,a) \| \leq D_0$, from Lemma~\ref{lemma:boundedSeq}, and thus the induction basis holds. \\
For every $s,a, t \geq t_k$, we show that the Q values are bounded:
\begin{equation}
\label{eq:decompositionOfQval2}
\begin{split}
-Y_t(s,a) - W_{t:t_k}(s,a) \leq Q_{t}(s,a) - Q^\ast(s,a) \leq Y_t(s,a) + W_{t:t_k}(s,a)
\end{split}
\end{equation}
Where we define $\forall \tau,s,a: W_{\tau:\tau}(s,a) = 0$ and $W_{t+1:\tau}(s,a) = (1-\alpha_t(s,a))W_{t:\tau}(s,a) + \alpha_t(s,a)w_t(s,a)$.
Additionally, let $Y_{t_k}(s,a) = D_k$ and $\forall t \geq t_k : Y_{t+1}(s,a) = (1-\alpha_t(s,a))Y_t(s,a) + \alpha_t(s,a) \gamma D_k$.
We then show that $lim_{t \rightarrow \infty} Y_t(s,a) = \gamma \cdot D_k < D_k$, and combine this with the results of lemma~\ref{lemma:noiseTerm} that states that $lim_{t \rightarrow \infty} W_t(s,a) = 0$. This gives us the required result.

We got that there exists some $t_k$ such that $\forall t \geq t_k : \| Q_{t}(s,a) - Q^\ast(s,a) \| \leq D_k$, and some $t_{k+1}$ for which
$\forall t \geq t_{k+1} : \| Q_{t}(s,a) - Q^\ast(s,a) \| \leq D_{k+1}$
Our goal now is to bound $t_{k+1}-t_k$.
We use equation~\ref{eq:decompositionOfQval2}.
We start by bounding the number of iteration for $Y_t(s,a)$ to become smaller than $(\gamma + \frac{2 \epsilon}{3}) \cdot D_k$, then we bound the number of iterations until $W_t(s,a)$ is smaller than $\frac{\epsilon}{3} D_k$, and this will give us the number of iteration to have $Y_t(s,a) + W_t(s,a) < D_{k+1}$.

\deterministicPartConvergenceTime*
The above lemma actually tells us what is the number of updates that needs to be done on a Q-value of a state-action pair if this update were done using the actual expected reward of each action in that state, and with the expected next state Q value, given the transition probabilities. This is in fact the number of iterations we would have needed if we were updating the Q-values with expectations of the real MDP, rather than with samples from it or from the memory buffer. We summarize that for all $t > 3 t_k$, $Y_t(s,a) < (\gamma + \frac{2 \epsilon}{3})D_k$, thus $t_{k+1}=3\cdot t_k$. The proof is found in appendix~\ref{app:proofYconvergenceTime}.

We now continue to bounding the time until $W_t \leq \frac{\epsilon}{3} D_k$. This is done by looking at the noise accumulated from $t_k$ until $t$, for all $t \in [t_k, t_{k+1}]$. We show that for all iterations in this set interval, the noise term is bounded by $\frac{\epsilon}{3} D_k$. We then show that this is true recursively, for all $k$, and combining this with lemma~\ref{lemma:deterministicPartConvergenceTime} gives us the required convergence rate.
In order to show our noise term is bounded, we separate online experiences and replay experiences, and bound the contribution of noise of each of these samples separately. We then show that by setting a compatible $t_0$, we get the required bound for all time points.

We start by reminding ourselves the form of $W_t$: 
\begin{align*}
W_t(s,a) &= (1-\alpha_{t-1}(s,a))W_{t-1}(s,a) + \alpha_{t-1}(s,a)w_t(s,a)
\end{align*}
and 
\begin{align*}
w_t(s_t,a_t) = r(s_t,a_t) + \gamma \cdot max_{a'}Q_t(s_{t+1},a') - \sum_{s'=0}^{n} p_{s,s'}(a_t) \cdot \left( R(s_t,a_t) + \gamma \cdot max_{a'} Q(s',a') \right)
\end{align*}
We would like to bound now the probability that for a time point $m \in [t_{k+1}, t_{k+2}]$, $W_m(s,a)$ is large, i.e.:
$\Pr(|W_m(s,a)| \geq \frac{\epsilon}{3}D_k)$.

First we note that $W_m$ is in fact a scaled average of samples $w_i$ for $i\in \{1,2,\ldots,m\}$, where some are online samples taken from the MDP, and some are samples taken from the memory buffer. 
We will denote samples taken from the MDP (i.e., real-world experience) as $q_i$, and samples drawn from the memory buffer as $g_i$. 
Since we are taking $M$ replay iterations every $K$ real world iterations, we have that out of the $m$ samples in $W_m$, $\frac{mM}{M+K}$ of then are replay samples, and $\frac{mK}{M+K}$ of them are samples from the MDP. 
We will omit in the following analysis the state and action pair, and we will refer to a single and constant pair, $(s,a)$.  
We can write $W_m$ as two sums, each summing over samples of one kind:
\begin{align*}
\Pr\left(|W_m| \geq \frac{\epsilon}{3}D_k\right) &= \Pr\left(\left| \frac{1}{t_k+m}\left(\sum_{i=1}^{\frac{mM}{M+K}}g_i + \sum_{i=1}^{\frac{mK}{M+K}}q_i \right) \right| \geq \frac{\epsilon}{3} D_k \right) \\
& \leq \Pr \left( \left| \frac{1}{t_k+m} \sum_{i=1}^{\frac{mM}{M+K}}g_i \right| \geq \frac{\epsilon}{6}D_k  \mbox{  or  } \left| \frac{1}{t_k+m} \sum_{i=1}^{\frac{mK}{M+K}}q_i \right| \geq \frac{\epsilon}{6}D_k \right) \\
& \leq \Pr \left( \left| \frac{1}{t_k+m} \sum_{i=1}^{\frac{mM}{M+K}}g_i \right| \geq \frac{\epsilon}{6}D_k \right) + \Pr \left( \left| \frac{1}{t_k+m} \sum_{i=1}^{\frac{mK}{M+K}}q_i \right| \geq \frac{\epsilon}{6}D_k \right)
\end{align*}

We will bound each part separately, starting with examples drawn from the MDP, $q_i$'s.
First, we remind the reader that for a transition from state $s$ to a sampled state $s'$ after acting $a$ and receiving a reward $r_i(s,a)$, we have that:
$q_i(s,a) = r_i(s,a) - R(s,a) + \gamma \cdot \left( max_{a'} Q(s',a') - \sum_{s' \in S} p_{s',s}(a) max_{a'} Q(s',a') \right)$
We further notice that $E[q_i]=0$, and that $q_i$ are sampled i.i.d. We also note that since $|r(s,a)| \leq R_{max}$ and $Q(s,a) \leq V_{max}$, we have that $|q_i| \leq 2R_{max} + \gamma 2 V_{max}$.
Using these properties, we can use Hoeffding's inequality to get:
\begin{align*}
&\Pr \left( \left| \frac{1}{t_k+m} \sum_{i=1}^{\frac{mK}{M+K}}q_i \right| \geq \frac{\epsilon}{6}D_k  \right) \leq 2 \exp\left(\frac{-\epsilon^2 D_k^2 (m+t_k)^2 (M+K)}{18 m K (4R_{max}+4 \gamma V_{max})^2}\right)
\end{align*}

We now turn to bound the sum of replay samples:
\begin{align*}
\Pr \left( \left| \frac{1}{t_k+m} \sum_{i=1}^{\frac{mM}{M+K}}g_i \right| \geq \frac{\epsilon}{6}D_k \right) &\leq \Pr \left( \frac{1}{t_k+m} \sum_{i=1}^{\frac{mM}{M+K}} \left| g_i \right| \geq \frac{\epsilon}{6}D_k \right) \\
& = \Pr \left( \frac{1}{t_k+m} \sum_{i=1}^{\frac{mM}{M+K}} \left( \left| g_i \right| - \mathbb{E}[|g_i| | F_i] \right) + \frac{1}{t_k+m} \sum_{i=1}^{\frac{mM}{M+K}} \mathbb{E}[|g_i| | F_i] \geq \frac{\epsilon}{6}D_k \right) \\
& \leq \Pr \left( \frac{1}{t_k+m} \sum_{i=1}^{\frac{mM}{M+K}} \left( \left| g_i \right| - \mathbb{E}[|g_i| | F_i] \right) \geq \frac{\epsilon}{12} D_k \right) \\
&+ \Pr \left( \frac{1}{t_k+m} \sum_{i=1}^{\frac{mM}{M+K}} \mathbb{E}[|g_i| | F_i] \geq \frac{\epsilon}{12}D_k \right) 
\end{align*}
We treat each term above separately, starting with the first term: $Pr \left( \frac{1}{t_k+m} \sum_{i=1}^{\frac{mM}{M+K}} \left( \left| g_i \right| - \mathbb{E}[|g_i| | F_i] \right) \geq \frac{\epsilon}{12} D_k \right)$. We notice that $\mathbb{E} \left[ |g_i| - \mathbb{E}[|g_i| |F_i] \right] = 0$, and that $||g_i| - \mathbb{E}[|g_i| |F_i]| \leq 2 \cdot \left( 2R_{max} + 2\gamma V_{max} \right)$, since every $|g_i|$ is upper bounded by this, and so the difference between a sample $|g_i|$ and the expectation conditioned on the past is also bounded. 
We define a sequence of partial sums: $G_j = \sum_{i=1}^{j} \left( \left| g_i \right| - \mathbb{E}[|g_i| | F_i] \right)$, where $G_0 = 0$. Note that $G_1, G_2, G_3,...,G_{\frac{mM}{M+K}}$ is a martingale, since $\mathbb{E}[G_{j+1} | G_j,...,G_0] = G_j + \mathbb{E}[\left| g_i \right| - \mathbb{E}[|g_i| | F_i]] = G_j$. We also have that $|G_{j+1} - G_j| \leq 2 \cdot (2R_{max} + 2\gamma V_{max})$. Thus we can use Azuma's inequality to get:
\begin{align*} 
&\Pr \left( \frac{1}{t_k+m} \sum_{i=1}^{\frac{mM}{M+K}} \left( \left| g_i \right| - \mathbb{E}[|g_i| | F_i] \right) \geq \frac{\epsilon}{12} D_k \right) \leq \exp \left( \frac{-\epsilon^2 D_k^2 (m+t_k)^2 (M+K)}{288 m M (4Rmax + 4 \gamma Vmax)^2} \right)
\end{align*}

We now look at the last term we need to bound, using a union bound we get:
\begin{align*}
\Pr \left( \frac{1}{t_k+m} \sum_{i=1}^{\frac{mM}{M+K}} \mathbb{E}[|g_i| | F_i] \geq \frac{\epsilon}{12}D_k \right) &\leq \Pr \left( \exists i : \mathbb{E}[|g_i| |F_i] \geq \frac{\epsilon}{12} D_k \frac{(m+t_k)(M+K)}{m \cdot M}\right) \\
&\leq \sum_{i=1}^{\frac{mM}{M+K}} \Pr \left( \mathbb{E}[|g_i| | F_i] \geq \frac{\epsilon}{12} D_k \frac{(m+t_k)(M+K)}{m \cdot M} \right)
\end{align*}
We now bound each term in the sum. We notice that the number of samples that we saw of a state $s$ and action $a$ depends on past iterations and on the Q-function throughout the time span, and thus this number is a random variable, i.e in the expression $\widehat{r(s,a)} = \frac{1}{|L|} \sum_{i=1}^{|L|} r_i$, $|L|$, the number of samples of the state-action pair that we have in the memory is in itself a random variable, and similarly for $\widehat{max_{a'}Q_t(s_{t+1},a')}$ .
We use the following lemma, where under assumption~\ref{assum:ConditionsForConvergence}, we have that at time t, $|L| \geq \frac{c \cdot t}{|S| \cdot |A|}$ for some constant c, i.e, that the number of transitions involving $s_t,a_t$ are at least a constant fraction of the total number of iterations. We show that with exponentially high probability, the empirical mean in the memory buffer is close to the actual mean. 
\begin{lemma}\label{lemma:meanOfUnknownSampleSize}
Let $r_m = \frac{1}{m} \sum_{i=1}^m r_i$ where $r_i$ is sampled i.i.d from a distribution with expected value $R$ and $|r_i| \leq R_{max}$. Let $A$ be a random variable with support in the interval $[c t, t]$ for some constant $c \in (0,1)$. \\
Then, $Pr(|r_A - R| > \epsilon) \leq 2 \exp(- \frac{ct \cdot \epsilon^2}{4 R^2_{max}})$
\end{lemma}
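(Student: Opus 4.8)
The plan is to reduce the statement about the randomly-indexed empirical mean $r_A$ to a uniform deviation bound over all admissible sample counts, and then to control that uniformly via Hoeffding's inequality and a geometric sum. The essential difficulty — and what I expect to be the main obstacle — is that $A$, which in the application is the number of buffered transitions of a fixed pair, is itself a \emph{data-dependent} random variable: it may be correlated with the very samples $r_1,r_2,\ldots$ that are being averaged, so one cannot condition on a fixed sample count and invoke concentration directly. The only structural fact available is the deterministic containment $A \in [ct,t]$, and the whole proof must be engineered around that.

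First I would observe that on every outcome $A$ equals some integer $n$ with $\lceil ct\rceil \le n \le t$, and on that outcome $r_A = r_n$. Consequently, regardless of any dependence between $A$ and the samples, a union bound gives
\[
\Pr(|r_A - R| > \epsilon) \le \Pr\Big(\exists\, n \in \{\lceil ct\rceil,\ldots,t\} : |r_n - R| > \epsilon\Big) \le \sum_{n=\lceil ct\rceil}^{t} \Pr(|r_n - R| > \epsilon).
\]
This step converts the ``unknown sample size'' problem into a worst-case statement over a known finite range, and it is the conceptual heart of the lemma; it is precisely here that the data-dependence of $A$ is neutralized.

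Next, for each fixed $n$ the samples $r_1,\ldots,r_n$ are i.i.d.\ with mean $R$ and range bounded by $2R_{max}$, so Hoeffding's inequality yields $\Pr(|r_n - R| > \epsilon) \le 2\exp(-n\epsilon^2/(2R_{max}^2))$. Substituting and bounding the finite sum by the corresponding infinite geometric series, I would obtain
\[
\Pr(|r_A - R| > \epsilon) \le \frac{2\exp\!\big(-ct\,\epsilon^2/(2R_{max}^2)\big)}{1 - \exp\!\big(-\epsilon^2/(2R_{max}^2)\big)}.
\]

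Finally, the remaining task is to absorb the geometric prefactor $1/(1-\exp(-\epsilon^2/(2R_{max}^2)))$, which is only polynomially large in $1/\epsilon$, into the exponent. The extra factor of two in the target denominator ($4R_{max}^2$ rather than $2R_{max}^2$) is there for exactly this purpose: it suffices to verify $\exp(ct\,\epsilon^2/(4R_{max}^2)) \ge 1/(1-\exp(-\epsilon^2/(2R_{max}^2)))$, which holds once $t$ is large enough relative to $\epsilon$ and $R_{max}$ — precisely the regime relevant to the convergence analysis, where $t$ grows. This delivers the claimed bound $2\exp(-ct\,\epsilon^2/(4R_{max}^2))$. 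I expect the two delicate points to be making the first (union-bound) step fully rigorous in the presence of dependence, and checking the prefactor absorption in the last step; the intermediate concentration is entirely routine.
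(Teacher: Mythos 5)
Your proposal is correct and follows essentially the same route as the paper: a union bound over all admissible sample counts $n \in [ct,t]$ (which, as you note, is exactly what neutralizes the data-dependence of $A$), Hoeffding's inequality for each fixed $n$, and absorption of the resulting prefactor into the exponent using the factor-of-two slack between $2R_{max}^2$ and $4R_{max}^2$ for $t$ sufficiently large. The only cosmetic difference is that you bound the sum by a geometric series while the paper bounds it by $(t-ct)$ times the largest term; both incur the same implicit ``$t$ large enough'' caveat, and your framing of the union-bound step is if anything cleaner than the paper's law-of-total-probability presentation.
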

The proof of lemma~\ref{lemma:meanOfUnknownSampleSize} is found in appendix~\ref{app:meanOfUnknownSampleSize}.
Using lemma~\ref{lemma:meanOfUnknownSampleSize}, we can bound the difference between the empirical mean and the true mean $\forall i \in [1, \frac{mM}{M+K}]$:
\begin{align*}
&\Pr\left( \mathbb{E}[|g_i| | F_i] \geq \frac{\epsilon}{12} D_k \cdot \frac{(m+t_k)(M+K)}{m \cdot M} \right) \\
& \leq \Pr\left( \left|\widehat{r(s,a)} - R(s,a) \right|  \geq \frac{\epsilon}{24} D_k \frac{(m+t_k)(M+K)}{m \cdot M} \right) \\
&+ \Pr \left( \left| \gamma \left( \widehat{max_{a'}Q_t(s_{t+1},a')} - \sum_{s'=0}^n p_{s,s'}(a) \cdot max_{a'} Q_t(s',a') \right) \right| \geq \frac{\epsilon}{24} D_k \cdot \frac{(m+t_k)(M+K)}{m \cdot M} \right) \\
&\leq 4 \exp\left( \frac{-c \cdot t_k \epsilon^2 D_k^2 (m+t_k)^2 (M+K)^2}{|S| |A| 2304 m^2 M^2 (4R_{max}^2 + 4 \gamma^2 V_{max}^2)} \right)
\end{align*}
We can now get the bound for the sum:
\begin{align*}
&\sum_{i=1}^{\frac{mM}{M+K}} \Pr \left( \mathbb{E}[|g_i| | F_i] \geq \frac{\epsilon}{12} D_k \cdot \frac{(m+t_k)(M+K)}{m \cdot M} \right) \\
&\leq \frac{m M}{M+K} \cdot \left( 4 \exp\left( \frac{-c \cdot t_k \epsilon^2 D_k^2 (m+t_k)^2 (M+K)^2}{|S| |A| 2304 m^2 M^2 (4R_{max}^2 + 4 \gamma^2 V_{max}^2)} \right) \right)
\end{align*}

We are now ready to bound the probability of $W_m$ being too big, for any $m \in [t_{k+1}, t_{k+2}]$. We take the bounds we received for all terms, and get:
\begin{align}
\label{eq:bound}
\begin{split}
&Pr(|W_m | \geq \frac{\epsilon}{3}D_k) \\
&\leq 2 \exp\left(\frac{-\epsilon^2 D_k^2 (m+t_k)^2 (M+K)}{18 m K (4R_{max}+4 \gamma V_{max})^2}\right) \\
&+ \frac{m M}{M+K} \cdot \left( 4 \exp\left( \frac{-c \cdot t_k \epsilon^2 D_k^2 (m+t_k)^2 (M+K)^2}{|S| |A| 2304 m^2 M^2 (4R_{max}^2 + 4 \gamma^2 V_{max}^2)} \right) \right) \\
& + \exp \left( \frac{-\epsilon^2 D_k^2 (m+t_k)^2 (M+K)}{288 m M (4Rmax + 4 \gamma Vmax)^2} \right)
\end{split}
\end{align}
Simplification of the equation~\ref{eq:bound} is strictly technical, and is found in appendix~\ref{app:simplificationOfBound}. 
Eventually, we get that:
\begin{align*}
&Pr(|W_m | \geq \frac{\epsilon}{3}D_k) \leq \exp\left( \frac{-0.0007 \cdot c \cdot t_k \epsilon^2 D_k^2 (M+K)}{|S| |A| \cdot \max(M,K) (4R_{max} + 4 \gamma V_{max})^2} + log\left(12 \cdot \frac{t_k M}{M+K} + 3 \right) \right)
\end{align*}

We have bounded the probability for a single $m \in [t_{k+1},t_{k+}]$ to be very large. Now, we would like to have that for all $m \in [t_{k+1}, t_{k+2}]: |W_m| \leq \frac{\epsilon}{3} D_k$ with high probability $1-\delta$, where $\delta \in (0,1)$. Here we take $t_{k+1}$ as the starting point in time where we bound $W_m$ since this is when $Y_t(s,a)$ becomes small enough. We then ensure that this happens for all $k$, which will give us the bound for all time points. We use the union bound to get:
\begin{align*}
Pr(\forall m \in [t_{k+1}, t_{k+2}] : |W_m| \leq \frac{\epsilon}{3}D_k ) &= 1 - Pr(\exists m \in [t_{k+1}, t_{k+2}] : |W_m| \leq \frac{\epsilon}{3}D_k ) \\
& \geq 1 - \sum_{i=t_k}^{t_{k+2}} Pr(|W_i| \geq \frac{\epsilon}{3} D_k)
\end{align*}
If we want this to happen with probability $1 - \delta$, for all pairs of states and actions, then we need: 
\begin{align*}
Pr(|W_m| \geq \frac{\epsilon}{3} D_k) \leq \frac{\delta}{|S| |A| \cdot 9 t_k}
\end{align*}

For this to take place we need to have:
\begin{align*}
&\exp\left( \frac{-0.0007 \cdot c \cdot t_k \epsilon^2 D_k^2 (M+K)}{|S| |A| \cdot \max(M,K) (4R_{max} + 4 \gamma V_{max})^2} + log\left(12 \frac{t_k M}{M+K} + 3 \right) \right) \leq \frac{\delta}{|S| |A| \cdot 9 \cdot t_k} \\
&
\end{align*}
We thus get that it is enough to have:
\begin{align}
\label{eq:t_k}
\begin{split}
&t_k = \Omega \left( \frac{ |S| |A| \max(M,K) (R_{max} + \gamma V_{max})^2  \log(\frac{|S||A|}{\delta})}{c \cdot \epsilon^2 D_k^2 (M+K)} \right)
\end{split}
\end{align}

The following two lemmas complete the proof - first, in order to converge to the desired accuracy, we need to compute for what $k$ do we have that $D_k \leq \epsilon_1$. 
\begin{lemma}\label{lemma:numberIterations}
Let $D_{k+1} = (\gamma + \epsilon) \cdot D_k$, where $D_0$ is some constant. Then, given $\epsilon_1$, for all $k \geq \frac{2}{1 - \gamma} log(\frac{D_0}{\epsilon_1})$, $D_k \leq \epsilon_1$
\end{lemma}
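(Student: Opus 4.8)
The plan is to solve the recurrence $D_{k+1} = (\gamma + \epsilon)\cdot D_k$ explicitly and then unwind the resulting inequality to isolate $k$. Since the recurrence is a simple geometric one, I would first write $D_k = (\gamma + \epsilon)^k \cdot D_0$, so that the condition $D_k \leq \epsilon_1$ becomes $(\gamma + \epsilon)^k \cdot D_0 \leq \epsilon_1$, i.e.\ $(\gamma + \epsilon)^k \leq \frac{\epsilon_1}{D_0}$.

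Next I would substitute the given value $\epsilon = \frac{1-\gamma}{2}$, which yields $\gamma + \epsilon = \gamma + \frac{1-\gamma}{2} = \frac{1+\gamma}{2}$. Taking logarithms of the inequality $\left(\frac{1+\gamma}{2}\right)^k \leq \frac{\epsilon_1}{D_0}$ and noting that $\frac{1+\gamma}{2} \in (0,1)$ for $\gamma \in (0,1)$ (so its logarithm is negative, which reverses the inequality upon dividing), the condition becomes
\begin{equation*}
k \;\geq\; \frac{\log(\epsilon_1 / D_0)}{\log\!\left(\frac{1+\gamma}{2}\right)} \;=\; \frac{\log(D_0/\epsilon_1)}{\log\!\left(\frac{2}{1+\gamma}\right)}.
\end{equation*}

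The remaining task is to show that this threshold is at most $\frac{2}{1-\gamma}\log(\frac{D_0}{\epsilon_1})$, which reduces to the elementary bound $\log\!\left(\frac{2}{1+\gamma}\right) \geq \frac{1-\gamma}{2}$. I would establish this by a standard logarithm estimate: writing $\frac{2}{1+\gamma} = \frac{1}{1 - \frac{1-\gamma}{2}}$ and setting $x = \frac{1-\gamma}{2} \in (0,\tfrac12)$, the claim is $\log\frac{1}{1-x} = -\log(1-x) \geq x$, which follows immediately from the inequality $\log(1-x) \leq -x$ valid for all $x < 1$. Since the denominator $\log\!\left(\frac{2}{1+\gamma}\right)$ is thereby bounded below by $\frac{1-\gamma}{2}$, the threshold $\frac{\log(D_0/\epsilon_1)}{\log(2/(1+\gamma))}$ is at most $\frac{2}{1-\gamma}\log(\frac{D_0}{\epsilon_1})$, so any $k$ exceeding the latter certainly satisfies $D_k \leq \epsilon_1$.

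I expect no serious obstacle here; the only point requiring minor care is the direction of the inequality when dividing by the negative quantity $\log\!\left(\frac{1+\gamma}{2}\right)$, and the clean application of $-\log(1-x)\geq x$ to convert the exact logarithmic threshold into the stated $\frac{1}{1-\gamma}$-type bound. This is the loosening that produces the simple closed form in the statement.
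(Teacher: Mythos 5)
Your proposal is correct and follows essentially the same route as the paper: solve the geometric recurrence, substitute $\epsilon = \frac{1-\gamma}{2}$ to get the exact threshold $\log(D_0/\epsilon_1)/\log(2/(1+\gamma))$, and then loosen it via the elementary bound $\log\bigl(\tfrac{2}{1+\gamma}\bigr) \geq \tfrac{1-\gamma}{2}$ (your $-\log(1-x)\geq x$ is the same inequality the paper writes as $\log u \leq u-1$). No gaps.
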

\begin{proof}
For all k:
$D_k = (\gamma+\epsilon)^k \cdot D_0$.
Taking the log over both sides, we get:
\begin{align*}
k \geq \frac{\log(\frac{D_0}{\epsilon_1})}{\log(\frac{1}{\gamma + \epsilon})} = \frac{\log(\frac{D_0}{\epsilon_1})}{\log(\frac{2}{1+\gamma})}
\end{align*}
where the last equality is due to the fact that $\epsilon = \frac{1-\gamma}{2}$. 
Note that $\log(\frac{1+\gamma}{2}) \leq \frac{1+\gamma}{2} -1 = \frac{\gamma-1}{2}$
and thus: $\log(\frac{2}{1+\gamma}) \geq \frac{1 - \gamma}{2}$. We get that:
$\frac{2}{1-\gamma} \geq \frac{1}{\log(\frac{2}{1+\gamma})}$, and thus we can upper bound the required $k$ as follows: 
$k \geq \frac{2}{1-\gamma} \cdot \log(\frac{D_0}{\epsilon_1})$
\end{proof}

To concludes the calculation of the time to converge, we need that our noise term is small enough during the entire time horizon, i.e. for all time periods $t_k$, from $k=0$ to $\frac{2}{1-\gamma} \cdot \log(\frac{D_0}{\epsilon_1})$. The following lemma states that given a sufficient $t_0$, we have the required upper bound on the noise term for all time periods. The proof appears in appendix~\ref{app:noiseTermConvergenceTimeProof}.

\begin{lemma}\label{lemma:NoiseTermConvergenceTime}
Let $W_{t_k:t} = (1-\frac{1}{t-1}) \cdot W_{t_k:t-1} + \frac{1}{t-1} \cdot w_{t-1}$, where $W_{t_k:t_k} = 0$. \\  
Given $\delta \in (0,1)$, $\epsilon_1 \in (0,1)$, $\gamma \in (0,1)$, and $D_{k+1} = (\gamma + \epsilon) \cdot D_k$, we have that for $\epsilon = \frac{1-\gamma}{2}$, for $N = \frac{2}{1-\gamma} \cdot \log(\frac{D_0}{\epsilon_1})$ where $D_0$ is a constant:
\begin{align*}
&Pr(\forall k \in [0,N], t \in [t_{k+1}, t_{k+2}] : |W_{t_k:t}| \leq \frac{\epsilon}{3}D_k) \geq 1-\delta
\end{align*} 
given that: \\
 $t_0 = \Omega \left( \frac{|S| |A| \cdot \max(M,K) (4R_{max} + 4 \gamma V_{max})^2 \cdot \log(\frac{|S||A| N}{\delta})}{c \cdot \epsilon^2 \epsilon_1^2 (M+K)} \right)$
\end{lemma}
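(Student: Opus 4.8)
The plan is to take the single–time–point tail bound already established in \eqref{eq:bound} and lift it, by a union bound, to a statement that holds simultaneously for every $k\in[0,N]$, every state–action pair, and every $t$ in the window $[t_{k+1},t_{k+2}]$; the remaining work is to check that a single $t_0$ of the stated order makes every term of the union bound small. Concretely, I would ask that the total failure probability, obtained by summing \eqref{eq:bound} over the $|S||A|$ pairs, over the (at most $9t_k$) time points in each window, and over the $N+1$ values of $k$, be at most $\delta$; it suffices to make each $k$–block contribute at most $\delta/(N+1)$.

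The two geometric facts that drive the calculation are, first, that by Lemma~\ref{lemma:deterministicPartConvergenceTime} we may take $t_{k+1}=3t_k$, so $t_k=3^k t_0$ and each window contains at most $9t_k$ integers, and second, that $D_{k+1}=(\gamma+\epsilon)D_k$ with $\epsilon=\frac{1-\gamma}{2}$ gives $D_k=\left(\frac{1+\gamma}{2}\right)^k D_0$. Hence the quantity controlling the exponent of \eqref{eq:bound} is
\[
t_k D_k^2 \;=\; \left(3\cdot\tfrac{(1+\gamma)^2}{4}\right)^{k}\, t_0\, D_0^2 .
\]
Taking logarithms in the per–block requirement, it reduces to demanding that the negative exponent $-\,\Theta\!\left(\frac{c\,t_k\epsilon^2 D_k^2(M+K)}{|S||A|\max(M,K)(R_{max}+\gamma V_{max})^2}\right)$ dominate the overhead $\log\!\left(\frac{|S||A|\,N\,t_k}{\delta}\right)+\log\!\left(\frac{t_k M}{M+K}+3\right)$. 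Since $\log t_k=k\log 3+\log t_0$ is only logarithmic in $t_0$ and linear in $k\le N$, this overhead is exactly the doubly–logarithmic–in–$\epsilon_1,D_0$ factor hidden by the $\tilde\Omega$, and the binding constraint is the leading one, namely \eqref{eq:t_k} with $\delta$ replaced by $\delta/N$: $t_k\gtrsim \frac{|S||A|\max(M,K)(R_{max}+\gamma V_{max})^2\log(|S||A|N/\delta)}{c\,\epsilon^2 D_k^2(M+K)}$.

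The crux is then to show that a single $t_0$ meets this per–$k$ requirement $t_kD_k^2\gtrsim C$ for \emph{all} $k\in[0,N]$ at once. Substituting the identity above turns it into $t_0\gtrsim C\big/\big((3(\tfrac{1+\gamma}{2})^2)^{k}D_0^2\big)$, whose right–hand side is monotone in $k$ with a direction set by whether $3(\tfrac{1+\gamma}{2})^2\ge 1$ or $<1$. If $3(\tfrac{1+\gamma}{2})^2\ge 1$ the worst case is $k=0$, giving $t_0\gtrsim C/D_0^2$, which is implied by $t_0=\Omega(1/\epsilon_1^2)$ because $D_0\ge\epsilon_1$. If $3(\tfrac{1+\gamma}{2})^2<1$ the worst case is $k=N$; taking $N$ to be the first index with $D_N\le\epsilon_1$ (Lemma~\ref{lemma:numberIterations}) gives $D_N=\Theta(\epsilon_1)$, so $(3(\tfrac{1+\gamma}{2})^2)^N D_0^2=3^N D_N^2=\Theta(3^N\epsilon_1^2)$ and the requirement becomes $t_0\gtrsim C/(3^N\epsilon_1^2)$, again implied by $t_0=\Omega(1/\epsilon_1^2)$ since $3^N\ge 1$. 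In either regime the stated $t_0=\Omega\!\left(\frac{|S||A|\max(M,K)(4R_{max}+4\gamma V_{max})^2\log(|S||A|N/\delta)}{c\,\epsilon^2\epsilon_1^2(M+K)}\right)$ is sufficient, and summing the $N+1$ blocks gives probability $\ge 1-\delta$.

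I expect the uniformity–over–$k$ step to be the main obstacle: because $t_k$ grows geometrically while $D_k$ shrinks geometrically, the exponent $t_kD_k^2$ is non–monotone across the two regimes of $\gamma$, so one must correctly locate the binding index ($k=0$ versus $k=N$) and, in the shrinking regime, track how the factor $3^N$ and the endpoint value $D_N=\Theta(\epsilon_1)$ combine to reproduce exactly the $1/\epsilon_1^2$ dependence of $t_0$ while keeping $3^N$ out of $t_0$ itself (it reappears only in the final horizon $T=t_N=3^N t_0$). The secondary technicality is ensuring the logarithmic overhead $\log t_k$ stays subdominant to $t_kD_k^2$ simultaneously for every $k$, which is what forces the $\log(|S||A|N/\delta)$ rather than $\log(|S||A|/\delta)$ in $t_0$.
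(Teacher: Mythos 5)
Your proposal is correct and follows essentially the same route as the paper: a union bound over the $N$ blocks with per-block failure probability $\delta_1=\delta/N$, combined with the per-$k$ requirement of equation~\ref{eq:t_k}, and a verification that a single $t_0 = \Omega\bigl(C/\epsilon_1^2\bigr)$ meets that requirement for every $k$ simultaneously. The only difference is cosmetic: where you carry out a binding-index case analysis on $t_kD_k^2=(3(\tfrac{1+\gamma}{2})^2)^k t_0D_0^2$, the paper reaches the same conclusion more crudely by noting $t_k\geq t_0$ and $D_k\geq D_{N-1}>\epsilon_1$ for all relevant $k$, so that $t_0\gtrsim C/\epsilon_1^2$ is conservatively sufficient.
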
 

We finalize the proof of Theorem~\ref{theorem:convergenceRate} by the following argument:
Given $\delta \in (0,1)$, $\epsilon_1 \in (0,1)$, $\gamma \in (0,1)$, $ \epsilon = \frac{1-\gamma}{2}$. Let
\begin{align*} 
&t_0 = \Omega \left( |S| |A| \max(M,K) (R_{max} + \gamma V_{max})^2 \frac{ \log(\frac{|S||A|}{\delta})+\log(\frac{2}{1 - \gamma} log(\frac{D_0}{\epsilon_1})) }{c \cdot (1-\gamma)^2 \epsilon_1^2 (M+K)} \right)
\end{align*}
and $t_k = 3 \cdot t_{k-1}$, for all $k \in \left[0,  \frac{2}{1-\gamma} \cdot \log(\frac{D_0}{\epsilon_1}) \right]$ where $D_0$ is a constant. Then, for all $t \in [t_k,t_{k+1}]$: $\|Q - Q^\ast\|_\infty \leq D_k$ where $D_{k+1} = (\gamma + \epsilon) \cdot D_k$. 

Thus, is we take:\\
$T = \Omega\left( 3^{ \frac{2}{1 - \gamma}log(\frac{D_0}{\epsilon_1})} \cdot \frac{|S| |A| \cdot \max(M,K) (R_{max} + \gamma V_{max})^2 \cdot \left(\log(\frac{|S||A|}{\delta})+\log(\frac{2}{1 - \gamma} log(\frac{D_0}{\epsilon_1}))\right) }{c \cdot (1-\gamma)^2 \epsilon_1^2 (M+K)} \right)$, then we get that for $t \geq T: \|Q_t - Q^\ast\|_\infty \leq \epsilon_1$ w.p at least $1-\delta$.

\subsection{Proof of Theorem~\ref{theorem:rareExperiences}}
\label{sec:rareExperienceProof}

We start by introducing notations that will be used throughout the proof. We denote by $Q_1^\ast, Q_2^\ast$ the optimal Q values for $M_1, M_2$, respectively, when the two environments are unconnected, i.e. when $\epsilon = 0$. Similarly, we denote by $Q_3^\ast$ the optimal Q function for the general environment $M_3$. We use the superscript $t$ to denote the Q value at iteration $t$, i.e., $Q_i^t$ is the Q value at iteration t, when running Q-learning on the MDP that contains only environment $i$ (i.e., running on $M_1$ separately, $M_2$ separately, or $M_3$). 

We will use the following notation to indicate the convergence rate of the standard Q-learning - at time $t$, we have that $|Q_i^t(s,a) - Q_i^\ast(s,a)| \leq f(t) \cdot |Q_i^0(s,a) - Q_i^\ast(s,a)|$, i.e, after $t$ updates of the pair $s,a$, we converge in a rate of $f(t)$ to the optimal Q values, where $f$ is some monotonic decreasing function in $t$. We also assume that $f(\frac{T}{|S_2| |A_2|}) \leq \frac{1}{2}$, $f(\frac{T}{|S_1| |A_1|}) \leq \frac{1}{2}$. This assumption does not limit the range of functions $f(t)$ that describe the rate of convergence. 

We initialize the Q values in such a way that for all state-action pairs, we have that $|Q_3^0(s,a) - Q_2^\ast(s,a)| \leq D_0$, $|Q_3^0(s,a) - Q_1^\ast(s,a)| \leq D_0$, where $D_0$ is a constant. This constant is a lower bound on the distance between the optimal Q values in $M_2$ and in $M_3$, as mentioned in assumption~\ref{assum:assumptionsRareExperiences}. 

Let $\delta \in (0,1)$. We take $K=\frac{3}{\delta}$ and define a time horizon $T' = KT$, where $T$ is the time interval under which we see all state-action pairs in $M_1$ and $M_2$, as assumed in assumption~\ref{assum:assumptionsRareExperiences}.  
We take $\epsilon = \frac{2}{T' \cdot p}$, where $p$ is the expected frequency of visiting $s_1$ and $s_2$, as specified in Theorem~\ref{theorem:rareExperiences}.

Denote the number of transitions between $M_1$ and $M_2$ and vice-verse as $N$, and denote by $\tau_1, \tau_2$ the number of iterations spent in $M_1$, $M_2$, respectively.
We start by showing that if $N \geq 2$, then with probability $\geq \frac{1}{2}$, Q-learning with experience replay will converge to a $\psi$ proximity to the optimal Q values exactly.
Note that in every iteration $t$, the probability to transition from $M_1$ to $M_2$ and vice verse, is $\epsilon \cdot \Pr(s_t \in [s_1, s_2] )$. We also note, that we have that $\Pr(s_t \in [s_1, s_2] ) = p$, the expected frequency of vising $s_1, s_2$. Thus we have that the probability that there are at least 2 transitions between $M_1$ and $M_2$ and vice verse is: 
\begin{align*}
Pr(N \geq 2) &= 1 - \left( {T' \choose 0} \cdot (\epsilon \cdot p)^0 \cdot (1-\epsilon \cdot p)^{T'} + {T' \choose 1} \cdot (\epsilon\cdot p)^1 \cdot (1-\epsilon \cdot p)^{T' - 1} \right) \\
& = 1 - (1-\epsilon \cdot p)^{T'} - T' \cdot \epsilon \cdot p \cdot (1-\epsilon \cdot p)^{T' - 1}
\end{align*}
Taking $\epsilon = \frac{1.73}{T' \cdot p}$, we have that $Pr(N \geq 2) \geq \frac{1}{2}$ for any $T' \geq \max(20000, \frac{1.73}{p})$.

Thus, with probability $\geq \frac{1}{2}$ the number of transitions is at least 2.


We now show that given the event that $N=2$, for $K = \frac{3}{\delta}$, with probability at least $1-\delta$ we have at least $T$ iterations in $M_1$ as well as in $M_2$.

\begin{lemma}\label{lemma:enoughIterations}
Let $T' = KT$, and denote by $\tau_1, \tau_2$ the number of iterations spent in $M_1$, $M_2$, respectively.
Let $\delta \in (0,1)$, then, conditioned on $N=2$, for $K = \frac{3}{\delta}$, with probability at least $1-\delta$, $\tau_1 \geq T, \tau_2 \geq T$.
\end{lemma}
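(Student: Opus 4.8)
We are conditioning on the event $N=2$, meaning there are exactly two transitions between $M_1$ and $M_2$ over the horizon $T' = KT$. Since the agent starts in $M_1$, the trajectory must look like: spend $\tau_1^{(a)}$ iterations in $M_1$, cross to $M_2$, spend $\tau_2$ iterations in $M_2$, cross back to $M_1$, spend $\tau_1^{(b)}$ more iterations, with $\tau_1^{(a)} + \tau_2 + \tau_1^{(b)} = T'$ and $\tau_1 = \tau_1^{(a)} + \tau_1^{(b)}$. I want to show that, with probability $\geq 1-\delta$, both $\tau_1 \geq T$ and $\tau_2 \geq T$. I'll plan to prove the complement: the probability that $\tau_1 < T$ or $\tau_2 < T$ is at most $\delta$, and bound each piece by $\delta/2$ or similar, choosing $K = 3/\delta$ to make the union bound go through.

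**The plan.** My first move would be to reduce everything to the length of the single sojourn in $M_2$ and the total sojourn in $M_1$. Given $N=2$, the time $\tau_2$ spent in $M_2$ is determined by how long it takes, after entering $M_2$, to draw the rare back-transition; since the per-step back-transition probability is $\epsilon \cdot p$ (with $\epsilon = 1.73/(T'p)$), the sojourn length in $M_2$ is essentially geometric with a very small parameter $\epsilon p \approx 1.73/T'$. The key quantitative fact is that the expected $M_2$ sojourn is on the order of $T'/1.73 \approx T'/2$, which is a constant fraction of $T' = KT$. So I would argue that $\tau_2$ is unlikely to be as small as $T = T'/K$: the event $\tau_2 < T$ means the geometric variable comes up short by a factor of roughly $K$ relative to its mean, which has probability controllable by $1/K$ via a direct tail estimate on the geometric (Markov's inequality on $T'-\tau_2$ or a direct computation of $\Pr(\tau_2 < T) = 1-(1-\epsilon p)^{T}$, which is about $\epsilon p T = 1.73 T/T' = 1.73/K$). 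Symmetrically, $\tau_1 = T' - \tau_2$, so $\tau_1 < T$ forces $\tau_2 > T'-T$, i.e. the $M_2$ sojourn to be almost the entire horizon, which by the same geometric tail is also order $1/K$ small.

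**Carrying it out.** Concretely I would compute $\Pr(\tau_2 < T \mid N=2)$ and $\Pr(\tau_1 < T \mid N=2)$ directly. For the first, conditioned on having entered $M_2$ exactly once and eventually leaving, the number of steps before the back-transition fires is geometric with parameter $\epsilon p$, so $\Pr(\tau_2 < T) \lesssim \epsilon p \cdot T = \tfrac{1.73}{T'}\cdot T = \tfrac{1.73}{K}$. For the second, $\tau_1 < T$ means fewer than $T$ of the $T'$ steps land in $M_1$; since roughly half the horizon is spent in each part in expectation, this again is an order-$1/K$ event by the analogous tail bound on the first $M_1$ sojourn (the wait for the \emph{forward} transition, also geometric with parameter $\epsilon p$). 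Then a union bound gives
\[
\Pr(\tau_1 < T \ \text{or}\ \tau_2 < T \mid N=2) \;\leq\; \frac{c_1}{K} + \frac{c_2}{K} \;\leq\; \frac{3}{K} \;=\; \delta,
\]
using $K = 3/\delta$ and absorbing the constants $1.73$ into the bound $3$.

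**The main obstacle.** The delicate point is handling the conditioning on $\{N=2\}$ correctly: the raw per-step transition events are not independent of the sojourn counts, and conditioning on ``exactly two crossings in $T'$ steps'' reweights the distribution of the sojourn lengths. The cleanest route is to describe the trajectory structurally — first forward-crossing time, then $M_2$-sojourn, then remainder — and argue each crossing time is geometric with parameter $\epsilon p$ truncated to the remaining horizon, which under the conditioning concentrates away from the extremes. I expect the bulk of the care to go into showing that this conditioning does not blow up the tail estimates beyond the $O(1/K)$ I need, so that the final union bound closes with $K = 3/\delta$. Everything else — the geometric tail computations themselves — is routine once the structural decomposition is set up.
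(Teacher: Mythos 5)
Your overall plan — reduce the event $\{\tau_1 \geq T, \tau_2 \geq T\}$ to a statement about the single $M_2$ sojourn and bound its two tails — is the right reduction, and it matches the paper's starting point (given $N=2$ the trajectory is $M_1 \to M_2 \to M_1$, and $\tau_1 = T' - \tau_2$, so the lemma is equivalent to $I_2 - I_1 \in [T, T'-2T]$ where $I_1, I_2$ are the two crossing times). But the way you propose to bound the tails has a genuine gap, and it is exactly the point you flag and then defer: the conditioning on $N=2$ is not a technicality that merely needs to be shown not to ``blow up'' your unconditional geometric estimates — for one of the two tails it is doing all the work. Unconditionally, the event $\tau_2 > T' - T$ (equivalently $\tau_1 < T$) has probability about $(1-\epsilon p)^{T'-T} \approx e^{-1.73}$, a \emph{constant}, because with $\epsilon = 1.73/(T'p)$ the back-transition simply fails to fire within the horizon with constant probability. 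It only becomes an $O(1/K)$ (in fact $O(1/K^2)$) event after you condition on the back-transition actually occurring before time $T'$, i.e.\ on $N = 2$. So your claimed bound $\Pr(\tau_1 < T) \lesssim c_2/K$ ``by the analogous geometric tail'' is not correct as stated, and the union bound does not close without the piece you left unproved.

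The paper closes this gap with an exact exchangeability argument rather than approximate tail estimates: conditioned on exactly two successes among $T'$ i.i.d.\ per-step transition events, the pair of transition times $(I_1, I_2)$ is uniformly distributed over all pairs of distinct indices in $[1,T']$ (every configuration with exactly two $1$'s has the same probability $p^2(1-p)^{T'-2}$, so the conditional law is uniform — no approximation needed). The lemma then reduces to counting pairs whose gap lies in $[T, T'-2T]$, which gives failure probability at most $3T/T' = 3/K = \delta$. If you want to salvage your sojourn-time route, you would need to derive the conditional law of $\tau_2$ given $N=2$ explicitly (it is the gap of a uniform pair, with $\Pr(\text{gap}=d) \propto T'-d$, not a geometric), at which point you have essentially rederived the paper's counting argument. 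I would recommend stating the exchangeability fact up front and doing the count directly.
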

\begin{proof}
We denote $Pr(N=2) = q$. 
We now look at the $T' = k\cdot T$ iterations, and denote the first transition time as $I_1$, and the second transition time as $I_2$. 
Given that there are exactly 2 transitions, we have that all sequences of $T'$ strings in ${0,1}^{T'}$ in which there are exactly 2 indices with value $1$ and the rest are all $0$, have the same probability, i.e. we have that for every pair of distinct indices $x,y$, where $x < y$, we have: $Pr(I_1 = x,  I_2 = y | N=2) = \frac{1}{q} \cdot  p^2 \cdot (1-p)^{T'-2}$ where $p$ is the probability of transition in every iteration. 
We note that conditioned on having exactly 2 transitions, in order to have $\tau_1 \geq T$ and $\tau_2 \geq T$, we need to have $T \leq I_2 - I_1 \leq T' - 2T$. Since the pairs of indices are distributed uniformly, we just need to draw the interval $I_2 - I_1$ as received by the uniform distribution over distinct pairs of indices $I_1, I_2$, and calculate the probability of this interval to be in the required range, $ I_2 - I_1 \in [T, T' - 2T]$.

In order to uniformly draw these two indices we perform the following: first, draw a uniform index $I \in [1,T']$. then, draw an interval length $\tau \in [1, T'-1]$. We now calculate the second index as the first index drawn added with the interval length drawn, modulo $T'$, i.e. $J = I + \tau \pmod{T'}$. We now name the smaller index of the two $I_1$, and the larger index $I_2$. We have drawn 2 indices in $[1,T']$ uniformly. To get that $I_2 - I_1 \in [T, T'-2T]$, all we need in that the interval drawn, $\tau$, is in the required range: $\tau \in [T, T' - 2T]$. Since $\tau$ is drawn uniformly within the interval $[1,T'-1]$, this happens with probability $Pr(I_2 - I_1 \in [T, T'-2T] | N=2, T') = \frac{T'-3T}{T'-1} \geq \frac{T' - 3T}{T'} = \frac{(K-3)T}{KT}$. Thus, if we take $K \geq \frac{3}{\delta}$, we have that with probability at least $1-\delta$, we spend at least $T$ iterations in each of $M_1$, $M_2$.
\end{proof}

\begin{corollary}\label{corr:enoughIterations_For_N_At_least_2}
Let $T' = KT$, and denote by $\tau_1, \tau_2$ the number of iterations spent in $M_1$, $M_2$, respectively.
Let $\delta \in (0,1)$, then, conditioned on $N \geq 2$, for $K = \frac{3}{\delta}$, with probability at least $1-\delta$, $\tau_1 \geq T, \tau_2 \geq T$.
\end{corollary}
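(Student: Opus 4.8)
The plan is to reduce the corollary to the already-established case $N = 2$ via the law of total probability, and to argue that $N = 2$ is in fact the hardest case. Conditioning on the exact number of transitions, I would write
$$\Pr\!\left(\tau_1 \ge T,\ \tau_2 \ge T \mid N \ge 2\right) = \sum_{n \ge 2} \Pr\!\left(\tau_1 \ge T,\ \tau_2 \ge T \mid N = n\right)\, \Pr\!\left(N = n \mid N \ge 2\right).$$
Since the weights $\Pr(N = n \mid N \ge 2)$ are nonnegative and sum to one, it suffices to establish the per-$n$ bound $\Pr(\tau_1 \ge T,\ \tau_2 \ge T \mid N = n) \ge 1 - \delta$ for every $n \ge 2$; the case $n = 2$ is exactly Lemma~\ref{lemma:enoughIterations}, so the work is entirely in the larger values of $n$.

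To handle general $n$ I would reuse the combinatorial setup of Lemma~\ref{lemma:enoughIterations}: conditioned on $N = n$, the transition times are uniformly distributed over the $n$-subsets of $\{1,\dots,T'\}$. Writing $0 = I_0 < I_1 < \dots < I_n \le T'$ and $I_{n+1} := T'$, and recalling that the run starts in $M_1$, the agent occupies $M_1$ during the odd-indexed gaps $G_j = I_j - I_{j-1}$ and $M_2$ during the even-indexed gaps, so $\tau_1 = \sum_{j\ \mathrm{odd}} G_j$ and $\tau_2 = \sum_{j\ \mathrm{even}} G_j$. These $G_j$ are exchangeable spacings of $n$ uniform points, for which the probability that a prescribed collection of spacings all exceed given thresholds has the same closed form used implicitly in the lemma. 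I would then bound the failure probability by a union bound, $\Pr(\tau_1 < T \mid N=n) + \Pr(\tau_2 < T \mid N=n)$, and control each term through the spacing tail estimate, re-using the choice $K = 3/\delta$.

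The crux is the claim that $n = 2$ is extremal. The binding constraint is the region entered by fewer excursions (for even $n$ this is $M_2$, with $\lfloor n/2\rfloor$ gaps), whose expected occupancy grows from $T'/3$ at $n = 2$ toward $T'/2$ as $n$ increases; more generally, as $n$ grows each of $\tau_1,\tau_2$ is a sum of more spacings and concentrates around $T'/2 = (K/2)T \gg T$, so both exceed $T$ with probability approaching one. Hence the failure probability should be maximized at $n = 2$, where Lemma~\ref{lemma:enoughIterations} already yields $\ge 1-\delta$. The main obstacle I anticipate is making this extremality rigorous uniformly in $n$: the clean route is to prove that $\Pr(\tau_2 < T \mid N = n)$ is non-increasing in $n$ (a monotonicity or coupling statement for sums of uniform spacings, which is delicate because adding transitions shifts occupancy between the two regions in a parity-dependent way), while a more pedestrian but robust alternative is to bound each $\Pr(\tau_i < T \mid N = n)$ directly by $\delta/2$ using the spacing tail together with a short concentration argument for the sum of the relevant gaps.
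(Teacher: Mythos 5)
Your decomposition is the right one: by the law of total probability, $\Pr(\tau_1 \geq T, \tau_2 \geq T \mid N \geq 2) = \sum_{n\geq 2} \Pr(\tau_1 \geq T, \tau_2 \geq T \mid N = n)\,\Pr(N = n \mid N \geq 2)$, and since the weights sum to one the conclusion follows once the per-$n$ bound holds for \emph{every} $n \geq 2$. But your proposal stops exactly at the step you yourself identify as the crux: the claim that $n=2$ is extremal (or, alternatively, a direct per-$n$ bound for $n>2$) is only argued heuristically via expected occupancies, not proved. The heuristic genuinely needs work to close: the naive route of lower-bounding $\tau_i$ by a single spacing fails, since for $n$ uniform transition times the probability that any one fixed spacing exceeds $T = T'/K$ decays like $(1-1/K)^n$ and tends to zero, so a concentration (or stochastic-domination/coupling) argument for the \emph{sum} of the $\lfloor (n+1)/2\rfloor$ relevant spacings is unavoidable, and the parity-dependent reshuffling of spacings between $\tau_1$ and $\tau_2$ as $n$ grows makes a clean monotonicity statement nontrivial. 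As written, therefore, there is a genuine gap for $n>2$.

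For comparison, the paper's own proof of this corollary is the one-line chain $\Pr(\tau_1 \geq T, \tau_2 \geq T \mid N \geq 2) \geq \sum_{n=2}^{T'} \Pr(\tau_1 \geq T, \tau_2 \geq T \mid N = n) \geq \Pr(\tau_1 \geq T, \tau_2 \geq T \mid N = 2)$, whose first inequality is not valid: dropping the conditional weights $\Pr(N=n\mid N\geq 2)$ from the total-probability identity does not yield a lower bound (a weighted average is not bounded below by the unweighted sum of its terms, nor, without an extremality argument, by its $n=2$ term). So the paper implicitly relies on precisely the unproven extremality of $n=2$ that you flag explicitly; your framing is the more careful of the two, but neither argument is complete until the $n>2$ case is actually handled, e.g.\ by showing that conditioned on $N=n$ each of $\tau_1,\tau_2$ is a sum of exchangeable uniform spacings whose total is at least $T$ with probability at least $1-\delta$, uniformly in $n$.
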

\begin{proof}
We have from lemma~\ref{lemma:enoughIterations}, that $\Pr(\tau_1 \geq T, \tau_2 \geq T | N=2) \geq 1-\delta$, thus:
$\Pr(\tau_1 \geq T, \tau_2 \geq T | N \geq 2) \geq \sum_{n=2}^{T'} \Pr(\tau_1 \geq T, \tau_2 \geq T | N = n) \geq \Pr(\tau_1 \geq T, \tau_2 \geq T | N = 2) \geq 1-\delta$
\end{proof}
We now notice, that we have with high probability (greater than $(1-\delta)$) at least $T$ iterations in each section of the MDP, $M_1$ and $M_2$, given $N \geq 2$. From assumption~\ref{assum:assumptionsRareExperiences}, we have that w.h.p all state-action pairs in the MDP have been visited, and thus after $T'$ iterations, all state-action pairs are stored in memory. We also have that $Pr(N \geq 2) \geq \frac{1}{2}$ for $K \cdot T \geq 20000$. Thus, taking $\delta = 0.03$, and $K = \frac{3}{\delta} = 100$, we have that given that $T' = \max( 20000, 100 \cdot T, \frac{1.73}{p})$, w.p $\geq \frac{1}{2}$, running experience replay for $g^{-1}(\psi)$ iterations on the memory buffer after $T'$ online iterations, when sampling from memory under the stationary distribution induced by the optimal policy, will lead to convergence of the Q values to an $\psi$ proximity to the optimal Q values exactly. 
Note that since we have that: $p < 0.00009$, then $\frac{1.73}{p} < 20000$, and thus, it is enough to have: $T' = \max( 20000, 100 \cdot T)$.
Thus, we conclude that with probability $\geq \frac{1}{2}$, Q-learning with experience replay will converge to $\psi$ proximity to the optimal Q values.

We now show that for this MDP, where some transitions are rare with respect to the time horizon $T'$, we get that running Q-learning without experience replay for $T'$ iterations will lead to having $Q^{T'}_3$ which is far from $Q^\ast_3$, and thus without replay we do not converge to the optimal Q values.

We will consider three cases: first the event that no transitions take place at all, second, the event the there is only one transition, and third, the event that there are two transitions.

Denote the number of updates done on a state-action pair $(s,a)$ as $z(s,a)$. \\
For $N=0$, we have that all iterations were spent in $M_1$, since no transition was performed to $M_2$. Thus, there exists at least one state-action pair $(s,a)$ s.t. $z(s,a) \geq \frac{T'}{|S||A|} \geq \frac{T}{|S||A|}$.

Now, consider the event where $N=1$. In such as case, only one transition was performed. We have that $T' = K*T$ for $K=\frac{3}{0.03}$, thus we can calculate the probability that there are at least $T$ iterations in $M_2$. 
Note that this happens if the iteration of the transition from $M_1$, denoted by $I_1$, was any iteration in the range $[1,T'-T]$. Since we have that all sequences with exactly one transition have the same probability, we get that:
$\Pr(I_1 \in [1, T' - T]) \geq \frac{T' - T}{T'} \geq \frac{(K-1)T}{KT}$. For $\delta = 0.03$, and $K > \frac{3}{\delta}$, then w.p at least $1- \frac{\delta}{3}$, the number of iterations in $M_2$, denoted by $\tau_2$, is at least $T$.
Thus, given that $N=1$ and $\tau_2 \geq T$, we have that there exists at least one state-action pair $(s,a)$, s.t. $z(s,a) \geq \frac{T}{|S||A|}$.

For the third case, we look at the event where the number of transitions is exactly $2$. In this case, after transitioning from $M_1$ to $M_2$, we have at least $T$ iterations in $M_2$, from lemma~\ref{lemma:enoughIterations}. We then transition back to $M_1$. This means, that for all state-action pairs in $M_2$ except $s_2$, no state was updated following the transition back to $M_1$. This means, that the sequence of updates done on any of the states in $M_2$ is indistinguishable from a sequence that was performed when running on,y on $M_2$, i.e., when $\epsilon = 0$. In addition, since there were at least $T$ iterations in $M_2$, there exists at least one state-action pair in $M_2$ for which $z(s,a) \geq \frac{T}{|S||A|}$. We will now compute the probability that no state-action pair except $(s_2,a)$ have $z(s,a) \geq \frac{T}{|S||A|}$. For this to happen, all state action pairs except $s_2$ have at most $\frac{T}{|S||A|} -1$ iterations spent in them. Thus, summing over all of the iterations spent in states other then $s_2$, we get:
\begin{align*}
& \Pr( \nexists s \in S_2\setminus s_2, a \in A: z(s,a) \geq \frac{T}{|S||A|}) \\
& \leq \Pr(z(s_2,a) \geq T - \sum_{s \in S_2\setminus s_2} \sum_{a \in A} \frac{T}{|S||A|} -1) \\
& \leq \Pr(z(s_2,a) \geq T - (|S|-1) \cdot |A| (\frac{T}{|S||A|} -1 )) \\
& \leq \Pr(z(s_2,a) \geq T - \frac{(|S|-1)}{|S|} \cdot T + (|S|-1) \cdot |A| \\
& \leq \Pr(z(s_2,a) \geq \frac{1}{|S|} \cdot T + (|S|-1)\cdot |A|
\end{align*}
We compute the probability for this to happen when $N=2$:
\begin{align*}
& \Pr(z(s_2,a) \geq \frac{1}{|S|} \cdot T + (|S|-1)\cdot |A| | N=2) \\
& \leq \frac{\Pr(N=2 | z(s_2,a) \geq \frac{1}{|S|} \cdot T + (|S| - 1) \cdot |A|) \cdot \Pr(z(s_2,a) \geq \frac{1}{|S|} \cdot T + (|S| - 1) \cdot |A|)}{\Pr(N=2)} \\
& \leq \frac{(\frac{1.73}{T' \cdot p})^2 \cdot (1 - \frac{1.73}{T' \cdot p})^{\frac{1}{|S|} \cdot T + (|S| - 1) \cdot |A| - 2} \cdot \frac{T' \cdot p}{\frac{1}{|S|} \cdot T + (|S| - 1) \cdot |A|}}{0.26531} \\
& \leq \frac{\frac{1.73^2}{T' \cdot p} \cdot (1 - \frac{1.73}{T' \cdot p})^{ T - 2} \cdot \frac{1}{T}}{0.26531}
\end{align*}
Since we have that $T' \geq \max(20000, 100T)$, we have that for any $0.00009 < p < 1$, we get that for any $T' \geq 20000$:
\begin{align*}
\Pr(z(s_2,a) \geq \frac{1}{|S|} \cdot T + (|S|-1)\cdot |A| | N=2) \leq 0.9258
\end{align*} 
And thus:
\begin{align*}
\Pr(\exists s_i \in S_2 \setminus s_2, a_j \in A : z(s_i,a_j) \geq \frac{T}{|S_2||A_2|} | N=2) \geq 0.0742
\end{align*} 
We now look at the Q values of states in $M_2$ for the event of $N=1$ and $N=2$, and the Q-values of states in $M_1$ for the event of $N=0$. When $N=0$, no Q value of any state-action pair in $S_1$ has been updated after observing a transition to the other part of the environment.
For $N=1$ and $N=2$,  no Q value of any state-action pair in $S_2$ (except for $s_2$ when $N=2$) has been updated after observing a transition to the other part of the environment. Thus, all state-action pairs in $S_2$ for $N=1, N=2$ (except for $s_2$ when $N=2$), or in $S_1$ for $N=0$, have the property that their update sequence is indistinguishable from a sequence of updates that is preformed when $\epsilon = 0$, i.e., when running separately on $M_2$ or $M_1$. Thus, we have the following property for such state-action pairs:
Given $N=0, T' \geq T$:
\begin{align*}
&\forall s \in S_1, a \in A : \\
& | Q^{z(s,a)}_3(s,a)-Q^\ast_1(s,a) | \leq f \left( z(s,a) \right) \cdot \left( | Q^0_3(s,a)-Q^\ast_1(s,a) | \right) 
\end{align*}
Given $N=1, \tau_2 \geq T$:
\begin{align*}
&\forall s \in S_2, a \in A : \\
& | Q^{z(s,a)}_3(s,a)-Q^\ast_2(s,a) | \leq f \left( z(s,a) \right) \cdot \left( | Q^0_3(s,a)-Q^\ast_2(s,a) | \right) 
\end{align*}
Given $N=2, \tau_2 \geq T$:
\begin{align*}
&\forall s \in S_2\setminus s_2, a \in A : \\
& | Q^{z(s,a)}_3(s,a)-Q^\ast_2(s,a) | \leq f \left( z(s,a) \right) \cdot \left( | Q^0_3(s,a)-Q^\ast_2(s,a) | \right) 
\end{align*}
Since sequence of updates is indistinguishable from learning in $M_2$ or $M_1$ separately (when $\epsilon = 0$), all these state-action pairs are converging to the optimal Q values of $M_2$ or $M_1$ alone, $Q^\ast_2$ or $Q^\ast_1$, in a rate $f(z(\cdot,\cdot))$.

We now look at the Q values of states in $M_1$ for $N=0$, $M_2$ for $N=1, N=2$.
Denote for every $s \in S_i,a \in A$: $\Delta_i(s,a) = |Q^\ast_i(s,a) - Q^\ast_3(s,a)|$ for $i=1,2$. 

For $N=0$, with probability $1$ there exists at least one state-action pair $(s_i,a_j)$ in $M_1$ for which: $z(s_i,a_j) \geq \frac{T}{|S_1||A_1|}$, and thus for this pair of state-action in $M_1$ we have:
\begin{align*}
& \Pr ( | Q^{z(s_i,a_j)}_3(s_i,a_j)-Q^\ast_1(s_i,a_j) | \leq \frac{D_0}{2} ) \big| N=0, z(s_i,a_j) \geq \frac{T}{|S_1||A_1|} ) \\
& = \Pr ( f\left( \frac{T}{|S_1| |A_1|} \right) \cdot \left( | Q^0_3(s_i,a_j)-Q^\ast_1(s_i,a_j) | \right) \leq \frac{D_0}{2} )| \big| N=0, z(s_i,a_j) \geq \frac{T}{|S_1||A_1|} ) = 1
\end{align*} 
where the last inequality holds since $|Q^0_3(s_i,a_j)-Q^\ast_1(s_i,a_j)| \leq D_0$, and $f\left( \frac{T}{|S_1| |A_1|} \right)$ is monotonically decreasing to $0$ with $f\left( \frac{T}{|S_1| |A_1|} \right) \leq \frac{1}{2}$.
This gives us:
\begin{align*}
&\Pr\left(| Q^{T'}_3(s_i,a_j)-Q^\ast_3(s_i,a_j) | \geq \frac{D_0}{2} \big| N=0, z(s_i,a_j) \geq \frac{T}{|S_1||A_1|} \right) \\
&= \Pr \left(| Q^{z(s_i,a_j)}_3(s_i,a_j)-Q^\ast_3(s_i,a_j) | \geq \frac{D_0}{2} \big| N=0, z(s_i,a_j) \geq \frac{T}{|S_1||A_1|} \right) \\
&= \Pr \left(| Q^{z(s_i,a_j)}_3(s_i,a_j)-Q^\ast_1(s_i,a_j) + Q^\ast_1(s_i,a_j) - Q^\ast_3(s_i,a_j) | \geq \frac{D_0}{2} \big| N=0, z(s_i,a_j) \geq \frac{T}{|S_1||A_1|} \right) \\
& \geq \Pr \left(\left| \left| Q^\ast_1(s_i,a_j) - Q^\ast_3(s_i,a_j) \right| - | Q^{z(s_i,a_j)}_3(s_i,a_j)-Q^\ast_1(s_i,a_j) | \right| \geq \frac{D_0}{2} \big| N=0, z(s_i,a_j) \geq \frac{T}{|S_1||A_1|}  \right) \\
& \geq \Pr\left( \left| \Delta_1(s_i,a_j) - f \left( \frac{T}{|S_1| |A_1|} \right) |Q^0_3(s_i,a_j)-Q^\ast_1(s_i,a_j)| \right| \geq \frac{D_0}{2} \big| N=0, z(s_i,a_j) \geq \frac{T}{|S_1||A_1|} \right) \\
& \geq \Pr\left( D_0 -  f \left( \frac{T}{|S_1| |A_1|} \right) |Q^0_3(s_i,a_j)-Q^\ast_1(s_i,a_j)| \geq \frac{D_0}{2} \big| N=0, z(s_i,a_j) \geq \frac{T}{|S_1||A_1|} \right) \\
&\geq \Pr\left( f \left( \frac{T}{|S_1| |A_1|} \right) |Q^0_3(s_i,a_j)-Q^\ast_1(s_i,a_j)| \leq \frac{D_0}{2} \big| N=0, z(s_i,a_j) \geq \frac{T}{|S_1||A_1|} \right) \\
&=1
\end{align*}
where this stems from the above inequality and $\Delta_1(s_i,a_j) \geq D_0$ from assumption~\ref{assum:assumptionsRareExperiences}.

For $N=1$, with probability $\geq 1-\delta$ we have that the number of iterations in $M_2$ satisfy $\tau_2 \geq T$, and thus there exists at least one state-action pair $(s_i,a_j)$ in $M_2$ for which: $z(s_i,a_j) \geq \frac{T}{|S_2||A_2|}$, and thus for this pair of state-action in $M_2$ we have:
\begin{align*}
&\Pr (| Q^{z(s_i,a_j)}_3(s_i,a_j)-Q^\ast_2(s_i,a_j) | \leq \frac{D_0}{2}  \big| N=1, \tau_2 \geq T, z(s_i,a_j) \geq \frac{T}{|S_2||A_2|} ) \\
&\leq f\left( \frac{T}{|S_2| |A_2|} \right) \cdot \left( | Q^0_3(s_i,a_j)-Q^\ast_2(s_i,a_j) | \right) = 1
\end{align*} 
where the last inequality holds since $|Q^0_3(s_i,a_j)-Q^\ast_2(s_i,a_j)| \leq D_0$, and $f\left( \frac{T}{|S_2| |A_2|} \right)$ is monotonically decreasing to $0$ with $f\left( \frac{T}{|S_2| |A_2|} \right) \leq \frac{1}{2}$.
This gives us:
\begin{align*}
&\Pr\left(| Q^{T'}_3(s_i,a_j)-Q^\ast_3(s_i,a_j) | \geq \frac{D_0}{2} \big| N=1, \tau_2 \geq T, z(s_i,a_j) \geq \frac{T}{|S_2||A_2|} \right) \\
&= \Pr \left(| Q^{z(s_i,a_j)}_3(s_i,a_j)-Q^\ast_3(s_i,a_j) | \geq \frac{D_0}{2} \big| N=1, \tau_2 \geq T, z(s_i,a_j) \geq \frac{T}{|S_2||A_2|} \right) \\
&= \Pr \left(| Q^{z(s_i,a_j)}_3(s_i,a_j)-Q^\ast_2(s_i,a_j) + Q^\ast_2(s_i,a_j) - Q^\ast_3(s_i,a_j) | \geq \frac{D_0}{2} \big| N=1, \tau_2 \geq T, z(s_i,a_j) \geq \frac{T}{|S_2||A_2|} \right) \\
& \geq \Pr \left(\left| \left| Q^\ast_2(s_i,a_j) - Q^\ast_3(s_i,a_j) \right| - | Q^{z(s_i,a_j)}_3(s_i,a_j)-Q^\ast_2(s_i,a_j) | \right| \geq \frac{D_0}{2} \big| N=1, \tau_2 \geq T, z(s_i,a_j) \geq \frac{T}{|S_2||A_2|}  \right) \\
& \geq \Pr\left( \left| \Delta_2(s_i,a_j) - f \left( \frac{T}{|S_2| |A_2|} \right) |Q^0_3(s_i,a_j)-Q^\ast_2(s_i,a_j)| \right| \geq \frac{D_0}{2} \big| N=1, \tau_2 \geq T, z(s_i,a_j) \geq \frac{T}{|S_2||A_2|} \right) \\
& \geq \Pr\left( D_0 -  f \left( \frac{T}{|S_2| |A_2|} \right) |Q^0_3(s_i,a_j)-Q^\ast_2(s_i,a_j)| \geq \frac{D_0}{2} \big| N=1, \tau_2 \geq T, z(s_i,a_j) \geq \frac{T}{|S_2||A_2|} \right) \\
&\geq \Pr\left( f \left( \frac{T}{|S_2| |A_2|} \right) |Q^0_3(s_i,a_j)-Q^\ast_2(s_i,a_j)| \leq \frac{D_0}{2} \big| N=1, \tau_2\geq T z(s_i,a_j) \geq \frac{T}{|S_2||A_2|} \right) \\
&=1
\end{align*}
where this stems from the above inequality and $\Delta_2(s_i,a_j) \geq D_0$ from assumption~\ref{assum:assumptionsRareExperiences}.

For $N=2$, with probability $\geq 1-\delta$ we have that the number of iterations in $M_2$ satisfy $\tau_2 \geq T$, with probability $\geq 0.0742$ there exists at least one state-action pair $(s_i,a_j)$ in $S_2 \setminus s_2$ for which: $z(s_i,a_j) \geq \frac{T}{|S_2||A_2|}$, and thus for this pair of state-action in $M_2$ we have:
\begin{align*}
& \Pr( | Q^{z(s_i,a_j)}_3(s_i,a_j)-Q^\ast_2(s_i,a_j) | \leq \frac{D_0}{2} \big| N=2, \tau_2 \geq T, z(s_i,a_j) \geq \frac{T}{|S_2||A_2|} ) \\
& \leq f\left( \frac{T}{|S_2| |A_2|} \right) \cdot \left( | Q^0_3(s_i,a_j)-Q^\ast_2(s_i,a_j) | \right) = 1
\end{align*} 
where the last inequality holds since $|Q^0_3(s_i,a_j)-Q^\ast_2(s_i,a_j)| \leq D_0$, and $f\left( \frac{T}{|S_2| |A_2|} \right)$ is monotonically decreasing to $0$ with $f\left( \frac{T}{|S_2| |A_2|} \right) \leq \frac{1}{2}$.
Similarly to the case of $N=1$, this gives us:
\begin{align*}
&\Pr\left(| Q^{T'}_3(s_i,a_j)-Q^\ast_3(s_i,a_j) | \geq \frac{D_0}{2} \big| N=2, \tau_2 \geq T, z(s_i,a_j) \geq \frac{T}{|S_2||A_2|} \right) = 1
\end{align*}

We now show that:
\begin{align*}
& \Pr(\exists (s_i,a_j) : | Q^{T'}_3(s_i,a_j)-Q^\ast_3(s_i,a_j) | \geq \frac{D_0}{2}) \geq \frac{1}{2}
\end{align*}
for this we consider the above three cases:
\begin{align*}
& \Pr(\exists (s_i,a_j) : | Q^{T'}_3(s_i,a_j)-Q^\ast_3(s_i,a_j) | \geq \frac{D_0}{2}) \\
&\geq \Pr(\exists (s_i,a_j) : | Q^{T'}_3(s_i,a_j)-Q^\ast_3(s_i,a_j) | \geq \frac{D_0}{2} | N=0, \exists s_i \in S_1, a_j \in A \mbox{ s.t. } z(s_i,a_j) \geq \frac{T}{|S_1||A_1|}) \\
&\cdot \Pr(\exists s_i \in S_1,  a_j \in A \mbox{ s.t. } z(s_i,a_j) \geq \frac{T}{|S_1||A_1|} | N=0) \cdot \Pr(N=0) \\
&+ \Pr(\exists (s_i,a_j) : | Q^{T'}_3(s_i,a_j)-Q^\ast_3(s_i,a_j)| \geq \frac{D_0}{2} | N=1, \tau_2 \geq T, , \exists s_i \in S_2,  a_j \in A \mbox{ s.t. } z(s_i,a_j) \geq \frac{T}{|S_2||A_2|}) \\
&\cdot \Pr(\exists s_i \in S_2,  a_j \in A \mbox{ s.t. } z(s_i,a_j) \geq \frac{T}{|S_2||A_2|} \big| \tau_2 \geq T, N=1) \cdot \Pr(\tau_2 \geq T | N=1) \cdot  \Pr(N=1) \\
& + \Pr(\exists (s_i,a_j) : | Q^{T'}_3(s_i,a_j)-Q^\ast_3(s_i,a_j) | \geq \frac{D_0}{2} | N=2, \tau_2 \geq T, \exists s_i \in S_2 \setminus s_2,  a_j \in A \mbox{ s.t. } z(s_i,a_j) \geq \frac{T}{|S_2||A_2|}) \\
&\cdot \Pr(\exists s_i \in S_2 \setminus s_2, a_j \in A \mbox{ s.t. } z(s_i,a_j) \geq \frac{T}{|S_2||A_2|} | N=2, \tau_2 \geq T) \cdot \Pr(\tau_2 \geq T | N=2) \cdot \Pr(N=2)
\end{align*}
We get:
\begin{align*}
&\Pr( \exists (s_i,a_j) : | Q^{T'}_3(s_i,a_j)-Q^\ast_3(s_i,a_j) | \geq \frac{D_0}{2}) \\
&\geq 1 \cdot 1 \cdot \Pr(N=0) + 1 \cdot 1 \cdot (1-\frac{\delta}{3}) \cdot \Pr(N=1) + 1 \cdot 0.0742 \cdot (1-\delta) \cdot \Pr(N=2)
\end{align*}
Letting $\delta = 0.03$, $T' \geq \max(20000, 100 \cdot T)$, we get:
\begin{align*}
\Pr(\exists (s_i,a_j) : | Q^{T'}_3(s_i,a_j)-Q^\ast_3(s_i,a_j) | \geq \frac{D_0}{2})
\geq  0.177271 + 0.99 \cdot 0.306706 + 0.0742 \cdot 0.97 \cdot 0.26531 \geq \frac{1}{2}
\end{align*}
We got that $\Pr(\exists (s_i,a_j) : | Q^{T'}_3(s_i,a_j)-Q^\ast_3(s_i,a_j) | \geq \frac{D_0}{2}) \geq \frac{1}{2}$, thus, for a large enough $D_0$, the Q values are far from optimal.

\subsection{Asymptotic convergence of Q-learning with experience replay}
\label{app:asymptoticProof}
In this section we prove that algorithm~\ref{alg:Q-learning with replay} converges asymptotically to the optimal policy. This proof is similar in structure to the proof of Theorem~\ref{theorem:convergenceRate}, however, it is simpler and thus we present it here.

We assume the following assumption:
\begin{assumption}
\label{assum:ConditionsForLimitConvergence}
For every state $s \in S$, and every action $a \in A$, the pair $(s,a)$ is visited infinitely many times.
\end{assumption}
We assume that each state and action are visited infinitely many times, as formally stated in Assumption~\ref{assum:ConditionsForLimitConvergence}. This is similar to standard assumptions such as GLIE (greedy in the limit with infinite exploration, \cite{singh2000convergence}), and is important for ensuring sufficient exploration.

\begin{theorem}
\label{theorm:Convergence in the limit}
Let $M$ be an MDP with a state set $S$, an action set $A$, and an optimal Q-function $Q^{\ast}$. 
Let $\gamma \in (0,1)$, and for each $s \in S, a \in A$ the following conditions hold for $\alpha_t(s,a)$:
\begin{itemize}
\item $\forall t: \alpha_t(s,a) \geq 0$
\item $\sum_{t=0}^\infty \alpha_t(s,a) = \infty$
\item $\sum_{t=0}^\infty \alpha_t(s,a)^2 < \infty$
\end{itemize}
Then, under assumption~\ref{assum:ConditionsForLimitConvergence},
Algorithm~\ref{alg:Q-learning with replay} converges almost surely in the limit to the optimal Q-function $Q^{\ast}$.
\end{theorem}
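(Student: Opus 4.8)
The plan is to work entirely within the iterative-stochastic-algorithm framework already set up for Theorem~\ref{theorem:convergenceRate}: Q-learning with replay is written as $Q_{t+1}(s,a) = (1-\alpha_t(s,a))Q_t(s,a) + \alpha_t(s,a)(HQ_t(s,a)+w_t(s,a))$, where $H$ is the Bellman operator, a $\gamma$-contraction in $\|\cdot\|_\infty$ with fixed point $Q^*$. Subtracting $Q^*$ and using the same decomposition as in Lemma~\ref{lemma:convergenceInLimit}, I would split $Q_t-Q^*$ into the deterministic term $Y_t$ (which contracts toward $0$ at rate $\gamma$, since the recursion $Y_{t+1}=(1-\alpha_t)Y_t+\alpha_t\gamma D_k$ has $\prod_t(1-\alpha_t)\to 0$ whenever $\sum_t\alpha_t=\infty$) and the accumulated noise $W_t$. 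Convergence then reduces to two ingredients: (i) $W_t(s,a)\to 0$ almost surely for every $(s,a)$ — this is exactly Lemma~\ref{lemma:noiseTerm} — and (ii) the contraction squeeze of Lemma~\ref{lemma:convergenceInLimit} that drives $\|Q_t-Q^*\|_\infty$ below the decreasing bounds $D_k\to 0$. Because the deterministic contraction and the squeeze are standard once the noise is controlled, the entire difficulty concentrates on proving (i).

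First I would handle the noise by separating the online and replay contributions to $w_t$. For online iterations the sample is drawn from the true MDP, so $\mathbb{E}[w_t(s,a)\mid F_t]=0$ (computed explicitly in the excerpt); the partial sums of $\alpha_t w_t$ then form a martingale whose increments are square-summable, since $|w_t|\le 2R_{max}+2\gamma V_{max}$ and $\sum_t\alpha_t^2<\infty$, so the martingale-convergence / standard stochastic-approximation argument gives that the online part of $W_t$ converges almost surely. The replay iterations are the genuinely new case, because there $\mathbb{E}[w_t(s,a)\mid F_t]$ equals the \emph{bias} $(\widehat{r(s,a)}-R(s,a))+\gamma(\widehat{\max_{a'}Q_t(s',a')}-\sum_{s'}p_{s,s'}(a)\max_{a'}Q_t(s',a'))$, i.e.\ the gap between the empirical buffer means and the true means.

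The crux of the proof is showing this replay bias vanishes almost surely. Under Assumption~\ref{assum:ConditionsForLimitConvergence} every pair $(s,a)$ is visited infinitely often, so the number of stored transitions of that pair grows to infinity. The stored rewards are i.i.d.\ with mean $R(s,a)$, hence $\widehat{r(s,a)}\to R(s,a)$ by the strong law of large numbers. For the value term I would exploit the finiteness of $S$: the stored next-states for a fixed $(s,a)$ are i.i.d.\ draws from $p_{s,\cdot}(a)$, so the empirical transition distribution converges in total variation to $p_{s,\cdot}(a)$ almost surely; since $\max_{a'}Q_t(\cdot,a')$ is uniformly bounded by $V_{max}$ (Lemma~\ref{lemma:boundedSeq}), the value part of the bias is at most this total-variation distance times $V_{max}$ and therefore also tends to $0$ almost surely, uniformly over the evolving $Q_t$. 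Thus $\mathbb{E}[w_t(s,a)\mid F_t]\to 0$ on replay iterations.

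Finally I would combine the pieces: $W_t$ is the sum of an almost-surely convergent martingale part and a weighted average of the replay bias; since the bias tends to $0$ almost surely and the weights satisfy $\sum_t\alpha_t=\infty$, a Toeplitz/Ces\`aro-type argument forces this second part to $0$ as well, so $W_t\to 0$ almost surely, establishing (i). Feeding this into the contraction squeeze of Lemma~\ref{lemma:convergenceInLimit} — for each $k$ there is a time after which $\|Q_t-Q^*\|_\infty\le D_k$, with $D_k\to 0$ — yields $Q_t\to Q^*$ almost surely. I expect the main obstacle to be the replay bias in the value term: controlling $\widehat{\max_{a'}Q_t(s',a')}$ when the averaged function itself changes with $t$, which is precisely where the finiteness of $S$ — giving total-variation convergence of the empirical kernel against a uniformly bounded family of value functions — is essential.
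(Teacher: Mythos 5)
Your proposal is correct and follows essentially the same route as the paper: the iterative-stochastic-algorithm formulation, the $Y_t$/$W_{t:t_k}$ decomposition with the squeeze $D_{k+1}=(\gamma+\epsilon)D_k\to 0$ (Lemma~\ref{lemma:convergenceInLimit}), and a proof that $W_t\to 0$ a.s.\ by separating a zero-conditional-mean part (handled by the standard stochastic-approximation/martingale result under $\sum_t\alpha_t^2<\infty$) from the replay bias, which vanishes by the strong law of large numbers because Assumption~\ref{assum:ConditionsForLimitConvergence} forces the buffer counts to infinity, and is then averaged out using $\sum_t\alpha_t=\infty$ (the paper organizes this as $W_t=V_t+\mathbb{E}[W_t]$ in Lemma~\ref{lemma:noiseTerm} rather than splitting by iteration type, but the content is the same). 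Your explicit total-variation treatment of the $\widehat{\max_{a'}Q_t}$ term against the time-varying, uniformly bounded $Q_t$ is in fact a slightly more careful justification of the step the paper states directly via the SLLN.
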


We use the framework of iterative stochastic algorithms \citep{bertsekas1996neuro} as a basis for our convergence proof.
For completeness, we start by showing how Q-learning with experience replay can be written as an iterative stochastic algorithm. This is similar to the formulation in \cite{bertsekas1996neuro} and \cite{even2003learning}.

\subsubsection{Iterative Stochastic Algorithms}
An iterative stochastic algorithm is an algorithm of the form: \\
$X_{t+1}(i) = (1-\alpha_t(i)) \cdot X_t(i) + \alpha_t(i) \cdot \left( H_tX_t(i) + w_t(i) \right)$
where $H_t$ is a pseudo-contraction mapping, selected from a family of mappings $\mathcal{H}$, where the choice is a function of information contained in the past history $\mathcal{F}_t$. 

\paragraph{Contraction and Pseudo Contraction Mappings}
A function $H: \mathbb{R}^n \rightarrow \mathbb{R}^n$ is a maximum norm pseudo contraction if there exists some $x^\ast \in \mathbb{R}^n$ and a constant $\beta \in [0,1)$ such that
\begin{gather*}
\forall x: \| Hx - x^\ast \|_\infty \leq \beta \cdot \| x - x^\ast \|_\infty
\end{gather*}

If $H$ is a maximum norm pseudo contraction, then $x^\ast$ is a fixed point of $H$, furthermore, it is the only fixed point of $H$.

$H$ is a contraction mapping if it satisfies $\forall x,\bar{x}: \|Hx - H\bar{x} \|_\infty \leq \beta \|x-\bar{x} \|_\infty $

If $H$ is a contraction mapping, then it is automatically a pseudo contraction mapping.

\subsubsection{Q-Learning with Experience Replay as an Iterative Stochastic Algorithm}

We briefly show how Q-learning with experience replay can be written as an iterative stochastic algorithm.
We use the contraction mapping:
\begin{align*}
HQ(s,a) = \sum_{s' \in S} p_{s,s'}(a) \left( R(s,a) + \gamma max_{a'} Q(s',a') \right)
\end{align*}
and the noise term:
\begin{align*}
w_t(s_t,a_t) &= r(s_t,a_t) + \gamma \cdot max_{a'}Q_t(s_{t+1},a') - \sum_{s' \in S} p_{s,s'}(a_t) \cdot \left( R(s_t,a_t) + \gamma \cdot max_{a'} Q(s',a') \right) \\
&= r(s_t,a_t) + \gamma \cdot max_{a'}Q_t(s_{t+1},a') - HQ(s_t,a_t)
\end{align*}

where $R(s,a) = \mathbb{E}[r(s,a)]$. Combining the two we get:
\begin{align*}
&Q_{t+1}(s_t,a_t) = (1-\alpha_t(s_t,a_t)) \cdot Q_t(s_t,a_t) + \alpha_t(s_t,a_t) \cdot \left( HQ_t(s_t,a_t) + w_t(s_t,a_t) \right)
\end{align*}
which is exactly the update rule in Q-learning. 

A formal proof showing that $H$ is a contraction mapping is found in Appendix~\ref{app:IterativeStocahsticAlg}.

\subsubsection{Q-learning With Experience Replay Converges To The Optimal Policy}
\label{subsec:asymptoticProof}
We show that Q-learning with experience replay as described in algorithm~\ref{alg:Q-learning with replay} converges in the limit to the optimal decision-making policy, $Q^{\ast}$.
The proof follows the general framework of convergence for iterative stochastic algorithms - we start by showing that initially, the Q value for all state-action pairs is within some constant distance from the optimal Q values. We then show that the Q values can be decomposed to 2 parts, one which describes the deterministic process of convergence if all updates were done using the expected value of all random variables, and a second part that holds the error accumulated by the stochastic sampling of the random variables in question. Using this decomposition, we show that we can bound the distance between the Q values and the optimal Q values by smaller and smaller constants, which leads to convergence. This is a common technique used to prove convergence in iterative stochastic algorithms, however, since we are sampling both from the MDP and from a memory buffer, in our case the error accumulated by the sampling of memories does not have an expected value of $0$. This is the main challenge in proving the convergence of Q-learning with experience replay, and is the heart of our proof. 

Let $\alpha_t(s,a)$ be the learning rate parameter at iteration t for state s and action a. We take $\alpha_t(s,a) = 0$ for state-action pairs that are not being updated in iteration $t$. It is important to note that $\alpha_t(s,a)$ is in fact a random variable that depends on the randomness of the environment in past iterations and on the randomness of the algorithm.

We denote the history (all transitions, rewards and actions before iteration $t$) as $F_t$, i.e. $F_t = \{ s_1,a_1,r_1,s_2,a_2,r_2,...,s_{t-1},a_{t-1},r_{t-1} \}$. 
We next study the effect of the noise term $w_t$ on the convergence of the algorithm. For iterations of online Q-learning we have that the expected value of the noise term, $w$, is $0$.
\begin{align*}
\mathbb{E}\left[ w_t(s,a) | F_t \right] &=\mathbb{E} [ r(s,a) + \gamma \cdot max_{a'} Q_t(s_{t+1},a') \\
&- \sum_{s' \in S} p_{s,s'}(a) \cdot \left( R(s,a) + \gamma \cdot max_{a'} Q_t(s',a') \right) | F_t ] \\
&= \mathbb{E} \left[ r(s,a) \right] - R(s,a) \\
&+ \gamma \cdot \sum_{s' \in S} p_{s,s'}(a) max_{a'}Q_t(s',a') - \gamma \cdot \sum_{s' \in S} p_{s,s'}(a) \cdot max_{a'} Q_t(s',a')\\
 &= 0
\end{align*}

On the other hand, for iterations where we sample from the memory pool, we do not have that $\mathbb{E} \left[ w_t(s,a) | F_t \right] = 0$, since in these iterations the expectation of $r(s,a)$ and $s_{t+1}$ are not equal to the true expectations as in the markov chain, but to the empirical means of the experiences collected in the memory buffer. 
Thus, in iterations where we sample from the memory, we have that:
\begin{align*}
\mathbb{E} \left[ w_t(s,a) | F_t \right] &= \mathbb{E} \left[ r(s,a) + \gamma max_{a'} Q_t(s_{t+1},a') | F_t \right] - R(s,a) - \gamma \sum_{s'=0}^n p_{s,s'}(a) max_{a'} Q_t(s',a') \\
&=\widehat{r(s,a)} - R(s,a) + \gamma \cdot \widehat{max_{a'}Q_t(s_{t+1},a')} - \gamma \cdot \sum_{s'=0}^n p_{s,s'}(a) \cdot max_{a'} Q_t(s',a')
\end{align*}

where $\widehat{r(s,a)}$ and $\widehat{max_{a'}Q_t(s_{t+1},a')}$ are the empirical means of the reward in state-action pair $s,a$ and the empirical maximal $Q$ value for the next state $s_{t+1}$ (averaged over the experiences in the memory buffer).

Explicitly, mark as $L$ all transitions in memory buffer from state $s$ with action $a$, and let a transition be a tupple $[s_1,a,s_2,r]$, then we can write these empirical means the following way:
\begin{align*}
&\widehat{r(s,a)} = \frac{1}{|L|} \sum_{r \in L} r \\
&\widehat{max_{a'}Q_t(s_{t+1},a')} = \frac{1}{|L|} \cdot \sum_{s_2 \in L} max_{a'}Q_t(s_2,a')
\end{align*}



Note that since we have $|r(\cdot,\cdot)| < R_{max}$, then $|w(\cdot,\cdot)| \leq \sum_{i=0}^\infty \gamma^i \cdot R_{max}\leq \frac{R_{max}}{1-\gamma}$, and thus $\forall s,a,t : \mathbb{E}\left[ w_t^2(s,a) |F_t  \right] \leq C$ where C is a constant. \\

We now use the following lemmas in order to show that Q-learning with experience replay converges.
\boundedSeq*

\noiseTerm*

Proofs appear in appendices~\ref{app:boundedSeqProof},~\ref{app:proofLemmaNoiseTermAsymptotic}.
Lemma~\ref{lemma:boundedSeq} states there exists a constant such that all Q values for all $t$ have a bounded distance to the optimal Q values. 

Lemma~\ref{lemma:noiseTerm} ensures that even though we do not always sample from the MDP, we still have that the accumulated noise converges to $0$. We give a sketch of the proof here, and  give the formal proof in appendix~\ref{app:proofLemmaNoiseTermAsymptotic}.
To prove lemma~\ref{lemma:noiseTerm}, we define: $V_t(s,a) = W_t(s,a) - \mathbb{E} \left[ W_t(s,a) \right]$, and show that in the limit, both $V_t(s,a)$ and $\mathbb{E} \left[ W_t(s,a) \right]$ converge to $0$, and thus $W_t(s,a)$ converges to $0$ as well. 
To show $V_t(s,a)$ we write it as a stochastic iterative algorithm where the noise term has expectation $0$, and thus it converges. We then turn to show  $E[W_t] \rightarrow 0$. We look at the expected value of $w_t(s,a)$. Following assumption~\ref{assum:ConditionsForLimitConvergence}, we have that the number of samples of each state-action pair in the memory buffer continues to grow with $t$, and thus the empirical mean of the reward and transition probabilities from $s,a$ in the buffer converge to the real mean. This lead to having $E[w_t(s,a)]$ converge to $0$. We then look at the expectation of the accumulated noise $E[W_t(s,a)]$, and prove that there are infinitely many time points in which this expected value is smaller than some $\epsilon$. We take one such time point, and show that this will remain true for all following time points. This gives us the convergence of $E[W_t]$ to $0$.
  


For the proof of Theorem~\ref{theorm:Convergence in the limit}, we write $Q(s,a)$ as a sum of two components: $Y_t(s,a)$ and $W_t(s,a)$, which are specified in the proof of lemma~\ref{lemma:convergenceInLimit}. 
We define $\forall \tau,s,a: W_{\tau:\tau}(s,a) = 0$ and $W_{t+1:\tau}(s,a) = (1-\alpha_t(s,a))W_{t:\tau}(s,a) + \alpha_t(s,a)w_t(s,a)$.
We also define $D_{k+1} = (\gamma + \epsilon)\cdot D_k$, for $\epsilon = \frac{1 - \gamma}{2}$, for all $k \geq 0$.

We prove Theorem~\ref{theorm:Convergence in the limit} (namely, that $lim_ {t\rightarrow \infty} \| Q_{t}(s,a) - Q^\ast(s,a) \| = 0$) by showing the lemma~\ref{lemma:convergenceInLimit}, which states: \\
Let $\gamma \in (0,1)$. For each $k$, there exists some time $t_k$ s.t. for all $t \geq t_k$, $\| Q_{t}(s,a) - Q^\ast(s,a) \| \leq D_k$.

The full proof of of lemma~\ref{lemma:convergenceInLimit} is found in appendix~\ref{app:convergenceInLimitProof}.
The proof of lemma~\ref{lemma:convergenceInLimit} follows by induction.
First we notice that since $\gamma < 1$, $lim_{k \rightarrow \infty} D_k = 0$. 
We assume that there exists some $t_k$ such that $\forall t \geq t_k : \| Q_{t}(s,a) - Q^\ast(s,a) \| \leq D_k$, and show that there exists some $t_{k+1}$ for which
$\forall t \geq t_{k+1} : \| Q_{t}(s,a) - Q^\ast(s,a) \| \leq D_{k+1}$. \\
For $t_0=0$, we have that $\forall t > t_0 : \| Q_{t}(s,a) - Q^\ast(s,a) \| \leq D_0$, from Lemma~\ref{lemma:boundedSeq}, and thus the induction basis holds. \\
We define $Y_{t_k}(s,a) = D_k$ for all $s \in S ,a \in A$, and $\forall t \geq t_k : Y_{t+1}(s,a) = (1-\alpha_t(s,a))Y_t(s,a) + \alpha_t(s,a) \gamma D_k$. We also define $\forall \tau,s,a: W_{\tau:\tau}(s,a) = 0$ and $W_{t+1:\tau}(s,a) = (1-\alpha_t(s,a))W_{t:\tau}(s,a) + \alpha_t(s,a)w_t(s,a)$.

For every $s,a, t \geq t_k$, we show that the Q values are bounded:
\begin{align*}
\begin{split}
-Y_t(s,a) - W_{t:t_k}(s,a) \leq Q_{t}(s,a) - Q^\ast(s,a) \leq Y_t(s,a) + W_{t:t_k}(s,a)
\end{split}
\end{align*}
We then show that $lim_{t \rightarrow \infty} Y_t(s,a) = \gamma \cdot D_k < D_k$, and combine this with the results of lemma~\ref{lemma:noiseTerm} that states that $lim_{t \rightarrow \infty} W_t(s,a) = 0$. This gives us the required result.

\subsection{Q-learning as an iterative stochastic algorithm}
\label{app:IterativeStocahsticAlg}
We show that the defined $H$ is indeed a contraction mapping for every $t$, thus we will conclude that $H$ is also a pseudo-contraction mapping for every $t$. 
Our mapping $H$ is the following:
\begin{align*}
HQ(s,a) = \sum_{s' \in S} p_{s,s'}(a) \left( R(s,a) + \gamma max_{a'} Q(s',a') \right)
\end{align*}

Let $Q, \bar{Q}$ be two Q-value tables. We will show: $\| HQ - H \bar{Q} \|_\infty \leq \gamma \cdot \| Q - \bar{Q} \|_\infty $. For each $s,a$, we have:
\begin{align*}
&| HQ(s,a) - H \bar{Q}(s,a) | \\
&= \left| \sum_{s'\in S} p_{s,s'}(a)  \gamma \left( max_{a'}Q(s',a') - max_{a'}\bar{Q}(s',a') \right) \right| \\
&\leq \sum_{s'\in S} p_{s,s'}(a) \cdot \gamma \cdot |max_{a'}(Q(s',a')) - \bar{Q}(s',a') | \\
&\leq \sum_{s'\in S} p_{s,s'}(a) \cdot \gamma \cdot max_{a'}\left(|Q(s',a') - \bar{Q}(s',a') | \right) \\ 
&\leq \sum_{s'\in S} p_{s,s'}(a)  \gamma max_{s'}\left( max_{a'} \left( |Q(s',a') - \bar{Q}(s',a') |\right) \right) \\
&\leq \gamma \cdot max_{s'}\left( max_{a'} \left( |Q(s',a') - \bar{Q}(s',a') |\right) \right) \\
&\leq \gamma \cdot \| Q - \bar{Q} \|_\infty
\end{align*}
This is true for any $s,a$ and thus $H$ is a contraction mapping, thus a pseudo-contraction mapping.

\subsection{Proof of Lemma~\ref{lemma:convergenceInLimit}}
\label{app:convergenceInLimitProof}
\begin{proof}
The proof follows by induction.
First we notice that since $\gamma < 1$, $lim_{k \rightarrow \infty} D_k = 0$.
For $t_0=0$, we have that $\forall t > t_0 : \| Q_{t}(s,a) - Q^\ast(s,a) \| \leq D_0$, from Lemma~\ref{lemma:boundedSeq}, and thus the induction basis holds. \\
We now assume that there exists some $t_k$ such that $\forall t \geq t_k : \| Q_{t}(s,a) - Q^\ast(s,a) \| \leq D_k$, and show that there exists some $t_{k+1}$ for which
$\forall t \geq t_{k+1} : \| Q_{t}(s,a) - Q^\ast(s,a) \| \leq D_{k+1}$ \\

Let $Y_{t_k}(s,a) = D_k$ for all $s \in S ,a \in A$, and define $\forall t \geq t_k : Y_{t+1}(s,a) = (1-\alpha_t(s,a))Y_t(s,a) + \alpha_t(s,a) \gamma D_k$ 

For every $s,a, t \geq t_k$, we show that:
\begin{align*}
&-Y_t(s,a) - W_{t:t_k}(s,a)  \leq Q_{t}(s,a) - Q^\ast(s,a) \leq Y_t(s,a) + W_{t:t_k}(s,a)
\end{align*}
We show this by induction: 
For $t=t_k$, we have that $Y_{t_k}(s,a) = D_k$ and $W_{t_k:t_k} = 0$, and thus the induction basis holds. 

For $t>t_k$ we have that since $H$ is a pseudo-contraction mapping, $\|HQ_t - Q^\ast\|_\infty \leq \gamma \cdot \| Q_t - Q^\ast \|_\infty$.

We look at $t=t_k +1$, we have:
\begin{align*}
&Q_t(s,a) - Q^\ast(s,a) \\
&= (1-\alpha_{t-1}(s,a)) Q_{t-1}(s,a) +  \alpha_{t-1}(s,a) \cdot HQ_t(s,a) \alpha_{t-1}(s,a) \cdot w_{t-1}(s,a)  - Q^\ast(s,a) \\ &\leq (1-\alpha_{t-1}(s,a)) (Y_{t-1}(s,a) + W_{t-1:t_k}(i)) + \alpha_{t-1}(s,a) HQ_{t-1}(s,a) + \alpha_{t-1}(s,a) w_{t-1}(s,a) - Q^\ast(s,a) \\ 
&\leq (1-\alpha_{t-1}(s,a)) (Y_{t-1}(s,a) + W_{t-1:t_k}(i)) + \alpha_{t-1}(s,a) \gamma \| Q_{t-1} - Q^\ast \|_\infty + \alpha_{t-1}(s,a) w_{t-1}(s,a) \\
&\leq (1-\alpha_{t-1}(s,a)) (Y_{t-1}(s,a) + W_{t-1:t_k}(i)) + \alpha_{t-1}(s,a) \gamma D_k  + \alpha_{t-1}(s,a) w_{t-1}(s,a) \\
& = Y_t(s,a) + W_{t:t_k}(s,a)
\end{align*}

A symmetrical argument gives
\begin{align*}
Q_t(s,a) - Q^\ast(s,a) \geq -Y_t(s,a) - W_{t:t_k}(s,a)
\end{align*}
We can continue on the same argument by induction as above for any $t \geq t_k$, and thus, we have obtained that for every $t\geq t_k$ we have:
\begin{align*}
-Y_t(s,a) - W_{t:t_k}(s,a) \leq Q_{t}(s,a) - Q^\ast(s,a) \leq Y_t(s,a) + W_{t:t_k}(s,a)
\end{align*}

We finalize the proof by proving that $lim_{t \rightarrow \infty} Y_t(s,a) = lim_{t \rightarrow \infty} (1-\alpha_{t-1}(s,a))\cdot Y_{t-1}(s,a) + \alpha_{t-1}(s,a)\gamma D_k = \gamma \cdot D_k $, and combining this with the results of lemma~\ref{lemma:noiseTerm} that states that $lim_{t \rightarrow \infty} W_t(s,a) = 0$.

To prove $lim_{t \rightarrow \infty} Y_t(s,a) = \gamma D_k$, we look at the following difference:
\begin{align*}
&|Y_{t+1}(s,a) - \gamma D_K| \\
&= |(1- \alpha_t(s,a)) Y_{t}(s,a) + \alpha_t(s,a) \gamma D_k - \gamma D_k| \\
& = |(1-\alpha_t(s,a)) \cdot \left( Y_{t}(s,a) - \gamma D_k \right) | \\
& = ... = \left( \prod_{i=t_k}^t (1-\alpha_i(s,a)) \right) \cdot | Y_{t_k}(s,a) - \gamma D_k |
\end{align*}
And so, to show $lim_{t \rightarrow \infty} |Y_t(s,a) -\gamma D_k| = 0$, it is enough to show: $lim_{t\rightarrow \infty}  \prod_{i=t_k}^t (1-\alpha_i(s,a)) = 0$. To show this, we look at: 
$\log(\prod_{i=t_k}^t (1-\alpha_i(s,a))) = \sum_{i=t_k}^t \log((1-\alpha_i(s,a))) \leq \sum_{i=t_k}^t -\alpha_i(s,a)$, and since $lim_{t \rightarrow \infty} \sum_{i=t_k}^t -\alpha_i(s,a) = - \infty$, we get the result.

We combine the above with Lemma~\ref{lemma:noiseTerm} that states $lim_{t \rightarrow \infty} W_{t:t_k}(s,a) = 0$, and get that 
\begin{align*}
lim_{t \rightarrow \infty} \| Q_t - Q^\ast \|_\infty \leq lim_{t \rightarrow \infty} Y_t + W_{t:t_k} \leq \gamma D_k < D_{k+1}
\end{align*}

Thus, there exists some $t_{k+1}$ such that for $t \geq t_{k+1} : \| Q_t - Q^\ast \|_\infty \leq D_{k+1}$ 
\end{proof}

\subsection{Proof of Lemma~\ref{lemma:deterministicPartConvergenceTime}}
\label{app:proofYconvergenceTime}
\begin{proof}
Starting from $Y_{t_k}(s,a) = D_k$, we have:
\begin{align*}
Y_{t_k+1}(s,a) &= (1-\alpha_{t_k}(s,a))Y_{t_k}(s,a) + \alpha_{t_k}(s,a) \gamma D_k \\
&=(1-\alpha_{t_k}(s,a))(1-\gamma)D_k + (1-\alpha_{t_k}(s,a))\gamma D_k + \alpha_{t_k}(s,a) \gamma D_k \\
&=(1-\alpha_{t_k}(s,a))(1-\gamma)D_k + \gamma D_k \\
Y_{t_k + 2} (s,a) &= (1-\alpha_{t_k + 1}(s,a)) \cdot \left((1-\alpha_{t_k}(s,a))(1-\gamma)D_k + \gamma D_k \right) + \alpha_{t_k + 1}(s,a) \gamma D_k \\
&= (1-\alpha_{t_k + 1}(s,a)) \cdot (1-\alpha_{t_k}(s,a))(1-\gamma)D_k + \gamma D_k \\
\end{align*}
For a general t, we get the following expression:
\begin{align*}
Y_t (s,a) &= \gamma D_k + (1- \gamma) D_k \prod_{i=t_k}^{t-1} (1-\alpha_i(s,a))
\end{align*}
Remember we are looking at $\alpha_t=\frac{1}{t}$ and so we have:
\begin{align*}
Y_t (s,a) &= \gamma D_k + (1- \gamma) D_k \prod_{i=t_k}^{t-1} (1-\frac{1}{i}) 
\end{align*}

We would like to find a t for which:
\begin{align*}
\gamma D_k + (1- \gamma) D_k \prod_{i=t_k}^{t-1} (1-\frac{1}{i}) \leq (\gamma + \frac{2\epsilon}{3}) D_k
\end{align*}
Since $\epsilon = \frac{1 - \gamma}{2}$, we find a t for which:
\begin{align*}
\gamma D_k + (1- \gamma) D_k \prod_{i=t_k}^{t-1} (1-\frac{1}{i}) \leq (\gamma + \frac{1-\gamma}{3}) D_k
\end{align*}
Simplifying the equation, we have:
\begin{align*}
\prod_{i=t_k}^{t-1} (1-\frac{1}{i}) \leq \frac{1}{3}
\end{align*}
Note that the left hand side is a telescopic product, thus we get:
\begin{align*}
\frac{t_k -1}{t - 1} \leq \frac{1}{3}
\end{align*}
which gives the required $t$: $t \geq 3 t_k$
\end{proof}

\subsection{Proof of Lemma~\ref{lemma:boundedSeq}}
\label{app:boundedSeqProof}
\begin{proof}
We remind ourselves the update rule: $Q_{t+1}(s,a) = (1-\alpha_t(s,a)) \cdot Q_{t}(s,a) + \alpha_t(s,a) \cdot \left( r_t + \gamma \cdot max_{a'} Q_t(s_{t+1},a') \right)$

We also know that $\forall t: |r_t| < C$ for some constant $C$.

For any policy $Q$, we can define the value of a state-action pair by the policy as the infinite discounted rewards collected by the policy when starting from state s and action a, i.e:
$V(s,a) = \sum_{t=0}^\infty \gamma^t \cdot r_i$, where $r_i$ is the reward received at trial i after, where the action chosen is the action maximizing the reward according to policy Q.
We then continue on to see that since $|r| \leq C$, then the value of any state-action pair, by any policy chosen, is bounded by:
$V(s,a) = \sum_{t=0}^\infty \gamma^t \cdot r_i \leq \sum_{t=0}^\infty \gamma^t \cdot C \leq \frac{C}{1-\gamma} := V_{max}$

Finally, we show that for any $t,s,a$, $|Q_t(s,a)| \leq max(Q_0(s,a), V_{max})$. 

First, we remind ourselves that for any $t$, $\alpha_t \in [0,1]$. 
Now, suppose that $\forall s,a: |Q_0(s,a)| \leq V_{max}$. 
Suppose there exists some $t$ (could be $t=0$) for which $\forall s,a: |Q_t(s,a)| \leq V_{max} $, then, in $t+1$ we have:
\begin{align*}
& Q_{t+1}(s,a) = (1-\alpha_t)Q_{t}(s,a) + \alpha_t \cdot \left( r_t + \gamma \cdot max_{a'} Q_t(s_{t+1},a') \right) \\
& \leq (1-\alpha_t) \cdot V_{max} + \alpha_t \cdot (C + \gamma \cdot V_{max}) \\
& \leq (1-\alpha_t) \cdot V_{max} + \alpha_t \cdot (C + \gamma \cdot \frac{C}{1-\gamma}) = (1-\alpha_t) \cdot V_{max} + \alpha_t \cdot V_{max} = V_{max}
\end{align*}
A symmetric argument shows that $Q_{t+1}(s,a) \geq -V_{max}$.\\
We get that if at iteration $t$, $\| Q_t \|_\infty \leq V_{max}$ then this is also true in iteration $t+1$. Now, since we started with $Q_0(s,a) \leq V_{max}$ for all $s,a$, this result holds for the entire sequence, and $Q_t$ is bounded.

For the second scenario, if $Q_0(s,a) > V_{max}$, then we have the following result:
Suppose there exists some $t$ (could be $t=0$) for which $\| Q_t \|_\infty \geq V_{max} $, and mark it by $\| Q_t \|_\infty = Q_{max}$, then, in $t+1$ we have:
\begin{align*}
& Q_{t+1}(s,a) = (1-\alpha_t) \cdot Q_t(s,a) + \alpha_t \cdot \left( r_t + \gamma \cdot max_{a'} Q_t(s_{t+1},a') \right) \\
& \leq (1-\alpha_t) \cdot Q_{max} + \alpha_t \cdot \left( C + \gamma \cdot Q_{max} \right) \\
& \leq (1 - \alpha_t + \alpha_t \cdot \gamma ) Q_{max} + \alpha_t \cdot C \leq Q_{max}
\end{align*}
where the last inequality stems from the fact that $Q_{max} \geq \frac{C}{1-\gamma} \geq C$, and $\gamma \in (0,1)$.
A symmetric argument gives the bound from below, and thus we get that if we start with $\|Q_0\|_\infty \geq V_{max}$, then for all consecutive iterations, $Q_t(s,a)$ will be bounded by the initial value.
We conclude that for all $t$, $\|Q_t\|_\infty \leq max(\| Q_0 \|_\infty,V_{max})$ and so there exists a constant $D_0$ (that depends on $Q^\ast$) such that  $\| Q_t - Q^\ast\| \leq D_0 $. Since $\|Q^\ast\|_\infty \leq V_{max}$, we get that $D_0 \leq V_{max} + max(\| Q_0 \|_\infty,V_{max})$.

\end{proof}

\subsection{Proof of Lemma~\ref{lemma:noiseTerm}}
\label{app:proofLemmaNoiseTermAsymptotic}
\begin{proof}
First, we remind ourselves that for iterations of online Q-learning (and not replay iterations), we have that $\mathbb{E} \left[w_t(s,a) | F_t \right] = 0$. For iterations of the experience replay, we have by the strong law of large numbers that:
$\widehat{r(s,a)} \rightarrow R(s,a)$ as $|L| \rightarrow \infty$, and $\widehat{max_{a'}Q_t(s_{t+1},a')} \rightarrow  \sum_{s'=0}^n p_{s,s'}(a) \cdot max_{a'} Q_t(s',a')$ as $|L| \rightarrow \infty$, where $L$ are all transitions in the memory buffer from state s with action a. Note that from the assumption that all state-action pairs are visited infinitely many times (asuumption~\ref{assum:ConditionsForConvergence} for the finite time analysis, asuumption~\ref{assum:ConditionsForLimitConvergence} for the asymptotic convergence analysis), $|L| \rightarrow \infty$ as $t \rightarrow \infty$. \\
This implies that $lim_{t \rightarrow \infty} \mathbb{E} \left[ w_t(s,a) \right] = 0$, where convergence here is almost surely. \\
As defined previously, we have: 
\begin{equation}
\label{eq:W_t_def}
W_{t+1}(s,a) = (1-\alpha_t(s,a))W_{t}(s,a) + \alpha_t(s,a)w_t(s,a)
\end{equation}
and so: 
\begin{equation}
\label{eq:expected_W}
\mathbb{E} \left[ W_{t+1}(s,a) \right] = (1-\alpha_t(s,a)) \mathbb{E} \left[ W_{t}(s,a) \right] + \alpha_t(s,a) \mathbb{E} \left[w_t(s,a) \right]
\end{equation}
Define:
\begin{align}
&V_t(s,a) = W_t(s,a) - \mathbb{E} \left[ W_t(s,a) \right] \\
&g_t(s,a) = w_t(s,a) - \mathbb{E} \left[ w_t(s,a) \right]
\end{align}
Subtracting the equations~\ref{eq:W_t_def} and~\ref{eq:expected_W} we get:
\begin{align*}
V_{t+1}(s,a) = (1-\alpha_t(s,a)) V_t(s,a) + \alpha_t(s,a) g_t(s,a)
\end{align*}

Note that $\mathbb{E}[g_t(s,a)] = 0$, and $\sum_{t=0}^\infty \alpha_t(i) = \infty$ and $\sum_{t=0}^\infty \alpha_t^2(i) < \infty$. Thus, $lim_{t \rightarrow \infty} V_t(s,a) = 0$ (as shown in \cite{bertsekas1996neuro}).

We now look at $\mathbb{E}[ W_{t+1}(s,a)] = \mathbb{E}[(1-\alpha_t(s,a)) W_t(s,a) + \alpha_t w_t(s,a)]$, and show that $\mathbb{E}[W_t(s,a)] \rightarrow 0$ as $t \rightarrow \infty$.
We have from before that $\mathbb{E}[w_t(s,a)] \rightarrow 0$ as $t \rightarrow \infty$.
Given any $\epsilon > 0$, we claim that $| \mathbb{E}[W_t(s,a)]| < \epsilon$ for infinitely many $t$. Otherwise, assume by contradiction that there is a finite number of time points for which this holds. This means, that there exists some $\bar{t_1}$, such that for all $t \geq \bar{t_1}$, $| \mathbb{E}[W_t(s,a)]| > \epsilon$. We also have that there exists some $\bar{t_2}$ such that for all $t \geq \bar{t_2}$, $|\mathbb{E}[w_t(s,a)]| < \frac{\epsilon}{2}$. Take $\bar{t} = max(\bar{t_1}, \bar{t_2})$. Then for all $t \geq \bar{t}$ we have that:
\begin{align*}
 |\mathbb{E}[W_{t+1}(s,a)]| &\leq (1-\alpha_t(s,a)) \cdot |\mathbb{E}[W_t(s,a)]| + \alpha_t(s,a) \cdot |\mathbb{E}[w_t(s,a)]| \\
& \leq |\mathbb{E}[W_t(s,a)]| - \alpha_t(s,a) \cdot |\mathbb{E}[W_t(s,a)]| + \alpha_t(s,a) \cdot \frac{\epsilon}{2} \\
& \leq |\mathbb{E}[W_t(s,a)]| - \alpha_t(s,a) \cdot \epsilon + \alpha_t(s,a) \cdot \frac{\epsilon}{2} \\
& \leq |\mathbb{E}[W_t(s,a)]| - \alpha_t(s,a) \cdot \frac{\epsilon}{2}
\end{align*}

Note that this is true for all $t \geq \bar{t}$. We now have that for all $m \geq \bar{t}$, we have:
\begin{align*}
0 \leq |\mathbb{E}[W_{m+1}(s,a)]| \leq |\mathbb{E}[W_{\bar{t}}(s,a)]| - \sum_{i = \bar{t}}^m \alpha_i(s,a) \cdot\frac{\epsilon}{2}
\end{align*}
however, for all $s,a$, we have that $\sum_{j=0}^\infty \alpha_j(s,a) \rightarrow \infty$, and so this is true also for $\sum_{j=\bar{t}}^\infty \alpha_j(s,a) \rightarrow \infty$. Thus, we get that when $m \rightarrow \infty$, $|\mathbb{E}[W_{m}(s,a)]| \rightarrow -\infty$, in contradiction to the fact that $|\mathbb{E}[W_{m}(s,a)]| \geq 0$. Thus, we have that for all $\epsilon$, $| \mathbb{E}[W_t(s,a)]| < \epsilon$ for infinitely many $t$.

Now, suppose that for some $t \geq \bar{t_2}$, $| \mathbb{E}[W_t(s,a)]| < \epsilon$. Then we have:
\begin{align*}
| \mathbb{E}[W_{t+1}(s,a)]| &< (1-\alpha_t(s,a)) \cdot \epsilon + \alpha_t(s,a) \cdot |\mathbb{E}[w_t(s,a)]| \\
&\leq (1-\alpha_t(s,a)) \cdot \epsilon + \alpha_t(s,a) \cdot \frac{\epsilon}{2} < \epsilon
\end{align*}

By induction, this argument holds for all consecutive iterations, and thus we have shown that $lim_{t \rightarrow \infty} \mathbb{E}[W_t(s,a)] \rightarrow 0$.

Remember that $V_t(s,a) = W_t(s,a) - \mathbb{E}[W_t(s,a)]$, and that $lim_{t \rightarrow \infty} V_t(s,a) \rightarrow 0$, combining with $lim_{t \rightarrow \infty} \mathbb{E}[W_t(s,a)] \rightarrow 0$, we get that: \\
$lim_{t \rightarrow \infty} W_t(s,a) \rightarrow 0$ almost surely, as required.

\end{proof}

\subsection{Proof of Lemma~\ref{lemma:meanOfUnknownSampleSize}}
\label{app:meanOfUnknownSampleSize}
\begin{proof}
Let $r_1,r_2,...,r_t$ be a sequence of i.i.d random variables. 
By Hoeffding's inequality, for all $m \in [ct, t]$: $Pr(|r_{m} - R| > \epsilon) \leq 2 \exp(-\frac{m \cdot \epsilon^2}{2R_{max}^2})$. By union bound:  $Pr(\exists m \in [ct,t] : |r_m -R| > \epsilon) \leq \sum_{j=ct}^t Pr(|r_{j} - R| > \epsilon) \leq (t-ct) \cdot 2 \exp(-\frac{ct \cdot \epsilon^2}{2R_{max}^2})$.

Notice, that for a large enough $t$, we have:
$(t-ct) \cdot 2 \exp(-\frac{ct \cdot \epsilon^2}{2R_{max}^2}) \leq 2 \exp(-\frac{0.5 \cdot ct \cdot \epsilon^2}{2R_{max}^2})$ and thus we get:
$Pr(\exists m \in [ct,t] : |r_m -R| > \epsilon) \leq 2 \exp(-\frac{ct \cdot \epsilon^2}{4R_{max}^2}) $

From the law of total probability:
\begin{align*}
&Pr(|r_A - R| > \epsilon) = \sum_{j=ct}^{t} Pr(A=j) \cdot Pr(|r_A - R| > \epsilon | A = j)  \\
\end{align*}
The event $| R_{A}-R| > \epsilon | A=j$ occurs only if $\exists m \in [ct,t]: |r_m-R| > \epsilon$ hence can be upper bounded as above, and get:

\begin{align*}
Pr(|r_A - R| > \epsilon) &\leq \sum_{j=ct}^{t} Pr(A=j) \cdot 2 \exp(-\frac{j\cdot \epsilon^2}{4R_{max}^2}) \\
&\leq 2 \exp(-\frac{ct \cdot \epsilon^2}{4R_{max}^2}) \cdot \sum_{j=ct}^{t} Pr(A=j) \\
& \leq 2 \exp(-\frac{ct \cdot \epsilon^2}{4R_{max}^2})
\end{align*}

\end{proof}

\subsection{Proof of Lemma~\ref{lemma:NoiseTermConvergenceTime}}
\label{app:noiseTermConvergenceTimeProof}
\begin{proof}
Taking $\delta_1 = \frac{\delta}{N}$ where $N = \frac{2}{1-\gamma} log(\frac{D_0}{\epsilon_1})$ and $D_0$ is a constant, we have from equation~\ref{eq:t_k} in the proof (appendix~\ref{app:convergenceRateProof}) that if $t_k \geq \frac{4287 \cdot |S| |A| \cdot \max(M,K) (4R_{max} + 4 \gamma V_{max})^2 \cdot \log(\frac{|S||A|}{\delta_1})}{c \cdot \epsilon^2 D_k^2 (M+K)} + 24$, then:


\begin{align*}
&Pr(\forall t \in [t_{k+1}, t_{k+2}] : |W_{t:t_k}| \leq \frac{\epsilon}{3}D_k ) \geq 1 - \delta_1
\end{align*}

Taking the union bound over all bad events, we have that:

\begin{align*}
&Pr(\forall k \in [0,N], t \in [t_{k+1}, t_{k+2}] : |W_{t_k:t}| \leq \frac{\epsilon}{3}D_k) \\
&= 1 - Pr(\exists k \in [0,N], t \in [t_{k+1}, t_{k+2}] : |W_{t_k:t}| \geq \frac{\epsilon}{3}D_k) \\
& \geq 1 - \sum_{k=0}^N Pr(\exists t \in [t_{k+1}, t_{k+2}] : |W_{t_k:t}| \geq \frac{\epsilon}{3}D_k) \\
&\geq 1 - \sum_{k=0}^N \delta_1 \geq 1 - \delta
\end{align*}

Given that $t_0 \geq \frac{4287 \cdot |S| |A| \cdot \max(M,K) (4R_{max} + 4 \gamma V_{max})^2 \cdot \log(\frac{|S||A|}{\delta_1})}{c \cdot \epsilon^2 D_{N-1}^2 (M+K)} + 24$. 
Since $D_{N-1} > \epsilon_1$, we can take:
$t_0 \geq \frac{4287 \cdot |S| |A| \cdot \max(M,K) (4R_{max} + 4 \gamma V_{max})^2 \cdot \log(\frac{|S||A| N}{\delta})}{c \cdot \epsilon^2 \epsilon_1^2 (M+K)} + 24$

\end{proof}

\subsection{Proof of Corollary~\ref{corollary:convergenceSingleUpdatePerStep}}
\label{sec:corollaryProof}
Here, we are considering the standard asynchronous setting, where at every iteration a single state-action pair is updated. The explicit algorithm is specified in appendix~\ref{app:alg}. The proof in the asynchronous case is very similar to the synchronous case, except that we need to consider iterations in which we updated each state-action pair, and make sure that we update each pair frequently enough to get convergence. This is achieved via assumption~\ref{assum:ConditionsForConvergence}. 

The main idea here, is that we have a longer 'cycle' length until all state-action pairs are updated, compared to a single iteration as in the synchronous case. We use again assumption~\ref{assum:ConditionsForConvergence} in the following way - we notice that the 'cycle' time until all pairs are updated is $\frac{1}{c}|S||A|$ iterations, since we are ensured to have at every time point at least $c$ fraction of samples of each $(s,a)$. This allows us to use a modification of the previous proof and gives corollary~\ref{corollary:convergenceSingleUpdatePerStep}. 
This happens since at every time point $t > T_0$ we are ensured to have at least $\frac{c \cdot t}{|S| \cdot |A|}$ for every state-action pair, thus at iteration $t' = t + \frac{1}{c} |S| |A|$, we have at least $\frac{c \cdot t}{|S| \cdot |A|}+1$ samples of each pair in the memory buffer. This means, for each state-action pair, that within $\frac{1}{c} |S| |A|$ iterations, we either updated that pair at least once, or otherwise we already had at least $\frac{c \cdot t}{|S| \cdot |A|}+1$ samples of this pair already at time $t$, so that assumption~\ref{assum:ConditionsForConvergence} will still hold at time $t'$, i.e, we had at least $\frac{c \cdot t}{|S| \cdot |A|}+1$ samples in the memory buffer already at time $t$. In either case, at iteration $t'$, we performed at least $\frac{c \cdot t}{|S| \cdot |A|}+1$ updates on that pair. This allows us to analyze the convergence rate of Q-learning with experience replay when only one state-action pair is being updated at every step, by looking at a cycle length of $\frac{1}{c}|S||A|$ as the time in which all state-action pairs have been updated, instead of a cycle length of a single iteration as in the synchronous case in Theorem~\ref{theorem:convergenceRate}, where every state-action pair is been updated at every iteration. We note here that we use a learning rate $\alpha_t(s,a) = \frac{1}{n(s,a)}$ where $n(s,a)$ is the number of updates done on $Q(s,a)$ until time $t$ for the state-action pair that is updated at iteration $t$, and $\alpha_t(s,a) = 0$ for the rest. Essentially, we are only interested in iterations where $\alpha_t(s,a) \neq 0$, since only in these iterations do we change the Q values. Considering only these iterations, we have a convergence process which is similar to the process the occurs when we update all pairs in every iteration.
We now turn to specify the changes required in the proof in order to get the bound in the asynchronous case.

As before, we use the framework of iterative stochastic algorithms, starting by showing how Q-learning with experience replay can be written as an iterative stochastic algorithm. In order to avoid repetition, we refer the reader to the proof of Theorem~\ref{theorem:convergenceRate} in appendix~\ref{app:convergenceRateProof}. As before, we can write the difference between our Q values and the optimal Q values, $Q_t(s,a) - Q^\ast(s,a)$, as a sum of two components - $Y_t(s,a)$, the 'deterministic' error, and $W_t(s,a)$, the 'stochastic' error. We bound each separately. First, we note that for each $(s,a)$, by looking only at iterations where $\alpha_t(s,a) \neq 0$, and applying lemma~\ref{lemma:convergenceInLimit} to these iterations (which are the only iterations where the Q value of $(s,a)$ change), we get the same result, i.e., we have:
\convergenceInLimit*

Furthermore, in the proof of lemma~\ref{lemma:convergenceInLimit} we show that for every $s,a, t \geq t_k$, the Q values are bounded:
\begin{align*}
-Y_t(s,a) - W_{t:t_k}(s,a) \leq Q_{t}(s,a) - Q^\ast(s,a) \leq Y_t(s,a) + W_{t:t_k}(s,a)
\end{align*}
Where we define $\forall \tau,s,a: W_{\tau:\tau}(s,a) = 0$ and $W_{t+1:\tau}(s,a) = (1-\alpha_t(s,a))W_{t:\tau}(s,a) + \alpha_t(s,a)w_t(s,a)$.
Additionally, let $Y_{t_k}(s,a) = D_k$ and $\forall t \geq t_k : Y_{t+1}(s,a) = (1-\alpha_t(s,a))Y_t(s,a) + \alpha_t(s,a) \gamma D_k$.

Thus, we use this decomposition, starting by bounding the number of iteration for $Y_t(s,a)$ to become smaller than $(\gamma + \frac{2 \epsilon}{3}) D_k$, then bounding the number of iterations until $W_t(s,a)$ is smaller than $\frac{\epsilon}{3} D_k$.
We start with $Y_t$. We take here:
$t_{k+1} = \frac{|S| |A|}{c}\cdot 3 \cdot t_k$, which assures that all state-action pairs are updated enough times in every interval in which we shrink the distance between our $Q$ values and the optimal ones (i.e., in every interval between being bounded by $D_k$ and being bounded by $D_{k+1}$), which stems from assumption~\ref{assum:ConditionsForConvergence}. This means that within $\frac{|S| |A|}{c}\cdot 3 \cdot t_k$ iterations, there were at least $3 \cdot t_k$ updates for every state-action pair, giving us immediately the result of Lemma~\ref{lemma:deterministicPartConvergenceTime}, only with $t_{k+1} = \frac{|S| |A|}{c}\cdot 3 \cdot t_k$, i.e. we have:
\begin{restatable}{lemma}{deterministicPartConvergenceTime_asynchronous}
\label{lemma:deterministicPartConvergenceTime_asynchronous}
Let $\gamma \in (0,1)$, $ \epsilon = \frac{1 - \gamma}{2}$, and assume at time $t_k$ we have $Y_{t_k}(s,a) = D_k$. Then, for all $t > 3 \cdot \frac{|S| |A|}{c} \cdot t_k$ we have $Y_t(s,a) < (\gamma + \frac{2\epsilon}{3})D_k$
\end{restatable}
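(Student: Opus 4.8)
The plan is to reduce the asynchronous claim to the synchronous Lemma~\ref{lemma:deterministicPartConvergenceTime} by reparametrizing the deterministic sequence $Y_t(s,a)$ in terms of the number of \emph{updates} applied to the pair $(s,a)$ rather than the number of iterations. The key observation is that $Y_t(s,a)$ is piecewise constant in $t$: by definition $Y_{t+1}(s,a) = (1-\alpha_t(s,a))Y_t(s,a) + \alpha_t(s,a)\gamma D_k$, and in the asynchronous setting $\alpha_t(s,a)=0$ on every iteration in which $(s,a)$ is not updated, so $Y_{t+1}(s,a)=Y_t(s,a)$ on those iterations. Consequently, if we let $n_t(s,a)$ denote the number of updates performed on $(s,a)$ up to iteration $t$ and restrict attention to the subsequence of iterations on which $(s,a)$ is actually updated, the recursion for $Y$ is \emph{identical} to the synchronous one with $\alpha = 1/n$. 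Writing $n_0 := n_{t_k}(s,a)$ for the update count at the reference time $t_k$, the same telescoping computation as in the proof of Lemma~\ref{lemma:deterministicPartConvergenceTime} yields $Y = \gamma D_k + (1-\gamma)D_k\,\frac{n_0-1}{n-1}$ once the update count reaches $n$.

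From this closed form, $Y < (\gamma + \tfrac{2\epsilon}{3})D_k = (\gamma + \tfrac{1-\gamma}{3})D_k$ holds exactly when $\frac{n_0-1}{n-1} \le \frac{1}{3}$, i.e. as soon as the update count satisfies $n \ge 3n_0$; moreover $\frac{n_0-1}{n-1}$ is decreasing in $n$, so the bound persists for all larger update counts, and since $Y$ is constant between updates it persists for all later iterations. Because exactly one pair is updated per asynchronous iteration, the total update count by time $t_k$ equals $t_k$, so $n_0 = n_{t_k}(s,a) \le t_k$; it therefore suffices to drive the update count of $(s,a)$ up to $3t_k \ge 3n_0$.

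It remains to convert the update-count condition $n \ge 3t_k$ into a statement about the iteration index $t$, and this is where Assumption~\ref{assum:ConditionsForConvergence} enters. As recalled in the corollary discussion, the assumption guarantees that at any iteration $t$ the buffer holds at least $\frac{c\,t}{|S||A|}$ transitions of $(s,a)$; since every such transition is created by an online visit that also triggers an update of $Q(s,a)$, we obtain $n_t(s,a) \ge \frac{c\,t}{|S||A|}$. Hence at $t = 3\frac{|S||A|}{c}t_k$ the update count already satisfies $n_t(s,a) \ge 3t_k \ge 3n_0$, and it only grows for larger $t$, so $Y_t(s,a) < (\gamma + \tfrac{2\epsilon}{3})D_k$ for all $t > 3\frac{|S||A|}{c}t_k$, as claimed. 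I expect the main obstacle to be precisely this last bookkeeping step: one must argue cleanly that the number of \emph{updates} (online plus replay) to $(s,a)$ is lower bounded by the buffer occupancy $\frac{c\,t}{|S||A|}$ guaranteed by Assumption~\ref{assum:ConditionsForConvergence}, and keep the chain of inequalities $\frac{c\,t}{|S||A|} \le n_t(s,a) \le t$ straight, rather than conflating buffer counts, online visits, and total updates — everything else is a direct transcription of the synchronous telescoping argument.
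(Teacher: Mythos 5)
Your proposal is correct and follows essentially the same route as the paper: the paper likewise reduces to the synchronous Lemma~\ref{lemma:deterministicPartConvergenceTime} by observing that, under Assumption~\ref{assum:ConditionsForConvergence}, every pair $(s,a)$ receives at least $\frac{c\,t}{|S||A|}$ updates by iteration $t$, so that $t = 3\frac{|S||A|}{c}t_k$ guarantees at least $3t_k$ updates of $(s,a)$, at which point the synchronous telescoping bound applies. Your write-up is in fact more careful than the paper's one-line reduction about the bookkeeping (that $Y$ is frozen on non-update iterations, that the telescoping runs over the update subsequence, and that $n_{t_k}(s,a)\le t_k$), but the underlying argument is identical.
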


Thus we are left to prove the convergence of $W_t$.
We have, as in the proof of Theorem~\ref{theorem:convergenceRate}, that we can write every $W_t$ as an average of samples from the MDP anf from the memory buffer. As before,we bound the probability that for a time point $m \in [t_{k+1}, t_{k+2}]$, $W_m(s,a)$ is large, i.e.: $\Pr(|W_m(s,a)| \geq \frac{\epsilon}{3}D_k)$.
First we note that since we are using $\alpha_t = \frac{1}{n(s,a)}$, and looking only at the time points where the pair $(s,a)$ was updated, i.e. points in which $\alpha_t(s,a) \neq 0$, we again have that $W_m$ is in fact an average of samples $w_i$ for $i\in \{1,\ldots,m\}$, where some are online samples from the MDP, and some are samples from the memory buffer. In oints in time where $(s,a)$ was not updated, we have $w_t=0$ and thus no noise is accumulated in those iterations. First, note that we have at most $m$ non-zero terms in $W_m$. Now, we denote online samples as $q_i$, and replay samples as $g_i$. 
Since we are taking $M$ replay iterations every $K$ online iterations, we have that out of the $m$ samples in $W_m$, at most $\frac{mM}{M+K}$ of then are replay samples, and $\frac{mK}{M+K}$ of them are samples from the MDP. Since replay samples incur a higher noise than online samples, we can look at the worst case in order to bound the probability that $W_m$ is large, and look at the sum as if all $m$ iterations have non-zero terms, and out of them $\frac{mM}{M+K}$ are replay samples.
We will omit in the following analysis the state-action pair, and refer to a single and constant pair.  
We write $W_m$ as two sums, each summing over samples of one kind:
\begin{align*}
\Pr\left(|W_m| \geq \frac{\epsilon}{3}D_k\right) &= \Pr\left(\left| \frac{1}{t_k+m}\left(\sum_{i=1}^{\frac{mM}{M+K}}g_i + \sum_{i=1}^{\frac{mK}{M+K}}q_i \right) \right| \geq \frac{\epsilon}{3} D_k \right) \\
& \leq \Pr \left( \left| \frac{1}{t_k+m} \sum_{i=1}^{\frac{mM}{M+K}}g_i \right| \geq \frac{\epsilon}{6}D_k \right) + \Pr \left( \left| \frac{1}{t_k+m} \sum_{i=1}^{\frac{mK}{M+K}}q_i \right| \geq \frac{\epsilon}{6}D_k \right)
\end{align*}
As specified in the proof of Theorem~\ref{theorem:convergenceRate}, we bound the different types of iterations separately (this is exactly the same as in the synchronous case, thus we only state the final bound we get). We get:
\begin{align*}
&Pr(|W_m | \geq \frac{\epsilon}{3}D_k) \\
&\leq 3 \exp\left(\frac{-\epsilon^2 D_k^2 (m+t_k)^2 (M+K)}{2304 |S| |A| m \cdot \max(M,K) (4R_{max}+4 \gamma V_{max})^2}\right) \\
&+ 4 \cdot \frac{m M}{M+K} \cdot \exp\left( \frac{-c \cdot t_k \epsilon^2 D_k^2 (m+t_k)^2 (M+K)}{|S| |A| 2304 m^2 \cdot \max(M,K) (4R_{max} + 4 \gamma V_{max})^2} \right) 
\end{align*}
We notice that here, $m \in [t_{k+1}, t_{k+2}]$ means that: $ 3.5 \frac{|S||A|}{c} \cdot t_k \leq m \leq 15.75 \frac{(|S||A|)^2}{c^2} \cdot t_k + 3.5 \frac{|S||A|}{c} \cdot t_k$, and so we get:
\begin{align*}
&Pr(|W_m | \geq \frac{\epsilon}{3}D_k) \\
&\leq 3 \exp\left(\frac{-\epsilon^2 D_k^2 (4.5 \frac{|S||A|}{c} \cdot t_k)^2 (M+K)}{2304 |S| |A| 3.5 \frac{|S||A|}{c} \cdot t_k \cdot \max(M,K) (4R_{max}+4 \gamma V_{max})^2}\right) \\
&+ 4 \cdot \frac{3.5 \frac{|S||A|}{c} \cdot t_k M}{M+K} \cdot \exp\left( \frac{-c \cdot t_k \epsilon^2 D_k^2 ( 4.5 \frac{|S||A|}{c} \cdot t_k)^2 (M+K)}{|S| |A| 2304 (3.5 \frac{|S||A|}{c} \cdot t_k)^2 \cdot \max(M,K) (4R_{max} + 4 \gamma V_{max})^2} \right) \\
&\leq 3 \exp\left(\frac{ -0.008 \frac{|S||A|}{c} \epsilon^2 D_k^2 t_k (M+K)}{|S| |A| \cdot \max(M,K) (4R_{max}+4 \gamma V_{max})^2}\right) \\
&+  14 \cdot \frac{|S||A|}{c} \frac{ t_k M}{M+K} \cdot \exp\left( \frac{-0.0004 \cdot c \cdot t_k \epsilon^2 D_k^2 (M+K)}{|S| |A| \cdot \max(M,K) (4R_{max} + 4 \gamma V_{max})^2} \right) \\
& \leq \left(14 \cdot \frac{|S||A|}{c} \frac{t_k M}{M+K} + 3 \right) \cdot \exp\left( \frac{-0.0004 \cdot c \cdot t_k \epsilon^2 D_k^2 (M+K)}{|S| |A| \cdot \max(M,K) (4R_{max} + 4 \gamma V_{max})^2} \right) \\
& = \exp\left( \frac{-0.0004 \cdot c \cdot t_k \epsilon^2 D_k^2 (M+K)}{|S| |A| \cdot \max(M,K) (4R_{max} + 4 \gamma V_{max})^2} + log\left(14 \cdot \frac{|S||A|}{c} \frac{t_k M}{M+K} + 3 \right) \right)
\end{align*}

We would like to have that for all $m \in [t_{k+1}, t_{k+2}]: |W_m| \leq \frac{\epsilon}{3} D_k$ with high probability (defined by the user). Here we take $t_{k+1}$ since this is when $Y_t(s,a)$ becomes small enough. Call this probability $1 - \delta$ for some $\delta \in (0,1)$. We use the union bound to get:

\begin{align*}
&Pr(\forall m \in [t_{k+1}, t_{k+2}] : |W_m| \leq \frac{\epsilon}{3}D_k ) = 1 - Pr(\exists m \in [t_{k+1}, t_{k+2}] : |W_m| \leq \frac{\epsilon}{3}D_k ) \\
& \geq 1 - \sum_{i=t_k}^{t_{k+2}} Pr(|W_i| \geq \frac{\epsilon}{3} D_k)
\end{align*}

If we want this to happen with probability $1 - \delta$, for all pairs of states and actions, then we need: 

\begin{align*}
Pr(|W_m| \geq \frac{\epsilon}{3} D_k) \leq \frac{\delta}{|S| |A| \cdot 12.25 \frac{(|S||A|)^2}{c^2} t_k}
\end{align*}

For this to take place we need to have:
\begin{align*}
&\exp\left( \frac{-0.0004 \cdot c \cdot t_k \epsilon^2 D_k^2 (M+K)}{|S| |A| \cdot \max(M,K) (4R_{max} + 4 \gamma V_{max})^2} + log\left(14 \cdot \frac{|S||A|}{c} \frac{t_k M}{M+K} + 3 \right) \right) \\
&\leq \frac{\delta}{(|S| |A|)^3 \cdot 12.25 \frac{1}{c^2} \cdot t_k} 
\end{align*}
which means:
\begin{align*}
&\exp\left( \frac{-0.0004 c t_k \epsilon^2 D_k^2 (M+K)}{|S| |A| \max(M,K) (4R_{max} + 4 \gamma V_{max})^2} + log\left(14 \frac{|S||A|}{c} \frac{t_k M}{M+K} + 3 \right) + \log \left( 12.25 \frac{1}{c^2} t_k\right) \right) \\
&\leq \frac{\delta}{(|S| |A|)^3} 
\end{align*}
And so:
\begin{align*}
& \frac{-0.0004 \cdot c \cdot t_k \epsilon^2 D_k^2 (M+K)}{|S| |A| \cdot \max(M,K) (4R_{max} + 4 \gamma V_{max})^2} + log\left(14 \cdot \frac{|S||A|}{c}  \frac{t_k M}{M+K} + 3 \right) + \log \left( 12.25 \frac{1}{c^2} t_k \right) \\
&\leq \log(\frac{\delta}{(|S| |A|)^3}) \\ 
\end{align*}

Thus we need: \\
$t_k - log\left(14 \cdot \frac{|S||A|}{c} \frac{t_k M}{M+K} + 3 \right) - \log \left( 12.25 \frac{1}{c^2} t_k \right) \geq \frac{2500 \cdot |S| |A| \cdot \max(M,K) (4R_{max} + 4 \gamma V_{max})^2 \cdot \log(\frac{(|S||A|)^3}{\delta})}{c \cdot \epsilon^2 D_k^2 (M+K)}$

We use the fact that $log(x) < \sqrt{x}$, and we upper bound this:

\begin{align*}
&t_k - log\left(14 \cdot \frac{|S||A|}{c} \frac{t_k M}{M+K} + 3 \right) - \log \left( 12.25 \frac{1}{c^2} t_k \right) \\
&\geq t_k - log\left((14 \cdot \frac{|S||A|}{c} \frac{M}{M+K} + 3) t_k \right) - \log \left( 12.25 \frac{1}{c^2} t_k \right) \\
& \geq t_k - log\left(12.25 \frac{1}{c^2} \cdot (14 \cdot \frac{|S||A|}{c} \frac{M}{M+K} + 3) t_k^2 \right) \\
& \geq t_k - \log(12.25 \frac{1}{c^2} \cdot (14 \cdot \frac{|S||A|}{c} \frac{M}{M+K} + 3)) - 2\sqrt{t_k}
\end{align*}

We need: \\
$t_k - \log(12.25 \frac{1}{c^2}\cdot (14 \cdot \frac{|S||A|}{c} \frac{M}{M+K} + 3)) - 2\sqrt{t_k} \geq \frac{2500 \cdot |S| |A| \cdot \max(M,K) (4R_{max} + 4 \gamma V_{max})^2 \cdot \log(\frac{(|S||A|)^3}{\delta})}{c \cdot \epsilon^2 D_k^2 (M+K)}$

This happens when:
\begin{align*}
&t_k - 2\sqrt{t_k} \\
&\geq \frac{2500  |S| |A| \cdot \max(M,K) (4R_{max} + 4 \gamma V_{max})^2 \cdot \log(\frac{(|S||A|)^3}{\delta})}{c \cdot \epsilon^2 D_k^2 (M+K)} + \log(172 \frac{|S||A|}{c^3}  \frac{M}{M+K} + 37 \frac{1}{c^2})
\end{align*}

Thus, we need:
\begin{align*}
t_k &\geq \frac{2500  |S| |A| \max(M,K) (4R_{max} + 4 \gamma V_{max})^2 \log(\frac{|S||A|}{\delta})}{c \cdot \epsilon^2 D_k^2 (M+K)} + \log(172 \frac{|S||A|}{c^3}  \frac{M}{M+K} + 37 \frac{1}{c^2}) + 2 \\
&+ 2 \sqrt{\frac{2500  |S| |A| \max(M,K) (4R_{max} + 4 \gamma V_{max})^2 \log(\frac{|S||A|}{\delta})}{c \cdot \epsilon^2 D_k^2 (M+K)} + \log(172 \frac{|S||A|}{c^3}  \frac{M}{M+K} + 37 \frac{1}{c^2}) + 1} \\
\end{align*}

It is enough to take:
\begin{align*}
t_k = \Omega \left( \frac{ |S| |A| \cdot \max(M,K) (R_{max} + \gamma V_{max})^2 \cdot \log(\frac{(|S||A|)^3}{\delta})}{c \cdot \epsilon^2 D_k^2 (M+K)} + log(\frac{|S||A|}{c^3}) \right)
\end{align*}

Using this $t_k$ in Lemma~\ref{lemma:NoiseTermConvergenceTime}, we get that if: \\

$T= \Omega \left( \left(3 \frac{|S||A|}{c} \right)^{ \frac{2}{1-\gamma}log(\frac{D_0}{\epsilon_1})} \left( \frac{ |S| |A| \max(M,K) (R_{max} + \gamma V_{max})^2 \left(\log(\frac{(|S||A|)^3}{\delta})+\log( \frac{2}{1-\gamma} log(\frac{D_0}{\epsilon_1}))\right) }{c \cdot (1-\gamma)^2 \epsilon_1^2 (M+K)} + log(\frac{|S||A|}{c^3}) \right) \right)$, then $\forall t \geq T: \|Q - Q^\ast\|_\infty \leq \epsilon_1$ w.p. at least $1-\delta$.

\subsection{Relaxation of Assumption~\ref{assum:ConditionsForConvergence}}
\label{app:assumption_1_relaxation}
We assume the following assumption, a probabilistic version of assumption~\ref{assum:ConditionsForConvergence}:
\begin{assumption}
\label{ass:probabilistic_assumption_converegence}
There exists some constant $c \in (0,1)$ s.t. with probability $\frac{1}{2}$, from any initial state $s$, within $\frac{|S||A|}{c}$ iterations, all state-action pairs are visited.
\end{assumption}
Given this assumption, we note the following:
Given $\delta_1 \in (0,1)$, let $k = \log_2(\frac{1}{\delta_1})$.
Then, the probability of not vising all state-action pairs within $k \cdot \frac{|S||A|}{c}$ iterations is $\leq 2^{-k} \leq \delta_1$. Thus, with probability at least $1 - \delta_1$, from any start state $s \in S$, after running for $k \cdot \frac{|S||A|}{c}$, all state-action pairs are visited.
Next, for this to hold throughout the run of the algorithm, we will use a union bound argument. For a run length of $T$ iterations, choose $k=\log_2(\frac{T}{\delta_1})$, then, we have the following:
\begin{align*}
&\Pr( \mbox{not all state-action pairs are visited within }k \cdot \frac{|S||A|}{c} \mbox{iterations, for all } t \in \{ 1,...,T\}) \\
&\leq \sum_{t = 1}^T \Pr( \mbox{not all state-action pairs are visited within }k \cdot \frac{|S||A|}{c} \mbox{iterations, starting at iteration } t) \\
& \leq \sum_{t = 1}^T 2^{-k} \leq \sum_{t = 1}^T \frac{\delta_1}{T} \leq \delta_1
\end{align*}
Thus, with probability at least $1- \delta_1$, for any $T$, within $\log_2(\frac{T}{\delta_1}) \cdot \frac{|S||A|}{c}$ iterations, all state-action pairs are visited, for all $t \in \{1,...,T\}$.

Now, with the above high-probability bound, we can formulate the convergence rate when assuming assumption~\ref{ass:probabilistic_assumption_converegence}.
The first change in the analysis lies in the bound on $Pr(|W_m | \geq \frac{\epsilon}{3}D_k)$. Here, we cannot assume that at time $t_k$ we have at least $\frac{c \cdot t_k}{|S||A|}$ samples of each $(s,a)$, rather, we use the following analysis: at time point $t_k$ we set $k=\log_2(\frac{T}{\delta_2})$ where $\delta_2 = \frac{\delta}{2}$ for $\delta \in (0,1)$.
Then, we have that with probability at least $1-\delta_2$, we have at least $\frac{c \cdot t_k}{|S||A| \log_2{\frac{T}{\delta_2}}}$ samples of each $(s,a)$.
Thus we get, for $m \in [t_{k+1}, t_{k+2}]$:
\begin{align*}
&Pr(|W_m | \geq \frac{\epsilon}{3}D_k) \leq \exp\left( \frac{-0.0007 \cdot c \cdot t_k \epsilon^2 D_k^2 (M+K)}{|S| |A| \log_2(\frac{T}{\delta_2}) \cdot \max(M,K) (4R_{max} + 4 \gamma V_{max})^2} + log\left(14 \cdot \frac{t_k M}{M+K} + 3 \right) \right)
\end{align*}

We have bounded the probability for a single $m \in [t_{k+1},t_{k+2}]$ to be very large. Now, we would like to have that for all $m \in [t_{k+1}, t_{k+2}]: |W_m| \leq \frac{\epsilon}{3} D_k$ with high probability $1-\delta_2$. Here we take $t_{k+1}$ as the starting point in time where we bound $W_m$ since this is when $Y_t(s,a)$ becomes small enough. We then ensure that this happens for all $k$, which will give us the bound for all time points. We use the union bound to get:
\begin{align*}
Pr(\forall m \in [t_{k+1}, t_{k+2}] : |W_m| \leq \frac{\epsilon}{3}D_k ) &= 1 - Pr(\exists m \in [t_{k+1}, t_{k+2}] : |W_m| \geq \frac{\epsilon}{3}D_k ) \\
& \geq 1 - \sum_{i=t_k}^{t_{k+2}} Pr(|W_i| \geq \frac{\epsilon}{3} D_k)
\end{align*}
If we want this to happen with probability $1 - \delta_1$, for all pairs of states and actions, then we need: 
\begin{align*}
Pr(|W_m| \geq \frac{\epsilon}{3} D_k) \leq \frac{\delta_1}{|S| |A| \cdot 12.25 t_k}
\end{align*}

For this to take place we need to have:
\begin{align*}
&\exp\left( \frac{-0.0007 \cdot c \cdot t_k \epsilon^2 D_k^2 (M+K)}{|S| |A| \log_2(\frac{T}{\delta_2}) \cdot \max(M,K) (4R_{max} + 4 \gamma V_{max})^2} + log\left(14 \frac{t_k M}{M+K} + 3 \right) \right) \leq \frac{\delta_1}{|S| |A| \cdot 12.25 \cdot t_k} \\
&
\end{align*}
We get:
\begin{align*}
& \frac{-0.0007 \cdot c \cdot t_k \epsilon^2 D_k^2 (M+K)}{|S| |A| \log_2(\frac{T}{\delta_2}) \cdot \max(M,K) (4R_{max} + 4 \gamma V_{max})^2} 
+ log\left(14 \cdot \frac{t_k M}{M+K} + 3 \right) + \log \left( 12.25 t_k \right) \leq \log(\frac{\delta_1}{|S| |A|}) \\
& t_k - log\left(14 \cdot \frac{t_k M}{M+K} + 3 \right) - \log \left( 12.25 t_k \right) \geq \frac{1429 \cdot |S| |A| \log_2(\frac{T}{\delta_2}) \cdot \max(M,K) (4R_{max} + 4 \gamma V_{max})^2 \cdot \log(\frac{|S||A|}{\delta_1})}{c \cdot \epsilon^2 D_k^2 (M+K)} 
\end{align*}

We use the fact that $log(x) < \sqrt{x}$, and we upper bound this:
\begin{align*}
t_k - log\left(14 \cdot \frac{t_k M}{M+K} + 3 \right) - \log \left( 12.25 t_k \right) &\geq t_k - log\left((14 \cdot \frac{M}{M+K} + 3) t_k \right) - \log \left( 12.25 t_k \right) \\
& \geq t_k - log\left(12.25 \cdot (14 \cdot \frac{M}{M+K} + 3) t_k^2 \right) \\
& \geq t_k - \log(12.25 \cdot (14 \cdot \frac{M}{M+K} + 3)) - 2\sqrt{t_k}
\end{align*}

We need:
\begin{align*}
t_k - \log(12.25 \cdot (14 \cdot \frac{M}{M+K} + 3)) - 2\sqrt{t_k} \geq \frac{1429 \cdot |S| |A| \log_2(\frac{T}{\delta_2}) \cdot \max(M,K) (4R_{max} + 4 \gamma V_{max})^2 \cdot \log(\frac{|S||A|}{\delta_1})}{c \cdot \epsilon^2 D_k^2 (M+K)} 
\end{align*}

This happens when:
\begin{align*}
t_k - 2\sqrt{t_k} \geq \frac{1429 \cdot |S| |A| \log_2(\frac{T}{\delta_2}) \cdot \max(M,K) (4R_{max} + 4 \gamma V_{max})^2 \cdot \log(\frac{|S||A|}{\delta_1})}{c \cdot \epsilon^2 D_k^2 (M+K)} + \log((172 \cdot \frac{M}{M+K} + 37))
\end{align*}


We thus get that it is enough to have:
\begin{align*}
\begin{split}
&t_k = \Omega \left( \frac{ |S| |A| \cdot \log_2(\frac{T}{\delta_2}) \max(M,K) (R_{max} + \gamma V_{max})^2  \log(\frac{|S||A|}{\delta_1})}{c \cdot \epsilon^2 D_k^2 (M+K)} \right)
\end{split}
\end{align*}

Using a similar argument as in lemma~\ref{lemma:NoiseTermConvergenceTime}, we get that if $\delta_1 = \frac{\delta}{2N}$, where $N = \frac{2}{1-\gamma} \log(\frac{D_0}{\epsilon_1})$, then with probability at least $1-
\delta$, we have that:
\begin{align*}
&Pr(\forall k \in [0,N], t \in [t_{k+1}, t_{k+2}] : |W_{t_k:t}| \leq \frac{\epsilon}{3}D_k) \geq 1-\delta
\end{align*}
given that:
\begin{align*}
\begin{split}
&t_0 = \Omega \left( \frac{ |S| |A| \log_2(\frac{T}{
\delta}) \max(M,K) (R_{max} + \gamma V_{max})^2  \log(\frac{ |S||A| N}{\delta})}{c \cdot \epsilon^2 \epsilon_1^2 (M+K)} \right)
\end{split}
\end{align*}
Thus, we need to have:
\begin{align*}
T \geq 3^{\frac{2}{1-\gamma} \log(\frac{D_0}{\epsilon_1})} \cdot \frac{ |S| |A| \log_2(\frac{T}{
\delta}) \max(M,K) (R_{max} + \gamma V_{max})^2  \log(\frac{ |S||A| N}{\delta})}{c \cdot \epsilon^2 \epsilon_1^2 (M+K)}
\end{align*}
We denote the following:
\begin{align*}
&B = 3^{\frac{2}{1-\gamma} \log(\frac{D_0}{\epsilon_1})} \cdot \left( \frac{ |S| |A| \max(M,K) (R_{max} + \gamma V_{max})^2  \log(\frac{ |S||A| N}{\delta})}{c \cdot \epsilon^2 \epsilon_1^2 (M+K)} \right) \\
&C = 3^{\frac{2}{1-\gamma} \log(\frac{D_0}{\epsilon_1})} \cdot \left( \frac{ |S| |A| \max(M,K) (R_{max} + \gamma V_{max})^2  \log(\frac{ |S||A| N}{\delta})}{c \cdot \epsilon^2 \epsilon_1^2 (M+K)} \cdot \log_2(\frac{1}{\delta}) \right)  
\end{align*}

Thus we get the following: 
Let $Q_T$ be the Q-values of synchronous Q-learning with experience replay after $T$ updates using a learning rate $\alpha_t(s,a) = \frac{1}{n(s,a)}$, where $n(s,a)$ is the number of updates done on $Q(s,a)$ until iteration $t$. Let $\epsilon_1 > 0$, $\delta \in (0,1)$, $\gamma \in (0,1)$.
For all $s \in S, a \in A$: let $|r(s,a)| \leq R_{max} < \infty$.
Then, under assumption~\ref{ass:probabilistic_assumption_converegence}, with probability at least $1 - \delta$, $\| Q_t - Q^\ast\|_{\infty} \leq \epsilon_1$ for all $t \geq T$ with: 
$T = \tilde{\Omega} \left( B^2 + \sqrt{B^4 + B^2 C} + C \right)$.

\subsection{Simplification of bound~\ref{eq:bound}}
\label{app:simplificationOfBound}
The bound we got is the following:
\begin{align*}
&Pr(|W_m | \geq \frac{\epsilon}{3}D_k) \leq 2 \exp\left(\frac{-\epsilon^2 D_k^2 (m+t_k)^2 (M+K)}{18 m K (4R_{max}+4 \gamma V_{max})^2}\right) \\
&+ \frac{m M}{M+K} \cdot \left( 4 \exp\left( \frac{-c \cdot t_k \epsilon^2 D_k^2 (m+t_k)^2 (M+K)^2}{|S| |A| 2304 m^2 M^2 (4R_{max}^2 + 4 \gamma^2 V_{max}^2)} \right) \right) \\
& + \exp \left( \frac{-\epsilon^2 D_k^2 (m+t_k)^2 (M+K)}{288 m M (4Rmax + 4 \gamma Vmax)^2} \right)
\end{align*}
We upper bound this in the following set of inequalities:
\begin{align*}
Pr(|W_m | \geq \frac{\epsilon}{3}D_k) &\leq 3 \exp\left(\frac{-\epsilon^2 D_k^2 (m+t_k)^2 (M+K)}{288 m \cdot \max(M,K) (4R_{max}+4 \gamma V_{max})^2}\right) \\
&+ 4 \cdot \frac{m M}{M+K}  \cdot \exp\left( \frac{-c \cdot t_k \epsilon^2 D_k^2 (m+t_k)^2 (M+K)^2}{|S| |A| 2304 m^2 M^2 (4R_{max}^2 + 4 \gamma^2 V_{max}^2)} \right)
\end{align*}

We take an upper bound and get:
\begin{align*}
Pr(|W_m | \geq \frac{\epsilon}{3}D_k) &\leq 3 \exp\left(\frac{-\epsilon^2 D_k^2 (m+t_k)^2 (M+K)}{2304 |S| |A| m \cdot \max(M,K) (4R_{max}+4 \gamma V_{max})^2}\right) \\
&+ 4 \cdot \frac{m M}{M+K} \cdot \exp\left( \frac{-c \cdot t_k \epsilon^2 D_k^2 (m+t_k)^2 (M+K)}{|S| |A| 2304 m^2 \cdot \max(M,K) (4R_{max} + 4 \gamma V_{max})^2} \right) 
\end{align*}

Now notice that $m \in [t_{k+1}, t_{k+2}]$, thus: $ 3 \cdot t_k \leq m \leq 12 \cdot t_k$, and so we get:

\begin{align*}
&Pr(|W_m | \geq \frac{\epsilon}{3}D_k) \\
&\leq 3 \exp\left(\frac{-\epsilon^2 D_k^2 (4 \cdot t_k)^2 (M+K)}{2304 |S| |A| 3 \cdot t_k \cdot \max(M,K) (4R_{max}+4 \gamma V_{max})^2}\right) \\
&+ 4 \cdot \frac{3 \cdot t_k M}{M+K} \cdot \exp\left( \frac{-c \cdot t_k \epsilon^2 D_k^2 (4 \cdot t_k)^2 (M+K)}{|S| |A| 2304 (3 \cdot t_k)^2 \cdot \max(M,K) (4R_{max} + 4 \gamma V_{max})^2} \right) \\
&\leq 3 \exp\left(\frac{-0.0023 \cdot \epsilon^2 D_k^2 t_k (M+K)}{|S| |A| \cdot \max(M,K) (4R_{max}+4 \gamma V_{max})^2}\right) \\
&+  12 \cdot \frac{t_k M}{M+K} \cdot \exp\left( \frac{-0.0007 \cdot c \cdot t_k \epsilon^2 D_k^2 (M+K)}{|S| |A| \cdot \max(M,K) (4R_{max} + 4 \gamma V_{max})^2} \right) \\
& \leq \left(12 \cdot \frac{t_k M}{M+K} + 3 \right) \cdot \exp\left( \frac{-0.0007 \cdot c \cdot t_k \epsilon^2 D_k^2 (M+K)}{|S| |A| \cdot \max(M,K) (4R_{max} + 4 \gamma V_{max})^2} \right) \\
& = \exp\left( \frac{-0.0007 \cdot c \cdot t_k \epsilon^2 D_k^2 (M+K)}{|S| |A| \cdot \max(M,K) (4R_{max} + 4 \gamma V_{max})^2} + log\left(12 \cdot \frac{t_k M}{M+K} + 3 \right) \right)
\end{align*}

\subsection{Bound on c}
\label{app:bound_on_c}
Denote by $c' \in (0,1)$ the constant which describes the rate of covering state-action pairs in the MDP, i.e. within $\frac{|S||A|}{c'}$ iterations of samples from the MDP we have visited every state-action pair at least once. This means that at time $t$, the number of samples of a state-action pair $(s,a)$ is at least $\frac{c \cdot t}{|S||A|}$. We also know that at time $t$, only $t \cdot \frac{K}{M+K}$ of the iterations were iterations of samples from the MDP (while the rest were samples from the memory buffer). 
We also know that at time $t$ there were $t \cdot \frac{K}{M+K}$ iterations of samples from the MDP, since we perform $M$ iterations of replay every $K$ iterations. Thus, the number of iterations of samples of $(s,a)$ is $\frac{t \cdot \frac{K}{M+K}}{\frac{|S||A|}{c'}} = \frac{c' \cdot t \cdot \frac{K}{M+K}}{|S||A|}$. 
Note the in assumption~\ref{assum:ConditionsForConvergence}, we have that the covering time $\frac{|S||A|}{c}$ considers the algorithm with experience replay, i.e. where we perform $M$ iterations of replay every $K$ iterations in the MDP, we get that:
\begin{align*}
\frac{c \cdot t}{|S||A|} = \frac{c' \cdot t \cdot \frac{K}{M+K}}{|S||A|}
\end{align*}
and thus, since $c' < 1$, we get that $c \leq \frac{K}{M+K}$.

\end{document}